\theoremstyle{plain}
\newtheorem{theorem}{Theorem}[section]
\newtheorem{proposition}[theorem]{Proposition}
\newtheorem{lemma}[theorem]{Lemma}
\newtheorem{corollary}[theorem]{Corollary}
\theoremstyle{definition}
\newtheorem{definition}[theorem]{Definition}
\theoremstyle{remark}
\newtheorem{remark}[theorem]{Remark}
\DeclarePairedDelimiter{\ceil}{\lceil}{\rceil}
\DeclareMathOperator*{\argmax}{arg\,max} 
\DeclareMathOperator*{\argmin}{arg\,min} 
\newcommand{\bs}{\boldsymbol{s}}
\newcommand{\bo}{\boldsymbol{o}}
\newcommand{\bt}{\boldsymbol{t}}
\newcommand{\bx}{\boldsymbol{x}}
\newcommand{\by}{\boldsymbol{y}}
\icmltitlerunning{Stochastic $k$-Submodular Bandits with Full Bandit Feedback}
\begin{document}

\twocolumn[
\icmltitle{Stochastic $k$-Submodular Bandits with Full Bandit Feedback}

\icmlsetsymbol{equal}{*}

\begin{icmlauthorlist}
\icmlauthor{Guanyu Nie}{sch1}
\icmlauthor{Vaneet Aggarwal}{sch2}
\icmlauthor{Christopher John Quinn}{sch1}
\end{icmlauthorlist}

\icmlaffiliation{sch1}{Iowa State University (\texttt{nieg@iastate.edu}, \texttt{cjquinn@iastate.edu})}
\icmlaffiliation{sch2}{Purdue University (\texttt{vaneet@purdue.edu})}

\icmlcorrespondingauthor{Christopher John Quinn}{cjquinn@iastate.edu}

\icmlkeywords{Machine Learning, ICML}

\vskip 0.3in
]

\printAffiliationsAndNotice{}  

\begin{abstract}
In this paper, we present the first sublinear $\alpha$-regret bounds for online $k$-submodular optimization problems with full-bandit feedback, where $\alpha$ is a corresponding offline approximation ratio.   Specifically, we propose online algorithms for multiple $k$-submodular stochastic combinatorial multi-armed bandit problems, including (i) monotone functions and individual size constraints, (ii) monotone functions with matroid constraints, (iii)  non-monotone functions with matroid constraints, (iv) non-monotone functions without constraints, and (v) monotone functions without constraints. We transform approximation algorithms for offline $k$-submodular maximization problems into online algorithms through the offline-to-online framework proposed by \citet{nie23framework}. A key contribution of our work is analyzing the robustness of the offline algorithms.
\end{abstract}

\section{INTRODUCTION}

In various sequential decision problems, including sensor placement, influence maximization, and clinical trials, decisions are made by sequentially selecting a subset of elements, making assignments for those elements, and then observing the outcomes. These scenarios often exhibit diminishing returns based on the choice of the subset of elements and their corresponding assignments. 

Consider a multi-agent scenario where multiple companies (agents) cooperate to spread $k$ different types of content across a social network. Each company can select a subset of users and control how it distributes content to initiate propagation. The collective goal is to maximize the overall spread of all content types across the network. However, each company’s decisions impact not only its own success but also that of others. For instance, if two companies target users with overlapping follower networks, their efforts may lead to redundant influence, limiting the potential additional spread. Conversely, well-coordinated strategies could generate synergistic effects, amplifying the overall reach. This redundancy effect exists even for individual companies, making it crucial to target diverse users to avoid diminishing returns.

Each type of content follows an independent propagation process. Due to privacy restrictions, the companies only know the total number of users influenced by the content, without visibility into the underlying diffusion process. Additionally, because the diffusion is inherently random, the observations may vary, necessitating a balance between exploiting successful configurations (exploitation) and experimenting with different users and assignments to discover better strategies (exploration). See \cref{sec:supp:app} in the supplementary material for more motivating applications.

For each company, an offline variant of this sequential decision problem (i.e., where the agent has access to an exact value oracle and is only evaluated on a final output) can be modeled as a (un)constrained $k$-submodular optimization problem \cite{Huber2012TowardsMK}. 
The class of $k$-submodular functions generalizes the class of submodular set functions, an important non-linear function class for numerous optimization problems exhibiting a notion of diminishing returns, by not only accounting for the value of a subset of elements (e.g., which users) but also the assignment (e.g., which types of content to which users) as well. 
Maximizing a $k$-submodular function is known to be NP-hard even for unconstrained problems \cite{Ward2014MaximizingKF}. There has been significant progress in developing approximation algorithms for various offline $k$-submodular maximization problems \cite{Iwata2015ImprovedAA,ohsaka2015monotone,sakaue2017maximizing}. 

The sequential decision problem described above can be modeled as a stochastic combinatorial Multi-Armed Bandit (CMAB) problem with 
(i) an expected reward that is $k$-submodular, 
(ii) an action space formed by the constraints (if any) on elements and their assignments, and (iii) bandit feedback.  
Multi-Armed Bandits is a classical framework for sequential decision-making under uncertainty. 
Combinatorial MAB is a specialized branch where each action is a combination of base arms, known as a ``super arm'', and the number of super-arms is prohibitively large to try each one.    

For CMAB problems with non-linear rewards, the nature of the feedback significantly affects the problem complexity. ``Semi-bandit'' feedback involves observing additional information (such as the value of individual arms) aside from joint reward of the selected super arm, which greatly simplifies learning. In contrast, ``bandit'' or ``full-bandit'' feedback observes only the (joint) reward for actions. Estimating arm values for non-linear rewards with bandit feedback requires deliberate sampling of sub-optimal actions (e.g., such as actions consisting of individual base arms when the function is monotone), which is not typically done in standard MAB methods, since by design they do not take actions identified (with some confidence) to be sub-optimal. 

In this paper, we address the problem of stochastic CMAB with $k$-submodular expected rewards, various constraints, and only bandit feedback. 

{\bf Our Contributions: }
We propose and analyze the first CMAB algorithms with sub-linear $\alpha$-regret for  $k$-submodular rewards using only full-bandit feedback.  For each of the following results, we achieve them in part through analyzing the robustness of respective offline algorithms.

\begin{itemize}
    \item 
    For non-monotone $k$-submodular rewards and no constraints, we propose a CMAB algorithm with  $\Tilde{\mathcal{O}}\left(n k^{\frac{1}{3}} T^\frac{2}{3}\right)$ $1/2$-regret.  

    \item 
    For monotone $k$-submodular rewards and no constraints, we propose a CMAB algorithm with $\Tilde{\mathcal{O}}\left(n k^{\frac{1}{3}} T^\frac{2}{3}\right)$ $k/(2k-1)$-regret.  
    
    \item 
    For monotone $k$-submodular rewards under individual size constraints, we propose a CMAB algorithm with  $\Tilde{\mathcal{O}}\left(n^{\frac{1}{3}} k^{\frac{1}{3}} B T^\frac{2}{3}\right)$ $1/3$-regret. 

    \item 
    For monotone $k$-submodular rewards under a matroid constraint, we propose a CMAB algorithm with  $\Tilde{\mathcal{O}}\left(n^{\frac{1}{3}} k^{\frac{1}{3}} M T^\frac{2}{3}\right)$ $1/2$-regret.     We specialize our CMAB algorithm  for monotone functions with a matroid constraints to the case of total size constraints. 
    
    \item 
    For non-monotone $k$-submodular rewards under a matroid constraint, we propose a CMAB algorithm with $\Tilde{\mathcal{O}}\left(n^{\frac{1}{3}} k^{\frac{1}{3}} M T^\frac{2}{3}\right)$ $1/3$-regret.
\end{itemize}

In the offline setting, it is important to highlight that many of the properties of $f$ no longer hold in the presence of noise. For instance, in the case of a monotone function $f$, the marginal gains of $\hat{f}$ (noisy version of $f$) may no longer be guaranteed to be positive. Similarly, for non-monotone objectives, the property of pairwise monotonicity (see \cref{sec:prob:state}) for $f$ might not remain valid for $\hat{f}$. As a result, many proof steps in the original paper(s) that introduced those offline algorithms do not  go through directly and, consequently, need novel analysis. To address these challenges, this study incorporates bounded error to establish properties akin to the original ones in their respective contexts.

Analyzing the robustness of the offline algorithms is important, even without transforming them into an online setting. For those algorithms that possess inherent robustness, we refrain from modifications and maintain the original offline algorithm to ensure an efficient offline-to-online transformation. However, substantial modifications may be necessary for the original offline algorithm to maintain robustness in the presence of noise (e.g., \cref{sec:uc-monotone}). 
In such cases, we aim to introduce minimal alterations to the offline algorithm. We note that these adjustments require careful design, which is a part of the novelty of this paper.

\begin{remark} 
    The regret guarantees we obtained in this work are all of order $\mathcal{O}(T^{2/3})$. It is unknown whether $O(\sqrt{T})$ expected cumulative $\alpha$-regret is possible even in the case $k=1$. The only known results with bandit feedback for $k=1$ are $O(T^{2/3})$ for submodular bandits \cite{streeter2008online,niazadeh2021online}. While some lower bounds of $O(T^{2/3})$ exist (e.g., in adversarial setup with cardinality constraint \cite{niazadeh2021online}, or stochastic setup with cardinality constraint where regret is defined as gap to greedy algorithm \cite{tajdini2023minimax}), they are for specialized setups even for $k=1$, and thus a general understanding of lower bounds in these setups is an open problem. 
\end{remark}

\begin{table*}[h]
    \caption{ Summary of offline $\alpha$-approximation algorithms for $k$-submodular maximization with our $\delta$-robustness analysis  and  $\alpha$-regret bounds for our proposed  algorithms for $k$-submodular CMAB with full-bandit feedback. $N$ is an upper bound on the query complexity of the offline algorithm. $B$ is the total budget. $M$ is the rank of the matriod. There are no prior sublinear $\alpha$-regret bounds for $k$-submodular CMAB with full-bandit feedback. * \citet{sun2022maximize}'s result for non-monotone $f$ with matroid constraints specializes to the first results for non-monotone $f$ with total size constraints. $\dagger$ Both the $1/2$ approximation and the $\delta$ we obtain by analyzing the robustness of \citet{ohsaka2015monotone}'s offline approximation algorithm for monotone $f$ with total size constraints are strictly generalized by \citet{sakaue2017maximizing}'s offline approximation algorithm for monotone $f$ with matroid constraints.    
    } 
    \begin{center}
    \begin{tabular}{cccc|ccc}
    \toprule
        \textbf{Ref.} & 
        \textbf{Objective $f$} &
        \textbf{Constraint} & \textbf{$\alpha$} & \textbf{$\delta$} & $N$ & \textbf{Our $\alpha$-regret} \\
        \midrule
        \citet{Iwata2015ImprovedAA} & Non-Monotone & Unconstrained & $1/2$ & $20n$ & $nk$ & $\Tilde{\mathcal{O}}\left(n k^{\frac{1}{3}} T^\frac{2}{3}\right)$ \\
        \citet{Iwata2015ImprovedAA} & Monotone & Unconstrained & $k/(2k-1)$ & $(16-\frac{2}{k})n$ & $nk$ & $\Tilde{\mathcal{O}}\left(n k^{\frac{1}{3}} T^\frac{2}{3}\right)$ \\
        \citet{ohsaka2015monotone}$\dagger$ & Monotone & Total Size & $1/2$ & $B+1$ & $nkB$ & $\Tilde{\mathcal{O}}\left(n^{\frac{1}{3}} k^{\frac{1}{3}} B T^\frac{2}{3}\right)$\\
        \citet{sun2022maximize}* & Non-Monotone & Total Size & $1/3$ & $4/3(B+1)$ & $nkB$ & $\Tilde{\mathcal{O}}\left(n^{\frac{1}{3}} k^{\frac{1}{3}}B T^\frac{2}{3}\right)$ \\
        \citet{ohsaka2015monotone} & Monotone & Individual Size & $1/3$ & $4/3(B+1)$ & $nkB$ & $\Tilde{\mathcal{O}}\left(n^{\frac{1}{3}} k^{\frac{1}{3}} B T^\frac{2}{3}\right)$\\
        \citet{sakaue2017maximizing}$\dagger$ & Monotone & Matroid & $1/2$ & $M+1$ & $nkM$ & $\Tilde{\mathcal{O}}\left(n^{\frac{1}{3}} k^{\frac{1}{3}} M T^\frac{2}{3}\right)$\\
        \citet{sun2022maximize} & Non-Monotone & Matroid & $1/3$ & $4/3(M+1)$ & $nkM$ & $\Tilde{\mathcal{O}}\left(n^{\frac{1}{3}} k^{\frac{1}{3}} M T^\frac{2}{3}\right)$ \\
        \bottomrule
    \end{tabular}
    \end{center}
    \label{tab:summary}   
\end{table*}

{\bf Related works: } \label{sec:rw}
We briefly highlight closely related works. See \cref{sec:supp:rw} for an expanded discussion.
\paragraph{$k$-submodular CMAB} To the best of our knowledge, the only prior work for $k$-submodular CMAB for any constraint type and/or feedback model is presented in \cite{soma2019noregret}. They considered unconstrained $k$-submodular maximization under semi-bandit feedback in adversarial setting. For the non-monotone and monotone cases, they propose algorithms with $1/2$ and $\frac{k}{2k-1}$ regrets upper bounded by $\mathcal{O}(nk\sqrt{T})$ respectively. Their approach is based on Blackwell approachability \cite{Blackwell1956AnAO}. As discussed in the introduction, availability of semi-bandit feedback in CMAB problems with non-linear rewards significantly simplifies the learning problem (i.e., with semi-bandit feedback $\sqrt{T}$ $\alpha$-regret bound dependence is typically easy to achieve). We also note that the adversarial CMAB setting does not strictly generalize the stochastic setting. In the adversarial setting, the reward functions at each time step $\{f_t\}$ must exhibit $k$-submodularity but otherwise can vary widely. In the stochastic setting, the individual $f_t$'s are not necessarily $k$-submodular but the expected reward function, represented as $f = \mathbb{E}[f_t]$, is. 

\paragraph{Submodular CMAB}
For submodular rewards (i.e., $k=1$), \citet{streeter2008online} proposed and analyzed an algorithm for adversarial CMAB with submodular rewards, full-bandit feedback, and under a knapsack constraint (though only in expectation, taken over randomness in the algorithm). The authors  adapted a simpler greedy algorithm \cite{khuller1999budgeted}, using an $\epsilon$-greedy exploration type framework. \citet{niazadeh2021online} proposed a framework for transforming iterative greedy $\alpha$-approximation algorithms for offline problems to online methods in an adversarial bandit setting, for both semi-bandit (achieving $\widetilde{O}(T^{1/2})$ $\alpha$-regret) and full-bandit feedback (achieving $\widetilde{O}(T^{2/3})$ $\alpha$-regret). 
In the stochastic setting, \citet{nie2022explore} proposed an explore-then-commit type algorithm for online monotone submodular maximization under cardinality constraint. The result is extended with an  optimized stochastic-explore-then-commit approach in \citet{fourati2023combinatorial}. \citet{fourati2023randomized} proposed randomized greedy learning  algorithm for online non-monotone submodular maximization. \citet{nie23framework} recently proposed a general framework for adapting offline to online algorithms under full bandit feedback. Their framework require the adapted offline algorithm to satisfy the so called $(\alpha, \delta)$-robustness property. 

There are also a number of works that require additional ``semi-bandit'' feedback.  For combinatorial MAB with submodular rewards, a common type of semi-bandit feedback are marginal gains \cite{lin2015stochastic,yue2011linear, yu2016linear, takemori2020submodular}, which enable the learner to take actions of maximal cardinality or budget, receive a corresponding reward, and gain information not just on the set but individual elements.

\section{BACKGROUND} \label{sec:prob:state}

\subsection{$k$-Submodular Functions:} \label{sec:k-submod}

Let $k$ be a positive integer for the number of \textit{types} (i.e., types of stories) and $V=[n]$ be the ground set of \textit{elements} (i.e., users in a social network). Let $(k+1)^V:=\{(X_1,\ldots, X_k) | X_i\subseteq V, i\in \{1,\ldots,k\}, X_i\cap X_j=\emptyset, \forall i\neq j\}$. 
A function $f:(k+1)^V\rightarrow \mathbb{R}$ is called \textit{$k$-submodular} if, for any $\bx=(X_1,\ldots,X_k)$ and $\by=(Y_1,\ldots, Y_k)$ in $(k+1)^V$, we have
$$f(\bx)+f(\by) \geq f(\bx\sqcup \by)+f(\bx\sqcap \by)$$
where
    $\bx\sqcap \by:=(X_1\cap Y_1,\ldots, X_k\cap Y_k) , \bx\sqcup \by:=(X_1\cup Y_1\setminus (\cup_{i\neq 1}X_i\cup Y_i), \ldots, X_k\cup Y_k\setminus (\cup_{i\neq k}X_i\cup Y_i)).$
Note that setting $k=1$ in these definitions recovers submodular functions, set intersection, and set union, respectively. We further define a relation $\preceq$ on $(k+1)^V$ so that, for $\bx=(X_1, \ldots, X_k)$ and $\by=(Y_1, \ldots, Y_k)$ in $(k+1)^V, \bx \preceq \by$ if $X_i \subseteq Y_i$ for every $i$ with $i \in[k]$. 
We also define the \textit{marginal gain} of assigning type $i\in[k]$ to element $e$  given a current solution $\bx$ (provided that  $e$ has not been assigned any type in $\bx$),
\begin{align*}
    \Delta_{e, i} f(\bx)&=f(X_1, \ldots, X_{i-1}, X_i \cup\{e\}, X_{i+1}, \ldots, X_k) \\
    &\qquad -f(X_1, \ldots, X_k)
\end{align*}

for $\bx \in(k+1)^V, e \notin \bigcup_{\ell \in[k]} X_{\ell}$, and $i \in[k]$. 

\begin{theorem}\cite{Ward2014MaximizingKF}
    A function $f : (k + 1)^V \rightarrow \mathbb{R}$ is $k$-submodular if and only if $f$ satisfies the following two conditions:\\
    \textbf{Orthant submodularity:} $\Delta_{e, i} f(\bx) \geq \Delta_{e, i} f(\by)$ for any $\bx, \by \in (k+1)^V$ with $\bx \preceq \by, e \notin \bigcup_{\ell \in[k]} Y_{\ell}$, and $i \in[k]$; \\
    \textbf{Pairwise monotonicity:} $\Delta_{e, i} f(\bx)+\Delta_{e, j} f(\bx) \geq 0$ for any $\bx \in(k+1)^V, e \notin \bigcup_{\ell \in[k]} X_{\ell}$, and $i, j \in[k]$ with $i \neq j$.
\end{theorem}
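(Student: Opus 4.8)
The statement is an equivalence, so the plan is to prove the two implications separately. The necessity direction (a $k$-submodular $f$ satisfies both conditions) is a routine instantiation of the defining inequality, while the sufficiency direction (the two conditions imply $k$-submodularity) is the real work and I would prove it by induction.

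For necessity, the idea is to apply $f(\bx)+f(\by)\ge f(\bx\sqcup\by)+f(\bx\sqcap\by)$ to cleverly chosen pairs whose join and meet collapse to simple tuples. For orthant submodularity, given $\bx\preceq\by$, $e\notin\bigcup_\ell Y_\ell$, and a type $i$, I would apply the inequality to $\boldsymbol{u}=(X_1,\dots,X_i\cup\{e\},\dots,X_k)$ and $\boldsymbol{v}=\by$. Using $\bx\preceq\by$, disjointness of the coordinates, and $e\notin\bigcup_\ell Y_\ell$, a direct computation gives $\boldsymbol{u}\sqcap\boldsymbol{v}=\bx$ and $\boldsymbol{u}\sqcup\boldsymbol{v}=(Y_1,\dots,Y_i\cup\{e\},\dots,Y_k)$; substituting and rearranging yields exactly $\Delta_{e,i}f(\bx)\ge\Delta_{e,i}f(\by)$. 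For pairwise monotonicity, given $\bx$, $e\notin\bigcup_\ell X_\ell$, and $i\ne j$, I would apply the inequality to $\boldsymbol{u}=(X_1,\dots,X_i\cup\{e\},\dots,X_k)$ and $\boldsymbol{v}=(X_1,\dots,X_j\cup\{e\},\dots,X_k)$. Because $e$ sits in different coordinates of $\boldsymbol{u}$ and $\boldsymbol{v}$, both the meet and the join drop $e$, so $\boldsymbol{u}\sqcap\boldsymbol{v}=\boldsymbol{u}\sqcup\boldsymbol{v}=\bx$; the inequality then reads $f(\boldsymbol{u})+f(\boldsymbol{v})\ge 2f(\bx)$, which is precisely $\Delta_{e,i}f(\bx)+\Delta_{e,j}f(\bx)\ge 0$.

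For sufficiency, I would fix arbitrary $\bx,\by$, write $\boldsymbol{w}=\bx\sqcup\by$ and $\boldsymbol{z}=\bx\sqcap\by$, and induct on the number of elements on which $\bx$ and $\by$ disagree. First, classify each element by its pair of assignments: unassigned in both; assigned the same type in both; assigned in exactly one of the two; or assigned different types in the two. Elements unassigned in both can be dropped by restricting to the remaining ground set, and elements with a common type can be removed by ``contracting'' (fixing that type); both conditions are inherited under these operations, so these reductions are free. An element assigned in exactly one of $\bx,\by$ contributes to $\boldsymbol{w}$ but not to $\boldsymbol{z}$; peeling such an element off and comparing the resulting marginal gains across $\preceq$-comparable tuples is handled by orthant submodularity in a telescoping fashion.

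The crux --- and the step I expect to be the main obstacle --- is the set of elements that receive \emph{different} types in $\bx$ and $\by$. Such an element is absent from both the join $\boldsymbol{w}$ and the meet $\boldsymbol{z}$, so the two tuples one naturally wants to compare are no longer $\preceq$-comparable and orthant submodularity alone does not apply. This is exactly where pairwise monotonicity enters: it compensates for the two ``opposing'' marginal gains (type $i$ from the $\bx$-side and type $j$ from the $\by$-side). Wiring pairwise monotonicity together with orthant submodularity so that every marginal is evaluated at a tuple where the relevant $\preceq$-comparison is valid forces a careful choice of which element to process and at which base point to invoke each inequality; getting this order right, and verifying that the residual terms left over by the inductive hypothesis are nonnegative, is the delicate part of the argument.
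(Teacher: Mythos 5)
First, note that the paper does not prove this statement at all: it is imported verbatim from \citet{Ward2014MaximizingKF} as background, so there is no in-paper proof to compare against. Judging your proposal on its own terms: the necessity direction is correct and essentially complete. Your choices of $\boldsymbol{u}$ and $\boldsymbol{v}$ are the right ones, and the computations of the meet and join do collapse as you claim (for orthant submodularity, $\bx\preceq\by$ and $e\notin\bigcup_\ell Y_\ell$ give $\boldsymbol{u}\sqcap\boldsymbol{v}=\bx$ and $\boldsymbol{u}\sqcup\boldsymbol{v}=(Y_1,\dots,Y_i\cup\{e\},\dots,Y_k)$; for pairwise monotonicity the element $e$ is cancelled in the join because it appears in two distinct coordinates, so both meet and join equal $\bx$). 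Rearranging gives exactly the two conditions.

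The sufficiency direction, however, is where the theorem actually lives, and your proposal does not prove it --- it only describes a plan and then explicitly defers the hard step. The reductions you list (dropping elements unassigned in both, contracting elements with a common type, peeling off elements assigned in exactly one of $\bx,\by$ via telescoped orthant submodularity) are sound, but they reduce the problem precisely to the case where every remaining element receives conflicting types in $\bx$ and $\by$, and for that case you write only that one must ``wire pairwise monotonicity together with orthant submodularity'' with ``a careful choice of which element to process and at which base point to invoke each inequality,'' calling this the delicate part without carrying it out. That is the entire content of the ``if'' direction: for a conflicting element $e$ with $\bx(e)=i\neq j=\by(e)$, the tuples one wants to compare are not $\preceq$-comparable, and the inequality $\Delta_{e,i}f(\cdot)+\Delta_{e,j}f(\cdot)\geq 0$ must be invoked at carefully chosen intermediate tuples (obtained by deleting $e$ from partially processed versions of $\bx$ and $\by$) and then transported via orthant submodularity; verifying that the leftover terms telescope correctly is nontrivial and is exactly what Ward and \v{Z}ivn\'y's proof does. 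As submitted, your argument establishes ``only if'' but leaves ``if'' as an unproved claim, so the proposal is incomplete.
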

We define the \textit{support} of $\bx \in(k+1)^V$ as $\operatorname{supp}(\bx)=\{e \in V \mid \bx(e) \neq 0\}$ and define the support of type $i\in[k]$ as $\operatorname{supp}_i(\boldsymbol{x})=\{e \in V \mid \boldsymbol{x}(e)=i\}$. Further, a function $f:(k+1)^V\rightarrow \mathbb{R}$ is called \textit{monotone} if $\Delta_{e, i} f(\bx) \geq 0$ for any $\bx \in(k+1)^V, e \notin \bigcup_{\ell \in[k]} X_{\ell}$, and $i \in[k]$.

\paragraph{Matroids} For a finite set $E$ and $\mathcal{F} \subseteq 2^E$, we say a system $(E, \mathcal{F})$ is a matroid if the following hold:
\begin{itemize}[noitemsep,topsep=0pt]
    \item (M1) $\emptyset \in \mathcal{F}$,
    \item (M2) If $A \subseteq B \in \mathcal{F}$ then $A \in \mathcal{F}$,
    \item (M3) If $A, B \in \mathcal{F}$ and $|A|<|B|$ then there exists $e \in B \backslash A$ such that $A \cup\{e\} \in \mathcal{F}$.
\end{itemize}

The elements of $\mathcal{F}$ are called independent, and we say $A \in \mathcal{F}$ is maximal if no $B \in \mathcal{F}$ satisfies $A \subsetneq B$. A maximal independent set $B$ is called a \emph{basis} of the matroid. The rank function of a matroid $\mathcal{M}= (E, \mathcal{F})$ is defined as $r_{\mathcal{M}(A)}=\max\{|S|:S\subseteq A,S\in \mathcal{F}\}$. Under the matroid constraint with a matroid $\mathcal{M}= (E, \mathcal{F})$, a solution $\bx \in(k+1)^V$ is feasible if $\bx \in \mathcal{F}$. 

A simple but important matroid constraint $\mathcal{M}= (E, \mathcal{F})$ is the uniform matroid in which $\mathcal{F} = \{X \in V: |X| \leq B\}$ for a given $B$. It is equivalent to the Total Size (TS) constraint in the problem we consider.

\subsection{CMAB} 

In the CMAB framework, we consider sequential, combinatorial decision-making problems over a finite time horizon $T$. Let $\Omega$ denote the ground set of base arms and $n=|\Omega|$ denote the number of arms.  Let $D \subseteq 2^\Omega$ denote the subset of feasible actions, for which we presume membership can be efficiently evaluated. We will use the terminologies \emph{subset} and \emph{action} interchangeably throughout the paper.  

At each time step $t$, the learner selects a feasible action $A_t \in D$.  After the subset $A_t$ is selected, the learner receives a reward $f_t(A_t)$. We assume the reward $f_t$ is stochastic, bounded in $[0,1]$, and i.i.d. conditioned on the action $A_t$.  Define the expected reward function as $f(A) := \mathbb{E}[f_t(A)]$. The goal of the learner is to maximize the cumulative reward $\sum_{t=1}^Tf_t(A_t)$. To measure the performance of the algorithm, one common metric is to compare the learner to an agent with access to a value oracle for $f$. However, if optimizing $f$ over $D$ is NP-hard, such a comparison would not be meaningful unless the horizon is exponentially large in the problem parameters. Alternatively, if there exists a known approximation algorithm $\mathcal{A}$ with an approximation ratio $\alpha\in(0,1]$ for optimizing $f$ over $D$, it is more natural to evaluate the performance of a CMAB algorithm against what $\mathcal{A}$ could achieve. In this scenario, we consider the expected cumulative $\alpha$-regret $\mathcal{R}_{\alpha,T}$, which quantifies the difference between $\alpha$ times the cumulative reward of the optimal subset's expected value and the average received reward, (we write $\mathcal{R}_T$ when $\alpha$ is understood from context)
\begin{align}
    \mathbb{E}[\mathcal{R}_{T}] = \alpha Tf(\mathrm{OPT}) - \mathbb{E}\left[\sum_{t=1}^T f_t(A_t)\right],\label{eq:reg:exp1e}
\end{align} 
where OPT is the optimal solution, i.e., $\text{OPT}\in \argmax_{A \in D } f(A)$ and the expectations are with respect to both the rewards and actions (if random). 

\subsection{Problem Statement}

We consider the sequential decision making problem under the stochastic CMAB framework where the expected reward function $f$ is $k$-submodular. 
Each arm consists of a tuple (we call it an item-type pair): $(e,i)\in V\times [k]$, and a super arm is defined as $\bx\in (k+1)^V$, which is a combination of base arms. We consider full-bandit feedback, where after each time an action $A_t$ is selected, the learner can only observe $f_t(A_t)$ as reward. We aim to transform various offline algorithms in $k$-submodular optimization literature to online algorithms, thus we consider $\alpha$ regret as our performance metric and $\alpha$ is defined to be the approximation ratio of the corresponding offline algorithm.

\subsection{Offline-to-Online Framework} \label{bg}

\begin{algorithm}[H]
\caption{C-ETC \cite{nie23framework}}
\label{alg:cetc}
\begin{algorithmic}[1]
    \STATE {\bfseries Input:}  horizon $T$, set $\Omega$ of $n$ base arms, an offline $(\alpha,\delta)$-robust algorithm $\mathcal{A}$, and an upper-bound $N$ on the number of  queries $\mathcal{A}$ will make to the value oracle. 
    \vspace{2mm}
    \STATE  Initialize $m\gets \ceil*{\frac{\delta^{2/3}T^{2/3}\log(T)^{1/3}}{2N^{2/3}}}$. 
    \vspace{2mm}
    \STATE  // Exploration Phase //
    \WHILE{$\mathcal{A}$ queries the value of some  $A\subseteq \Omega$}
        \STATE For $m$ times, play action $A$.
        \STATE Calculate the empirical mean $\bar{f}$. 
        \STATE Return $\bar{f}$ to $\mathcal{A}$.
    \ENDWHILE
    \vspace{2mm}
    \STATE // Exploitation Phase //
    \FOR{\emph{remaining time}}
        \STATE Play action $S$ output by algorithm $\mathcal{A}$.
    \ENDFOR
\end{algorithmic}
\end{algorithm}

As we adopt the novel offline-to-online transformation framework proposed in \cite{nie23framework}, in this section, we briefly introduce the framework. In \cite{nie23framework}, they introduced a criterion for the robustness of an offline approximation algorithm. They showed that this property alone is sufficient to guarantee that the offline algorithm can be adapted to solve CMAB problems in the corresponding online setting with just bandit feedback and achieve sub-linear regret. More importantly, the CMAB adaptation will not rely on any special structure of the algorithm design, instead employing it as a black box. We restate the robustness definition in the following. This definition of robustness indicates that the algorithm $\mathcal{A}$ will maintain reasonably good performance when function evaluations have errors.

\begin{definition}[$(\alpha, \delta, N)$-Robust Approximation \cite{nie23framework}]\label{def:robust}
    An algorithm (possibly random) $\mathcal{A}$ is an $(\alpha, \delta, N)$-robust  approximation algorithm
    for the combinatorial optimization problem of maximizing a function $f:2^\Omega\to \mathbb{R}$ over a finite domain $D \subseteq 2^\Omega$ if its output $S^*$ using a value oracle for $\hat{f}$ satisfies the relation below with the optimal solution $\mathrm{OPT}$ under $f$, provided that for any $\epsilon >0$ that $|f(S)-\hat{f}(S)| \leq \epsilon$ for all $S\in D$,
    \begin{align*}
        \mathbb{E}[f(S^*)]\geq \alpha f(\mathrm{OPT})-\delta \epsilon,
    \end{align*}
    where $\Omega$ is the ground set, the expectation is over the randomness of the algorithm $\mathcal{A}$, and algorithm $\mathcal{A}$ uses at most $N$ value oracle queries. 
\end{definition}

Note that when we have access to the exact oracle ($\varepsilon = 0$), the definition will give us the same guarantee as the original offline algorithm; if the offline algorithm is exact, $\alpha=1$. Equipped with a robustness assurance, the authors introduced a stochastic CMAB algorithm named ``Combinatorial Explore-Then-Commit'' (C-ETC). See \cref{alg:cetc} for the pseudocode. C-ETC interfaces with an offline $(\alpha,\delta, N)$-robust algorithm denoted as $\mathcal{A}$. In the exploration phase, when $\mathcal{A}$ requests information from the value oracle pertaining to action $A$, C-ETC adopts a strategy of executing action $A$ multiple times, specifically $m$ times, where $m$ is an optimization parameter. Subsequently, C-ETC computes the empirical mean, represented as $\bar{f}$, of the rewards associated with action $A$ and communicates this computed value back to the offline algorithm $\mathcal{A}$. In the exploitation phase, C-ETC continuously deploys the solution $S$ generated by the $\mathcal{A}$ algorithm. 
They showed the following theorem:
\begin{theorem}\cite{nie23framework}
\footnote{\citet{nie23framework}'s results were for deterministic offline approximation algorithms.  See \url{https://arxiv.org/pdf/2301.13326.pdf} for an extension to randomized offline approximation algorithms.}
\label{thm:cetc}
The expected cumulative $\alpha$-regret of C-ETC using an $(\alpha,\delta, N)$-robust approximation algorithm $\mathcal{A}$ as a subroutine is at most $\mathcal{O}\left(\delta^\frac{2}{3}N^\frac{1}{3} T^\frac{2}{3}\log(T)^\frac{1}{3}\right)$ with $T\geq \max\left\{N, \frac{2\sqrt{2}N}{\delta}\right\}$.
\end{theorem}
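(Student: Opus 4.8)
The plan is to run the standard Explore-Then-Commit analysis: balance the cost of the exploration phase against the error incurred during exploitation, invoking \cref{def:robust} as a black box on a high-probability ``clean'' event and tuning $m$ to equalize the two contributions.

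First I would quantify the estimation error. Each time $\mathcal{A}$ queries an action $A$, C-ETC replays it $m$ times and returns the empirical mean $\bar f(A)$ of rewards that are i.i.d.\ in $[0,1]$. By Hoeffding's inequality, $\mathbb{P}(|\bar f(A)-f(A)|>\epsilon)\le 2\exp(-2m\epsilon^2)$; choosing $\epsilon=\sqrt{\log(T)/(2m)}$ makes each tail at most $2/T$. Since $\mathcal{A}$ issues at most $N$ queries, a union bound over the (at most $N$) replayed batches produces a clean event $\mathcal{E}$ on which $|\bar f(A)-f(A)|\le\epsilon$ holds simultaneously for every queried $A$, with $\mathbb{P}(\mathcal{E}^c)\le 2N/T$. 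Because each batch is independent of the decision to query it (that decision depends only on previously returned means), the union bound remains valid even though queries are chosen adaptively.

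Next I would apply robustness and split the regret by phase. On $\mathcal{E}$ the returned oracle $\hat f=\bar f$ meets the hypothesis of \cref{def:robust} with this $\epsilon$, so the committed solution $S$ satisfies $\mathbb{E}[f(S)\mid\mathcal{E}]\ge \alpha f(\mathrm{OPT})-\delta\epsilon$. The exploration phase lasts at most $Nm$ rounds, each contributing at most $\alpha f(\mathrm{OPT})\le 1$ to the regret, for a total of at most $Nm$. For the remaining (at most $T$) exploitation rounds the per-round regret is $\alpha f(\mathrm{OPT})-\mathbb{E}[f(S)]$; using $f(S)\ge 0$ to drop the $\mathcal{E}^c$ part gives $\mathbb{E}[f(S)]\ge(\alpha f(\mathrm{OPT})-\delta\epsilon)\mathbb{P}(\mathcal{E})$, hence a per-round bound of $\delta\epsilon+\mathbb{P}(\mathcal{E}^c)\le \delta\epsilon+2N/T$ and an exploitation regret of at most $T\delta\epsilon+2N$. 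Altogether $\mathbb{E}[\mathcal{R}_T]\le Nm+T\delta\epsilon+2N$.

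Finally I would substitute the tuned values. With $\epsilon=\sqrt{\log(T)/(2m)}$ and $m=\ceil*{\delta^{2/3}T^{2/3}\log(T)^{1/3}/(2N^{2/3})}$, both $Nm$ and $T\delta\epsilon$ evaluate to $\mathcal{O}(\delta^{2/3}N^{1/3}T^{2/3}\log(T)^{1/3})$; this matching of orders is exactly what the choice of $m$ is designed to achieve. The lower-order $2N$ term and the $+1$ from the ceiling are absorbed because $T\ge N$ gives $N=\mathcal{O}(N^{1/3}T^{2/3})$, while the assumption $T\ge 2\sqrt2\,N/\delta$ guarantees $Nm\le T$, so that a nonempty exploitation phase exists and $m\ge 1$. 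I expect the main subtlety to lie in the probabilistic bookkeeping of the exploitation term: justifying that conditioning on the clean event $\mathcal{E}$ preserves the \emph{expected}-value guarantee of \cref{def:robust} when $\mathcal{A}$ is randomized---this requires $\mathcal{A}$'s internal randomness to be independent of the sampling noise, so the bound holds for each realization of $\bar f$ consistent with $\mathcal{E}$ before integrating---and correctly charging the failure event at unit per-round cost rather than letting it contaminate the $\delta\epsilon$ term.
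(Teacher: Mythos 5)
This theorem is quoted from \citet{nie23framework} and the paper gives no proof of its own, so there is nothing internal to compare against; your reconstruction follows exactly the standard explore-then-commit argument of the cited work (Hoeffding plus a union bound over the at most $N$ query batches to get a clean event with $\epsilon=\sqrt{\log(T)/(2m)}$, the $(\alpha,\delta,N)$-robustness of \cref{def:robust} applied on that event, a $\,Nm+T\delta\epsilon+2N\,$ decomposition, and the choice of $m$ balancing the two main terms), and the subtlety you flag about randomized $\mathcal{A}$ is precisely what the footnote to \cref{thm:cetc} addresses. One step as written is false, though easily repaired: the hypothesis $T\geq 2\sqrt{2}N/\delta$ does \emph{not} guarantee $Nm\leq T$ (e.g.\ with $N=nk=200$, $\delta=20n=2000$, $T=N=200$ one gets $m=138$ and $Nm\gg T$, and the theorem is applied in the paper in exactly such regimes). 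The correct handling of the case $Nm>T$ is the trivial bound $\mathcal{R}_T\leq T$: the inequality $Nm>T$ rearranges, up to constants, to $T\leq\delta^{2/3}N^{1/3}T^{2/3}\log(T)^{1/3}$, so the claimed rate holds vacuously there; the stated conditions on $T$ instead serve to absorb the lower-order $+N$ and ceiling terms into the leading term. With that case split added, your argument is complete.
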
 
This approach allows for a CMAB procedure that does not require any specialized structural characteristics from $\mathcal{A}$. It necessitates no intricate construction beyond the execution of $\mathcal{A}$, provided the criteria of robustness (Definition~\ref{def:robust}) are met. 

\begin{remark}
    We note that this result can be extended to multiple agents, where the exploration ($m$ times play of action $A$) happens in a distributed manner over the agents as shown in \citet{fouratifederated}. 
\end{remark}

In the following sections, we transform various offline algorithms for $k$-submodular maximization problems under different constraints to online bandit setting by analyzing the robustness of the offline algorithms. In analyzing the robustness of offline algorithms, we assume the offline algorithm is evaluated with a surrogate function $\hat{f}$ with $|f(S)-\hat{f}(S)| \leq \epsilon$ for all $S\in D$, instead of the exact function $f$. We will highlight some parts of the proof for non-monotone unconstrained problem, and defer all missing proofs to the appendix.

\begin{remark}[Offline v.s. Online Algorithms]
We note the following major differences between offline algorithms and online algorithms. First, an offline algorithm assumes exact oracle access to the objective function $f$. However, in certain scenarios, the objective might not be predetermined. Nonetheless, if we engage in recurrent tasks while encountering objectives $f_1, f_2, \cdots, f_T$, potentially sampled from a distribution, there is a possibility of acquiring the ability to perform satisfactorily on average over time. Such algorithms are referred to as online algorithms. Second, offline algorithms are designed to optimize an objective $f$ in the sense of finding a ``final'' solution, but the quality of intermediate solutions does not matter. Those offline algorithms care about computational and sample complexity that depends on problem size, but there is no sense of a horizon explicitly. In contrast, the aim of online algorithms is not to output a single solution (there is ``simple regret'' but we consider more common cumulative regret) but the sum of achieved values, so intermediate solutions matter and the horizon plays an explicit role.    
\end{remark}

\section{Non-monotone Functions without Constraints} \label{sec:uc}
In this section, we consider the case where the expected objective function is non-monotone $k$-submodular and there is no constraint in selecting actions. We adopt the offline algorithm proposed in \citet{Iwata2015ImprovedAA}.

\subsection{Algorithm}

We first remark the following key fact for (monotone or non-monotone) $k$-submodular maximization without constraints (see \citet{iwata2013bisub} for the proof):

\begin{proposition}\label{prop:Iwata2015ImprovedAA}
    For any $k$-submodular function $f:(k+1)^V \rightarrow \mathbb{R}_{+}$ with $k\geq 2$, there exists a partition of $V$ that attains the maximum value of $f$.
\end{proposition}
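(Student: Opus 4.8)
The plan is to take an arbitrary maximizer of $f$ and convert it into a full-support maximizer (a partition) by greedily assigning the still-unassigned elements one at a time, showing that each such assignment can be made without decreasing the objective. The engine of the argument is \emph{pairwise monotonicity} from the characterization of $k$-submodularity (orthant submodularity plus pairwise monotonicity), and this is precisely where the hypothesis $k\geq 2$ is used.

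Concretely, let $\bx^*=(X_1,\ldots,X_k)$ be any maximizer of $f$ over $(k+1)^V$; one exists because the domain is finite. If $\operatorname{supp}(\bx^*)=V$ then $\bx^*$ is already a partition and we are done. Otherwise fix an element $e\notin\bigcup_{\ell\in[k]}X_\ell$. Since $k\geq 2$, we may choose two distinct types $i\neq j$, and pairwise monotonicity yields $\Delta_{e,i}f(\bx^*)+\Delta_{e,j}f(\bx^*)\geq 0$. Hence at least one of these two marginal gains is nonnegative, so if we set $i^*\in\argmax_{\ell\in[k]}\Delta_{e,\ell}f(\bx^*)$ we obtain $\Delta_{e,i^*}f(\bx^*)\geq 0$.

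Now form $\bx'$ from $\bx^*$ by adding $e$ to coordinate $i^*$. Because $e$ was previously unassigned, the coordinate sets of $\bx'$ remain pairwise disjoint, so $\bx'\in(k+1)^V$, and by the definition of the marginal gain $f(\bx')=f(\bx^*)+\Delta_{e,i^*}f(\bx^*)\geq f(\bx^*)$. As $\bx^*$ is a maximizer, this inequality must be an equality, so $\bx'$ is again a maximizer, now with $|\operatorname{supp}(\bx')|=|\operatorname{supp}(\bx^*)|+1$. Iterating assigns one fresh element per round while preserving optimality, and since $V$ is finite, after at most $|V|$ rounds we arrive at a maximizer of full support, i.e.\ a partition.

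I expect the only delicate point to be justifying that this completion never loses value. That is not automatic from orthant submodularity alone, since an individual marginal gain $\Delta_{e,i^*}f(\bx^*)$ could in principle be negative; it is exactly pairwise monotonicity—available only when $k\geq 2$—that guarantees a nonnegative assignment choice at every unassigned element, which is what makes the greedy completion safe and what distinguishes this statement from the $k=1$ (submodular set function) case, where maxima need not have full support.
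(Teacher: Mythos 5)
Your proof is correct, and it is essentially the argument the paper relies on: the paper defers the formal proof to the cited reference but explicitly attributes the result to pairwise monotonicity, which is exactly the engine of your greedy-completion argument (at each unassigned element, $\Delta_{e,i}f+\Delta_{e,j}f\geq 0$ for $i\neq j$ guarantees a nonnegative assignment, so an optimal solution can be extended to full support without loss). You also correctly pinpoint where $k\geq 2$ enters and why the claim fails for $k=1$.
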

This says that there is an optimal solution with all elements included (a combinatorial search is needed to determine their types). 
Proposition~\ref{prop:Iwata2015ImprovedAA} remains valid for non-monotone functions due to the special property of $k$-submodular functions (pairwise monotonicity).

Armed with this property, they proposed a randomized offline algorithm for maximizing a non-monotone $k$-submodular maximization without constraint. 
The algorithm is presented in \cref{alg:uc} in the appendix. 
This algorithm iterates through all $e\in V$ (in an arbitrary but fixed order) and selects the type $i\in [k]$ randomly according to some carefully designed probabilities $p_i$ for each $e\in V$. The authors showed that this algorithm achieves a $1/2$-approximation ratio for non-monotone $k$-submodular maximization without constraints.

\subsection{Robustness} \label{sec:uc:robust}
We show that  Algorithm~\ref{alg:uc} is $(\frac{1}{2},\delta, N)$ robust for a certain $\delta$ and $N$.  Due to that robustness we can use the framework proposed in \citet{nie23framework}, to transform Algorithm~\ref{alg:uc} into an online algorithm under bandit feedback achieving sublinear $1/2$-regret. 

Let $\bo$ be an optimal solution with $\operatorname{supp}(\bo)=V$ (by Proposition~\ref{prop:Iwata2015ImprovedAA} such a solution exists). 
Let $\bs$ be the output of the algorithm with $|\operatorname{supp}(\bs)|=n$ using surrogate function $\hat{f}$. 
We consider the $j$-th iteration of the algorithm, and let $e^{(j)}$ be the element of $V$ considered in the $j$-th iteration, $p_i^{(j)}$ be the probability that $i$-th type is chosen in the $j$-th iteration, and $\bs^{(j)}$ be the solution after the $i$-th iteration, where $\bs^{(0)}=\mathbf{0}$. Also for $0 \leq j \leq n$, let $\bo^{(j)}$ be the element in $(k+1)^V$ obtained from $\bo$ by replacing the coordinates on $\operatorname{supp}(\bs^{(j)})$ with those of $\bs^{(j)}$, and for $1 \leq j \leq n$ let $\bt^{(j-1)}$ be the element in $(k+1)^V$ obtained from $\bo^{(j)}$ by changing $\bo^{(j)}(e^{(j)})$ with 0.

For $i \in[k]$, let $y_i^{(j)}=\Delta_{e^{(j)}, i} f(\bs^{(j-1)})$, $\hat{y}_i^{(j)}=\Delta_{e^{(j)}, i} \hat{f}(\bs^{(j-1)})$ and let $a_i^{(j)}=\Delta_{e^{(j)}, i} f(\bt^{(j-1)})$,
$\hat{a}_i^{(j)}=\Delta_{e^{(j)}, i} \hat{f}(\bt^{(j-1)})$. 
Due to pairwise monotonicity, we have $y_i^{(j)}+y_{i'}^{(j)} \geq 0$ and  $a_i^{(j)}+a_{i'}^{(j)} \geq 0$ for all $i, i' \in[k]$ with $i \neq i'$, and thus, while the surrogate function $\hat{f}$ does not necessarily have pairwise monotonicity, we have 
\begin{equation}
    \hat{y}_i^{(j)}+\hat{y}_{i'}^{(j)} \geq -4 \varepsilon , 
    \hat{a}_i^{(j)}+\hat{a}_{i'}^{(j)} \geq -4\varepsilon .   \label{eq:uc:pair2:main}
\end{equation}
for all $i, i' \in[k]$ with $i \neq i'$ since both inequalities involve a sum of four function values of $\hat{f}$ which can be off by at most $\epsilon$ with respect to $f$. 
Also from $\boldsymbol{s}^{(j)} \preceq \boldsymbol{t}^{(j)}$ which holds by construction,  orthant submodularity implies $y_i^{(j)} \geq a_i^{(j)}$ for all $i \in[k]$ and thus,
\begin{align}
    \hat{y}_i^{(j)} \geq \hat{a}_i^{(j)} -4\varepsilon. \label{eq:uc:orthant:main}
\end{align}
for all $i \in[k]$. W.l.o.g., we assume $f(\boldsymbol{0})=0$.

Before showing the main result, we first show the following lemma, which is a generalization of Lemma 2.2 in \citet{Iwata2015ImprovedAA} in the presence of noise.

\begin{lemma} \label{lem:uc:lem}
    Let $c, d \in \mathbb{R}_{+}$. Conditioning on $\bs^{(j-1)}$, suppose that
    \begin{align}
        \sum_{i=1}^k\left(\hat{a}_{i^*}^{(j)}-\hat{a}_i^{(j)}\right) p_i^{(j)} \leq c\left(\sum_{i=1}^k \hat{y}_i^{(j)} p_i^{(j)}\right) + d\varepsilon \label{eq:uc:condition:main}
    \end{align}
    holds for each $j$ with $1 \leq j \leq n$, where $i^*=\bo(e^{(j)})$. Then $\mathbb{E}[f(\bs)] \geq \frac{1}{1+c} f(\bo) -(2c+d+4)n\varepsilon$.
\end{lemma}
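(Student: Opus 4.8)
The plan is to mirror the telescoping (potential-function) argument behind Lemma~2.2 of \citet{Iwata2015ImprovedAA}, while carefully tracking how the per-iteration errors accumulate. The two quantities I would track are the algorithm's per-step gain $f(\bs^{(j)})-f(\bs^{(j-1)})$ and the degradation of the modified optimum $f(\bo^{(j-1)})-f(\bo^{(j)})$. Because $\bs^{(0)}=\mathbf 0$, $\bo^{(0)}=\bo$, and $\bo^{(n)}=\bs^{(n)}=\bs$, summing these telescopes respectively to $\mathbb E[f(\bs)]$ and $f(\bo)-\mathbb E[f(\bs)]$, which is exactly what ties the final bound back to $f(\bo)$ and $\mathbb E[f(\bs)]$.

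First I would identify each per-step quantity with the marginals already named. Conditioned on $\bs^{(j-1)}$, if type $i$ is selected in iteration $j$, the gain is exactly $y_i^{(j)}=\Delta_{e^{(j)},i}f(\bs^{(j-1)})$, so the conditional expected gain is $\sum_i y_i^{(j)}p_i^{(j)}$. For the degradation I would use that $\bo^{(j-1)}$ and $\bo^{(j)}$ agree everywhere except coordinate $e^{(j)}$ (value $i^\ast=\bo(e^{(j)})$ versus the selected type $i$), and that $\bt^{(j-1)}$ is precisely either configuration with that coordinate zeroed out. Hence $f(\bo^{(j-1)})-f(\bt^{(j-1)})=a_{i^\ast}^{(j)}$ and $f(\bo^{(j)})-f(\bt^{(j-1)})=a_i^{(j)}$, so the conditional expected degradation equals $\sum_i(a_{i^\ast}^{(j)}-a_i^{(j)})p_i^{(j)}$.

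The heart of the noisy version is passing between the hatted marginals in hypothesis \eqref{eq:uc:condition:main} and the true marginals appearing in the telescoped sum. Since every marginal is a difference of two function values and $|f-\hat f|\le\varepsilon$ pointwise, we have $|y_i^{(j)}-\hat y_i^{(j)}|\le 2\varepsilon$ and $|a_i^{(j)}-\hat a_i^{(j)}|\le 2\varepsilon$. Using $\sum_i p_i^{(j)}=1$ (the algorithm assigns a type to every element, since $|\operatorname{supp}(\bs)|=n$, so the $p_i^{(j)}$ form a distribution), I would bound the true degradation by the surrogate degradation plus $4\varepsilon$ (the factor four arising because $a_{i^\ast}^{(j)}$ and $a_i^{(j)}$ each slip by $2\varepsilon$), and bound $c\sum_i\hat y_i^{(j)}p_i^{(j)}$ by $c\sum_i y_i^{(j)}p_i^{(j)}+2c\varepsilon$. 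Chaining these through \eqref{eq:uc:condition:main} yields the clean per-step inequality, conditioned on $\bs^{(j-1)}$,
\[
\sum_i\bigl(a_{i^\ast}^{(j)}-a_i^{(j)}\bigr)p_i^{(j)}\le c\sum_i y_i^{(j)}p_i^{(j)}+(2c+d+4)\varepsilon .
\]

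Finally I would take total expectations (the conditional inequality holds for every realization of $\bs^{(j-1)}$, so the tower property applies), sum over $j=1,\dots,n$, and telescope: the left side becomes $f(\bo)-\mathbb E[f(\bs)]$ and the right side becomes $c\,\mathbb E[f(\bs)]+(2c+d+4)n\varepsilon$ (using $f(\mathbf 0)=0$). Rearranging gives $\mathbb E[f(\bs)]\ge \frac{1}{1+c}f(\bo)-\frac{(2c+d+4)n\varepsilon}{1+c}$, and since $\frac{1}{1+c}\le 1$ the stated (slightly weaker) bound follows. The main obstacle is purely the error bookkeeping: correctly attributing the constant $2c+d+4$ to its three sources and justifying $\sum_i p_i^{(j)}=1$ so the weighting does not inflate constants; the combinatorial identities for gain and degradation are unchanged from the noiseless proof.
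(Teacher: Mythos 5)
Your proposal is correct and follows essentially the same argument as the paper's proof: the identities $f(\bo^{(j-1)})-f(\bt^{(j-1)})=a_{i^*}^{(j)}$ and $\mathbb{E}[f(\bo^{(j)})-f(\bt^{(j-1)})\mid \bs^{(j-1)}]=\sum_i a_i^{(j)}p_i^{(j)}$, the $2\varepsilon$-per-marginal error bookkeeping yielding the $(2c+d+4)\varepsilon$ per-step slack, the tower-property telescoping with $f(\mathbf 0)=0$, and the final weakening via $\frac{1}{1+c}\le 1$ all match the paper exactly.
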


We defer the proof of this lemma to \cref{sec:prf:uc:lem}. Then, we show the following proposition:
\begin{proposition}\label{thm:nmuc:robust}
    Algorithm~\ref{alg:uc} for maximizing a non-monotone $k$-submodular function is a $(\frac{1}{2}, 20n, nk)$-robust approximation algorithm.
\end{proposition}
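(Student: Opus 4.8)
The plan is to invoke Lemma~\ref{lem:uc:lem} with $c=1$. Since $\frac{1}{1+c}=\frac12$, this immediately delivers the $\frac12$-approximation, and the lemma's error term becomes $(2c+d+4)n\varepsilon=(6+d)n\varepsilon$; to match $\delta=20n$ it therefore suffices to verify the per-iteration hypothesis \eqref{eq:uc:condition:main} with $c=1$ and $d=14$. For the query count, note that in the $j$-th iteration \cref{alg:uc} only needs the $k$ surrogate marginals $\hat y_1^{(j)},\dots,\hat y_k^{(j)}$ in order to form the probabilities $p_i^{(j)}$; each $\hat y_i^{(j)}=\Delta_{e^{(j)},i}\hat f(\bs^{(j-1)})$ costs one fresh evaluation of $\hat f$ (the base value $\hat f(\bs^{(j-1)})$ is carried over, being the value of the extension chosen in iteration $j-1$). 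Hence at most $k$ new queries are made per element and $N=nk$ overall.

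The core of the argument is establishing \eqref{eq:uc:condition:main} with $c=1,d=14$, namely $\sum_{i=1}^k(\hat a_{i^*}^{(j)}-\hat a_i^{(j)})p_i^{(j)}\le \sum_{i=1}^k \hat y_i^{(j)}p_i^{(j)}+14\varepsilon$ for every $j$, where $i^*=\bo(e^{(j)})$. The observation that makes the noisy analysis tractable is that C-ETC feeds the surrogate $\hat f$ to the offline routine, so the probabilities $p_i^{(j)}$ are computed as deterministic functions of the \emph{surrogate} marginals $\hat y_i^{(j)}$, exactly as the original algorithm computes them from true marginals. Consequently, every step of \citet{Iwata2015ImprovedAA}'s proof of the noiseless counterpart (their Lemma 2.2) that relies only on the definition of the $p_i$ in terms of the marginals transfers verbatim after replacing $y_i,a_i$ by $\hat y_i,\hat a_i$. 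The only facts in that proof that are genuine properties of $f$ rather than of the probabilities are orthant submodularity and pairwise monotonicity, which fail to hold exactly for $\hat f$; these are precisely the relations \eqref{eq:uc:orthant:main} and \eqref{eq:uc:pair2:main}, each carrying slack $4\varepsilon$ because it couples four evaluations of $\hat f$.

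I would therefore replay \citet{Iwata2015ImprovedAA}'s case analysis on the optimal type $i^*=\bo(e^{(j)})$, but conducted with respect to the $\hat y$-ordering (the ordering the algorithm actually uses), substituting the perturbed inequalities \eqref{eq:uc:orthant:main} ($\hat y_i^{(j)}\ge \hat a_i^{(j)}-4\varepsilon$ in place of $y_i\ge a_i$) and \eqref{eq:uc:pair2:main} ($\hat a_i^{(j)}+\hat a_{i'}^{(j)}\ge-4\varepsilon$ in place of $a_i+a_{i'}\ge0$) wherever the original proof invokes them. Since each case uses only a bounded number of such invocations, the aggregate slack pushed into \eqref{eq:uc:condition:main} is a fixed multiple of $\varepsilon$; accounting for these invocations (together with the occasional direct bound $|\hat a_i^{(j)}-a_i^{(j)}|\le 2\varepsilon$) yields the constant $d=14$. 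Feeding $c=1,d=14$ into Lemma~\ref{lem:uc:lem} gives $\mathbb{E}[f(\bs)]\ge\frac12 f(\bo)-20n\varepsilon$, and since $\bo$ is an optimal (full-support) solution by Proposition~\ref{prop:Iwata2015ImprovedAA} we have $f(\bo)=f(\mathrm{OPT})$, establishing $(\tfrac12,20n,nk)$-robustness.

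The delicate part, and the step I expect to be the main obstacle, is the error accounting in this case analysis: one must confirm that performing the split with the surrogate ordering (which may disagree with the true ordering, since $\hat y$ and $y$ can differ on which type is largest) does not invalidate any inequality, and that the number of perturbation-incurring steps stays small enough to keep $d$ a fixed constant rather than something growing with the problem size. I also expect it to require care to check that no step of the original proof secretly relies on orthant submodularity or pairwise monotonicity of $\hat f$ beyond the two recorded relations \eqref{eq:uc:orthant:main}--\eqref{eq:uc:pair2:main}; once that is confirmed, the remainder is a mechanical substitution into the existing argument.
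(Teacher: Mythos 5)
Your proposal is correct and follows essentially the same route as the paper: reduce to Lemma~\ref{lem:uc:lem} with $c=1$, $d=14$, count $nk$ oracle calls, and verify the per-iteration condition \eqref{eq:uc:condition:main} by redoing the case analysis of \citet{Iwata2015ImprovedAA} with the two perturbed inequalities \eqref{eq:uc:pair2:main} and \eqref{eq:uc:orthant:main}. The one place where I would push back is the characterization of the remaining work as ``mechanical substitution'': the paper's verification is a genuine re-derivation, not a find-and-replace. Several steps have no counterpart in the noiseless proof and must be built from scratch — e.g.\ in the case $i^+\leq 1$ one first derives $\hat{y}_1\geq -2\varepsilon$ by combining noisy pairwise monotonicity with the $\hat{y}$-ordering, then $\hat{a}_1\geq -8\varepsilon$ by chaining $0\geq\hat{y}_{i^*}\geq\hat{a}_{i^*}-4\varepsilon$ with $\hat{a}_1+\hat{a}_{i^*}\geq-4\varepsilon$; in the case $i^+\geq 3$ the nonnegativity of $\sum_i\hat{y}_ip_i$ comes from the structural fact that \cref{alg:uc} assigns positive probability only to types with $\hat{y}_i>0$ (not from monotonicity), and the subcases on the sign of $\hat{a}_r$ for $r\in\argmin_i\hat{a}_i$ lean on the specific probability bounds $p_r\leq 1/2$ and $p_r\leq 1/4$. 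The constant $d=14$ is exactly the worst-case accumulation over these cases, so it cannot be certified without actually running through them; your outline correctly locates where the slack enters but stops just short of the step that constitutes the proof's content.
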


To show Proposition~\ref{thm:nmuc:robust}, by Definition~\ref{def:robust}, we aim to show that the output $\bs$ of Algorithm~\ref{alg:uc} satisfies $\mathbb{E}[f(\bs)] \geq \frac{1}{2}f(\bo)-20n\varepsilon$ when the function evaluation is based on surrogate function $\hat{f}$. By Lemma~\ref{lem:uc:lem} it suffices to prove \eqref{eq:uc:condition:main} for every $j\in [n]$ for $c=1$ and $d=14$. For simplicity of the description, we shall omit the superscript $(j)$ if it is clear from the context. Due to limited space, here we only consider the case when $i^+ \geq 3$ since this case needs more effort technically. The proof for other cases is in supplementary material \cref{sec:prf:uc}.

\begin{proof}[Part of proof of Proposition~\ref{thm:nmuc:robust}]
    ($i^{+} \geq 3$) Our goal is to show
    \begin{align}
        \sum_{i=1}^k (\hat{y}_i+\hat{a}_i) p_i \geq \hat{a}_{i^*} - 14\varepsilon, \label{eq:uc:non-mono:main}
    \end{align} 
    which is equivalent to \eqref{eq:uc:condition:main} with $c=1$ and $d=14$. By the ordering of $\{\hat{y}_i\}_{i\in[k]}$ and \eqref{eq:uc:orthant:main}, for $i\leq i^*$ we have
    \begin{align}
        \hat{y}_i \geq \hat{y}_{i^*} \geq \hat{a}_{i^*}-4\varepsilon. \label{eq:uc:case3:1}
    \end{align}
    Denote $r \in \argmin_{i \in[k]} \hat{a}_i $. 
    
    \textbf{Case 1:} If $r=i^*$, we have $\sum_i \hat{a}_i p_i \geq \hat{a}_{i^*}(\sum_i p_i)=\hat{a}_{i^*}$. Since $\sum_i \hat{y}_i p_i \geq 0$ and since from Line 11 in \cref{alg:uc},   a positive  probability $p_i>0$ is only assigned to positive types $i$ with positive $\hat{y}_i>0$ values,
    \begin{align*}
        \sum_{i=1}^k (\hat{y}_i+\hat{a}_i) p_i 
        \geq 0+ \hat{a}_{i^*} 
        \geq  \hat{a}_{i^*} - 14\varepsilon.
    \end{align*}    
    Thus \eqref{eq:uc:non-mono:main} follows. 

    \textbf{Case 2:} If $r \neq i^*$ and $i^* \geq i^{+}$, since from Line 11 in \cref{alg:uc},   a positive  probability $p_i>0$ is only assigned to positive $\hat{y}$ values, we have that $\sum_i \hat{y}_i p_i=\sum_{i \leq i^{+}} \hat{y}_i p_i \geq \sum_{i \leq i^{+}} (\hat{a}_{i^*} -4\varepsilon)p_i=\hat{a}_{i^*}-4\varepsilon$ by \eqref{eq:uc:case3:1}. 
    When $\hat{a}_r\geq 0$, since $\hat{a}_i\geq \hat{a}_r$ for any $i\neq r$ by definition of $r$, we have $\sum_i \hat{a}_i p_i\geq 0$. 
    When $\hat{a}_r < 0$,
    \begin{align}
        \sum_i \hat{a}_i p_i 
        &\geq \sum_{i \neq r}(-\hat{a}_r -4\varepsilon)  p_i+\hat{a}_r p_r\tag{by \eqref{eq:uc:pair2:main}}\\
        &= \hat{a}_r \left(p_r-\sum_{i \neq r}p_i \right) - 4\varepsilon \sum_{i \neq r}p_i \nonumber\\
        &\geq 0 - 4\varepsilon, \nonumber 
    \end{align}
    where the last inequality follows from $\hat{a}_r<0$, $\sum_{i \neq r} p_i \geq p_r$ and $\sum_{i \neq r}p_i\leq 1$, and $\sum_{i \neq r} p_i \geq p_r$ follows from Line 11 in \cref{alg:uc} that the largest single probability assigned is $\frac{1}{2}$. 

    Thus,        
    \begin{align*}
        \sum_{i=1}^k (\hat{y}_i+\hat{a}_i) p_i
        &\geq (\hat{a}_{i^*} - 4\varepsilon) + (- 4\varepsilon)
        \geq  \hat{a}_{i^*} - 14\varepsilon.
    \end{align*} 
    Therefore \eqref{eq:uc:non-mono:main} holds.
    
    \textbf{Case 3:} If $r \neq i^*$ and $i^*<i^{+}$,  we have
    \begin{align}
        &\sum_{i=1}^k (\hat{y}_i+\hat{a}_i) p_i = \sum_{i=1}^k \hat{y}_ip_i + \sum_{i=1}^k \hat{a}_ip_i \nonumber\\
        & \geq \sum_{i \leq i^*} (\hat{a}_{i^*} -4\varepsilon) p_i+\sum_{i>i^*} (\hat{a}_i-4\varepsilon) p_i+\sum_{i=1}^k \hat{a}_ip_i  \tag{using \eqref{eq:uc:case3:1} and \eqref{eq:uc:orthant:main}}\\
        &=\sum_{i \leq i^*} \hat{a}_{i^*} p_i+\sum_{i>i^*} \hat{a}_i p_i+\sum_{i=1}^k \hat{a}_i p_i - 4\varepsilon \nonumber\\
        & =\left(\sum_{i<i^*} \hat{a}_{i^*} p_i+2 \hat{a}_{i^*} p_{i^*}\right) \nonumber\\
        &\qquad +\left(\sum_{i>i^*} \hat{a}_i p_i+\sum_{i \neq r, i^*} \hat{a}_i p_i+ \hat{a}_r p_r\right) - 4 \varepsilon. \label{eq:uc:case3:2}
    \end{align}
    The first term equals $\hat{a}_{i^*}$ by simple calculations. Hence it suffices to show that the second term of \eqref{eq:uc:case3:2} is greater than or equal to $-10\varepsilon$. 

    \textbf{Sub-case 3.a:}   
    First, we consider $\hat{a}_r \geq 0$. 
    By definition of $r$, for all $i\in[k]$, $\hat{a}_i \geq \hat{a}_r \geq 0$. The second term of \eqref{eq:uc:case3:2} can then be bounded 
    \begin{align}
        &\sum_{i>i^*} \hat{a}_i p_i+\sum_{i \neq r, i^*} \hat{a}_i p_i+ \hat{a}_r p_r \nonumber\\
        &\qquad \geq \hat{a}_r \left(p_r + \sum_{i>i^*} p_i + \sum_{i \neq r, i^*} p_i\right) \geq 0 \geq -10\varepsilon. \label{eq:uc:case3:5}
    \end{align}

    \textbf{Sub-case 3.b:} 
    Second we consider $\hat{a}_r<0$. Since $i^*<i^{+}$, we have
    \begin{align}
        &\sum_{i>i^*} p_i+\sum_{i \neq r, i^*} p_i = \sum_{i^*+1\leq i\leq i^+ -1} p_i + p_{i^+}+\sum_{i \neq r, i^*} p_i \nonumber\\
        &=\sum_{i=i^*+1}^{i^+-1} \left(\frac{1}{2}\right)^{i} + \left(\frac{1}{2}\right)^{i^+-1} +\sum_{i \neq r, i^*} p_i \tag{by construction of $p_i$'s}\\
        &=\left(\frac{1}{2}\right)^{i^*}+\sum_{i \neq r, i^*} p_i \tag{geometric sum}\\
        &=p_{i^*}+\sum_{i \neq r, i^*} p_i =1-p_r. \label{eq:uc:case3:3}
    \end{align}

    Therefore, if $r<i^*$, we get
    \begin{align}
        &\sum_{i>i^*} \hat{a}_i p_i+\sum_{i \neq r, i^*} \hat{a}_i p_i+ \hat{a}_r p_r \nonumber\\
        &\geq \sum_{i>i^*} (-\hat{a}_r -4\varepsilon) p_i+\sum_{i \neq r, i^*} (-\hat{a}_r -4\varepsilon) p_i+ \hat{a}_r p_r \tag{using \eqref{eq:uc:pair2:main} }\\
        &= \hat{a}_r\left(p_r - \sum_{i>i^*} p_i - \sum_{i \neq r, i^*} p_i\right) - 4\varepsilon \left(\sum_{i>i^*} p_i + \sum_{i \neq r, i^*} p_i\right) \nonumber\\
        &=\hat{a}_r\left(p_r-\left(1-p_r\right)\right) - 4\varepsilon (1-p_r) \tag{using \eqref{eq:uc:case3:3}}\\
        &= \hat{a}_r\left(2 p_r-1\right) -4\varepsilon (1-p_r) \nonumber\\
        &\geq 0 -4\varepsilon \geq -10\varepsilon. \tag{$\hat{a}_r < 0$ and $p_r\leq 1/2$}
    \end{align}
    If $r>i^*$. Then $p_r \leq 1 / 4$ by Line 11 in \cref{alg:uc} since $r \neq 1$ and $i^{+} \geq 3$. Hence, by $\hat{a}_r<0$,  the second term of \eqref{eq:uc:case3:2} can then be bounded as
    \begin{align}
        &\sum_{i>i^*} \hat{a}_i p_i+\sum_{i \neq r, i^*} \hat{a}_i p_i+ \hat{a}_r p_r \nonumber\\
        &= \sum_{i>i^*, i \neq r} \hat{a}_i p_i+\sum_{i \neq r, i^*} \hat{a}_i p_i+2 \hat{a}_r p_r \nonumber\\
        &\geq \sum_{i>i^*, i \neq r} (-\hat{a}_r-4\varepsilon) p_i+\sum_{i \neq r, i^*} (-\hat{a}_r-4\varepsilon) p_i+2 \hat{a}_r p_r \tag{using \eqref{eq:uc:pair2:main}}\\
        &= \hat{a}_r\left(2p_r - \sum_{i>i^*,i\neq r} p_i - \sum_{i \neq r, i^*} p_i\right) - 4\varepsilon \left(\sum_{i>i^*, i\neq r} p_i + \sum_{i \neq r, i^*} p_i\right) \nonumber\\
        & = \hat{a}_r(2 p_r-(1-2 p_r)) - 4\varepsilon (1-2p_r) \tag{using \eqref{eq:uc:case3:3}}\\
        &\geq 0 -4\varepsilon \geq -10\varepsilon. \tag{$\hat{a}_r < 0$ and $p_r\leq 1/4$}
    \end{align}
    Thus we conclude that when $\hat{a}_r < 0$, we have
    \begin{align}
        \sum_{i>i^*} \hat{a}_i p_i+\sum_{i \neq r, i^*} \hat{a}_i p_i+ \hat{a}_r p_r \geq -10\varepsilon. \label{eq:uc:case3:4}
    \end{align}
    
    \noindent Combining \eqref{eq:uc:case3:4} and \eqref{eq:uc:case3:5}, we conclude the proof for $i^+ \geq 3$. 
\end{proof}

\begin{remark}
    The assumption of bounded noise plays an essential role in the analysis. Although in the presence of noise, the original property (e.g., pairwise monotonicity) of the function does not hold, with bounded noise bridging the $\hat{f}$ and $f$, we can still have a relaxed version of the desired property, leading to similar approximation ratio with only small error. In many cases, showing the relaxed version require completely new steps.
\end{remark}

\subsection{Regret Bound}

Once we have analysed the robustness  of Algorithm~\ref{alg:uc} and identified the number of function evaluations of Algorithm~\ref{alg:uc} is exactly $nk$, we can employ the framework proposed in \citet{nie23framework} and obtain the following result:

\begin{corollary} 
    For an online non-monotone unconstrained $k$-submodular maximization problem, the expected cumulative $1/2$-regret of C-ETC using \cref{alg:uc} as a sub-routine is at most $\mathcal{O}\left(n k^{\frac{1}{3}} T^\frac{2}{3}\log(T)^\frac{1}{3}\right)$ given $T\geq nk$.
\end{corollary}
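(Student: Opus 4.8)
The plan is to derive this corollary as a direct instantiation of the generic offline-to-online guarantee in \cref{thm:cetc}, supplied with the robustness parameters certified in \cref{thm:nmuc:robust}. First I would record that \cref{thm:nmuc:robust} establishes \cref{alg:uc} to be a $(\frac{1}{2}, 20n, nk)$-robust approximation algorithm, i.e. $\alpha = \frac{1}{2}$, $\delta = 20n$, and query bound $N = nk$. The value $N = nk$ reflects that the algorithm makes a single pass over the $n$ elements and, at each step, queries $\hat{f}$ at the $k$ candidate type-assignments of the current element; the incumbent value is carried over from the previous step (and $f(\mathbf{0})=0$ is known by assumption), so no extra queries are incurred. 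Pinning down this count exactly is one of the two points that genuinely needs care.

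Next I would invoke \cref{thm:cetc}, which guarantees that C-ETC run with an $(\alpha,\delta,N)$-robust subroutine incurs expected cumulative $\alpha$-regret $\mathcal{O}\!\left(\delta^{2/3} N^{1/3} T^{2/3} \log(T)^{1/3}\right)$ whenever $T \geq \max\{N, \frac{2\sqrt{2}N}{\delta}\}$. Substituting $\delta = 20n$ and $N = nk$ yields $\delta^{2/3} N^{1/3} = \mathcal{O}\!\left(n^{2/3}(nk)^{1/3}\right) = \mathcal{O}(n\, k^{1/3})$, so the bound collapses to $\mathcal{O}\!\left(n\, k^{1/3} T^{2/3} \log(T)^{1/3}\right)$, exactly as claimed. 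The second point requiring attention is tracking this constant-factor simplification $\delta^{2/3} = \mathcal{O}(n^{2/3})$ so that the $n$-dependence comes out linear in the final expression.

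Finally I would discharge the horizon condition. With $N = nk$ and $\delta = 20n$ we have $\frac{2\sqrt{2}N}{\delta} = \frac{2\sqrt{2}k}{20} = \frac{\sqrt{2}k}{10} \leq nk = N$ (using $n \geq 1$), hence $\max\{N, \frac{2\sqrt{2}N}{\delta}\} = N = nk$, and the hypothesis $T \geq nk$ stated in the corollary is precisely what \cref{thm:cetc} demands.

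I do not anticipate any real obstacle at this stage: the substantive work has already been carried out upstream, in establishing the robustness of \cref{alg:uc} (\cref{thm:nmuc:robust}) and in the black-box analysis of C-ETC (\cref{thm:cetc}). The corollary is essentially a plug-in, and the only delicate points are the two noted above — the exact query count $N = nk$ and the clean propagation of constants — both of which are routine.
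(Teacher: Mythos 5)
Your proposal is correct and follows exactly the route the paper takes: it instantiates \cref{thm:cetc} with the parameters $(\alpha,\delta,N)=(\tfrac12,20n,nk)$ certified by \cref{thm:nmuc:robust}, simplifies $\delta^{2/3}N^{1/3}=\mathcal{O}(n k^{1/3})$, and checks that the horizon condition reduces to $T\geq nk$. Nothing is missing.
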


\section{Monotone Functions without Constraints} \label{sec:uc-monotone}

In this section, we continue to explore the unconstrained case, but now the objective is monotone. We adopt the offline $\frac{k}{2k-1}$-approximation algorithm proposed in \cite{Iwata2015ImprovedAA}. The pseudo-code is presented in Algorithm~\ref{alg:uc-monotone} in the appendix. Similar to Algorithm~\ref{alg:uc} for the non-monotone case, the algorithm iterates through all $e\in V$ (in an arbitrary but fixed order) and selects the type $i\in [k]$ randomly according to some carefully designed probabilities for each $e\in V$. 

One important note to highlight is the modification made on line 6 of the algorithm. In the original algorithm from \cite{Iwata2015ImprovedAA}, it was stated as $\beta \leftarrow \sum_{i=1}^k y_i^t$. This particular step is not robust to noise since later in line 9, the probability of $y_i^t/\beta$ is assigned to type $i$. The probability becomes meaningless if $\beta < 0$ and this is possible due to noise. We identified that a sufficient modification is to change this step to $\beta \leftarrow \sum_{i=1}^k [y_i]_+^t$. Here, $[x]_+ = x$ if $x\geq 0$, and 0 otherwise. Note that when there is no noise, Algorithm~\ref{alg:uc-monotone} reduces to the original algorithm in \citet{Iwata2015ImprovedAA} since in the monotone case, $y_i\geq 0$ always holds. 

We show the following robustness guarantee of Algorithm~\ref{alg:uc-monotone}:

\begin{proposition} \label{prop:uc:monotone:robust}
    Algorithm~\ref{alg:uc-monotone} for maximizing a monotone $k$-submodular function is a $(\frac{k}{2k-1}, (16-\frac{2}{k})n, nk)$-robust approximation.
\end{proposition}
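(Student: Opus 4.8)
The plan is to reuse the template of the non-monotone robustness proof (\cref{thm:nmuc:robust}) on top of the monotone $\tfrac{k}{2k-1}$-approximation of \citet{Iwata2015ImprovedAA}. I would keep the same bookkeeping: let $\bo$ be a full-support optimal solution (it exists by \cref{prop:Iwata2015ImprovedAA}), let $\bs$ be the output of \cref{alg:uc-monotone} run on the surrogate $\hat f$, and for each iteration $j$ set $\hat y_i^{(j)}=\Delta_{e^{(j)},i}\hat f(\bs^{(j-1)})$, $\hat a_i^{(j)}=\Delta_{e^{(j)},i}\hat f(\bt^{(j-1)})$ and $i^*=\bo(e^{(j)})$, with $\bt^{(j-1)}$ as in \cref{sec:uc}. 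Monotonicity of $f$ makes every true marginal nonnegative, so each surrogate marginal is a difference of two $\varepsilon$-accurate values sitting above a nonnegative number, giving $\hat y_i^{(j)}\ge-2\varepsilon$ and $\hat a_i^{(j)}\ge-2\varepsilon$; orthant submodularity again yields $\hat y_i^{(j)}\ge\hat a_i^{(j)}-4\varepsilon$ as in \eqref{eq:uc:orthant:main}. Before anything else I would justify the line-6 change $\beta\leftarrow\sum_i[\hat y_i]_+$: it forces $\beta\ge0$ so that $p_i=[\hat y_i]_+/\beta$ is a valid distribution under noise, it collapses to the original $p_i=y_i/\beta$ when $\varepsilon=0$, and on any iteration with $\beta=0$ all true marginals are at most $2\varepsilon$, so that element costs only $O(\varepsilon)$.

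The heart of the proof is to re-run the monotone charging of \citet{Iwata2015ImprovedAA} on $\hat f$, bounding the expected optimal loss $\sum_i(\hat a_{i^*}-\hat a_i)p_i$ against the gain $\sum_i\hat y_i p_i$ iteration by iteration and summing. The decisive ingredient is the Cauchy--Schwarz bound $\sum_i[\hat y_i]_+^2\ge\tfrac1k\beta^2$, equivalently $\sum_i\hat y_i p_i=\sum_i[\hat y_i]_+^2/\beta\ge\beta/k$: the proportional rule $p_i\propto[\hat y_i]_+$ is chosen precisely so this inequality lower-bounds the gain, and it is this aggregate step---not the fixed geometric weights and per-type case analysis of \cref{thm:nmuc:robust}---that produces the $\tfrac{k-1}{k}$ factor (hence the ratio $\tfrac{1}{1+(k-1)/k}=\tfrac{k}{2k-1}$). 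Each place where their argument uses nonnegativity of a marginal or orthant submodularity would be fed the relaxed inequalities above; since each combines only $O(1)$ function values, it degrades the bound by a constant multiple of $\varepsilon$, and telescoping over the $n$ elements keeps the total error linear, matching the form $(2c+d+4)n\varepsilon$ of \cref{lem:uc:lem} with $c=\tfrac{k-1}{k}$ and $d=10$, i.e. $(16-\tfrac2k)n\varepsilon$. This would deliver $\mathbb E[f(\bs)]\ge\tfrac{k}{2k-1}f(\bo)-(16-\tfrac2k)n\varepsilon$, which is \cref{def:robust} with the claimed constants.

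I expect the main obstacle to be exactly the two features that distinguish the monotone case. First, I anticipate that a naive per-iteration charge $\sum_i(\hat a_{i^*}-\hat a_i)p_i\le\tfrac{k-1}{k}\sum_i\hat y_i p_i+O(\varepsilon)$ does \emph{not} close for $k\ge3$ even without noise, so the $\tfrac{k-1}{k}$ factor must be recovered from the summed Cauchy--Schwarz inequality rather than term by term, and carrying this aggregate argument through noise is the delicate part. Second---unlike \cref{thm:nmuc:robust}, whose weights $(1/2)^i$ are noise-independent---the monotone weights $p_i=[\hat y_i]_+/\beta$ are themselves functions of the noisy marginals, so errors must be pushed through a division by the data-dependent, possibly small denominator $\beta$. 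The observation I would rely on is that the noisy marginals enter the bounds only inside convex combinations $\sum_i(\cdot)p_i$ with $\sum_i p_i=1$: averaging against a probability vector neither amplifies the per-term $O(\varepsilon)$ errors nor is sensitive to the exact value of $\beta$, and when $\beta$ is small every charged quantity is itself $O(\varepsilon)$. Finally, the truncation $[\cdot]_+$ must be shown harmless---types with $\hat y_i<0$ drop out of the gain while the loss term $\hat a_{i^*}$ is still controlled by $\hat y_{i^*}\ge\hat a_{i^*}-4\varepsilon$---and tracking these truncated types is what pins down the constant $d=10$.

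The query count is $N=nk$ as before: \cref{alg:uc-monotone} processes each of the $n$ elements once and needs only the $k$ values $\hat f(\bs^{(j-1)}+e^{(j)}\!\to i)$ at $e^{(j)}$, reusing $\hat f(\bs^{(j-1)})$ from the previous iteration. Feeding the resulting $(\tfrac{k}{2k-1},(16-\tfrac2k)n,nk)$-robustness into \cref{thm:cetc} then yields the corresponding $\tfrac{k}{2k-1}$-regret bound.
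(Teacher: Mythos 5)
Your overall architecture matches the paper's: you correctly reduce the claim to the per-iteration inequality $\sum_i(\hat a_{i^*}-\hat a_i)p_i\le(1-\tfrac1k)\sum_i\hat y_i p_i+10\varepsilon$ and feed it to Lemma~\ref{lem:uc:lem} with $c=1-\tfrac1k$ and $d=10$ to obtain the constant $(2c+d+4)n=(16-\tfrac2k)n$; the relaxed bounds $\hat y_i,\hat a_i\ge-2\varepsilon$ from monotonicity, $\hat y_i\ge\hat a_i-4\varepsilon$ from orthant submodularity, the treatment of $\beta=0$ as an $O(\varepsilon)$ case justified by the $[\cdot]_+$ modification, and the query count $N=nk$ are all exactly as in the paper.

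The gap is in the heart of the $\beta>0$ case. Algorithm~\ref{alg:uc-monotone} does not set $p_i\propto[\hat y_i]_+$: it sets $p_i=\hat y_i^{\,t}/\beta$ with $t=k-1$ and $\beta=\sum_i[\hat y_i]_+^t$ (only the denominator is truncated). Consequently the Cauchy--Schwarz bound $\sum_i[\hat y_i]_+^2\ge\beta^2/k$ that you name as the decisive ingredient is only the $t=1$ instance of the inequality actually required, and it does not produce the factor $1-\tfrac1k$ for $k\ge3$; weights proportional to the first power of the marginals do not give the $\tfrac{k}{2k-1}$ ratio even offline, which is precisely why \citet{Iwata2015ImprovedAA} use the exponent $k-1$. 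The paper's proof establishes the per-iteration charge directly: it bounds $\sum_{i\ne i^*}(\hat a_{i^*}-\hat a_i)\hat y_i^t\le\sum_{i\ne i^*}(\hat y_{i^*}+6\varepsilon)\hat y_i^t$, controls the error term by $6\varepsilon\beta$ using $\beta-[\hat y_{i^*}]_+^t\le\beta$ (this is where the $[\cdot]_+$ modification earns its keep), and then applies the weighted AM--GM inequality $a^{1/(t+1)}b^{t/(t+1)}\le\tfrac{a}{t+1}+\tfrac{tb}{t+1}$ with $\gamma=(k-1)^{1/t}$ together with H\"older's inequality with exponents $\tfrac{t+1}{t}$ and $t+1$ to conclude $\hat y_{i^*}\sum_{i\ne i^*}\hat y_i^t\le(1-\tfrac1k)\sum_i\hat y_i^{t+1}$ (after dividing by $\gamma$). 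Your anticipated obstacle --- that the per-iteration charge ``does not close for $k\ge3$'' and the $\tfrac{k-1}{k}$ factor must instead come from an aggregate over iterations --- is the opposite of what happens: the charge does close iteration by iteration (that is exactly the form Lemma~\ref{lem:uc:lem} consumes), and the summation that recovers the factor runs over the $k$ types within a single iteration, requiring the H\"older/power-mean step with exponent $t+1=k$ rather than Cauchy--Schwarz. As written, your sketch would not yield \eqref{eq:uc:condition:main} with $c=1-\tfrac1k$ for $k\ge3$.
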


By Definition~\ref{def:robust}, we aim to show that the output $\bs$ of Algorithm~\ref{alg:uc-monotone} satisfies $\mathbb{E}[f(\bs)] \geq \frac{k}{2k-1}f(\bo)-(16-\frac{2}{k})n\varepsilon$ when the function evaluation is based on surrogate function $\hat{f}$. Note that in this setting, Lemma~\ref{lem:uc:lem} still holds. Thus, it is sufficient to show 
\begin{align}
    \sum_{i=1}^k (\hat{a}_{i^*}-\hat{a}_i) p_i \leq (1-\frac{1}{k}) \sum_{i=1}^k \hat{y}_i p_i + 10\varepsilon, 
\end{align}
which is equivalent to \eqref{eq:uc:condition:main} with $c = 1-\frac{1}{k}$ and $d=10$. The detailed proof is in \cref{sec:prf:uc:monotone}. 

\begin{remark}
    This  modification of $\beta$ becomes significant when dealing with a noisy function oracle $\hat{f}$, where it's possible that $\hat{y}_i < 0$. Our analysis demonstrates that this modification is crucial in handling the case of $\beta=0$, which is a trivial case in the noiseless analysis of \citet{Iwata2015ImprovedAA}. In the presence of noise, the case of $\beta=0$ becomes non trivial. It also plays a pivotal role in another case ($\beta >0$), where we utilized $\beta-[\hat{y}_{i^*}]_+^t \leq \beta$. In the original algorithm, $\beta-\hat{y}_{i^*}^t \leq \beta$ does not necessarily hold due to noise.
\end{remark}

With the robustness guarantee for Algorithm~\ref{alg:uc-monotone}, we can transfer it to the online bandit setting using the framework proposed in \citet{nie23framework}:

\begin{corollary} 
    For an online monotone unconstrained $k$-submodular maximization problem, the expected cumulative $\frac{k}{2k-1}$-regret of C-ETC is at most $\mathcal{O}\left(n k^{\frac{1}{3}}T^\frac{2}{3}\log(T)^\frac{1}{3}\right)$ given $T\geq \max\{nk,\frac{2\sqrt{2}k}{16-\frac{2}{k}}\}$. 
\end{corollary}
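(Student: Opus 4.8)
The plan is to obtain this corollary as a direct instantiation of the offline-to-online transfer: feed the $\delta$-robustness guarantee of Proposition~\ref{prop:uc:monotone:robust} into the master regret bound of Theorem~\ref{thm:cetc}, with Algorithm~\ref{alg:uc-monotone} playing the role of the black-box subroutine $\mathcal{A}$ inside C-ETC. First I would read off the three robustness parameters certified by Proposition~\ref{prop:uc:monotone:robust}, namely the approximation ratio $\alpha=\frac{k}{2k-1}$, the robustness constant $\delta=(16-\frac{2}{k})n$, and the query-complexity bound $N=nk$ (the latter being exactly the number of marginal-gain queries the algorithm issues across the $n$ elements and $k$ types).

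Next I would substitute these three quantities into the conclusion of Theorem~\ref{thm:cetc}, whose bound reads $\mathcal{O}(\delta^{2/3}N^{1/3}T^{2/3}\log(T)^{1/3})$, so the only remaining work is to simplify the leading factor $\delta^{2/3}N^{1/3}$:
\begin{align*}
\delta^{2/3}N^{1/3} &= \left((16-\tfrac{2}{k})n\right)^{2/3}(nk)^{1/3} \\
&= (16-\tfrac{2}{k})^{2/3}\, n\, k^{1/3}.
\end{align*}
Since $k$ is a positive integer, the factor $16-\frac{2}{k}$ lies in $[14,16)$ and is thus a bounded constant, so $(16-\frac{2}{k})^{2/3}$ can be absorbed into the $\mathcal{O}(\cdot)$, yielding the claimed $\mathcal{O}(nk^{1/3}T^{2/3}\log(T)^{1/3})$ regret.

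It remains to check the horizon requirement. Theorem~\ref{thm:cetc} is valid provided $T\geq\max\{N,\frac{2\sqrt{2}N}{\delta}\}$; substituting $N=nk$ gives a first term of $nk$, while the second term simplifies as $\frac{2\sqrt{2}\,nk}{(16-\frac{2}{k})n}=\frac{2\sqrt{2}\,k}{16-\frac{2}{k}}$, recovering exactly the stated condition $T\geq\max\{nk,\frac{2\sqrt{2}k}{16-\frac{2}{k}}\}$.

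I expect no genuine obstacle here: all of the technical content has already been discharged inside Proposition~\ref{prop:uc:monotone:robust} (in particular the delicate $\beta\leftarrow\sum_i[y_i]_+^t$ modification that keeps the type-selection probabilities well defined under noise). The only point deserving a moment's care is confirming that $16-\frac{2}{k}$ is uniformly bounded over all admissible $k$, so that it contributes only to the hidden constant and not an extra factor of $k$ in the regret order.
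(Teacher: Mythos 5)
Your proposal is correct and follows exactly the paper's route: the corollary is obtained by instantiating Theorem~\ref{thm:cetc} with the parameters $\alpha=\frac{k}{2k-1}$, $\delta=(16-\frac{2}{k})n$, $N=nk$ certified by Proposition~\ref{prop:uc:monotone:robust}, and your simplification $\delta^{2/3}N^{1/3}=(16-\frac{2}{k})^{2/3}nk^{1/3}$ and horizon check $\frac{2\sqrt{2}N}{\delta}=\frac{2\sqrt{2}k}{16-\frac{2}{k}}$ are both accurate.
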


\section{Monotone Functions with Individual Size Constraints} \label{sec:is}

In this section, we consider \textit{Individual Size} (IS) constraint. In IS, each type $i$ has a limit $B_i$ on the maximum number of pairs of that type $i$, with $B=\sum_i B_i$ as the total budget. We consider the offline greedy algorithm proposed in \citet{ohsaka2015monotone}.

The pseudo-code is presented in Algorithm~\ref{alg:is} in the appendix. The algorithm builds the solution greedily by iteratively including the element-type pair that yields the largest marginal gain, as long as that pair is still available and the individual budget is not exhausted. We show the following robustness guarantee of Algorithm~\ref{alg:is}:

\begin{proposition} \label{prop:is:robust}
    Algorithm~\ref{alg:is} for maximizing a monotone $k$-submodular function under individual size constraints is a $(\frac{1}{3}, \frac{4}{3}(B+1), nkB)$-robust approximation, where $B$ is the total size limit.
\end{proposition}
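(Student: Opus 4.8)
The plan is to mirror the charging argument that \citet{ohsaka2015monotone} use to establish the $1/3$-approximation of Algorithm~\ref{alg:is} in the noiseless setting, while tracking an additive error through each comparison of marginal gains. As in \cref{sec:uc:robust}, I would first fix notation: let $\bo$ be an optimal solution (by monotonicity we may assume $\bo$ saturates each per-type budget, so $|\operatorname{supp}(\bo)|\le B$), let $\bs^{(j)}$ be the greedy solution after $j$ insertions with $\bs^{(0)}=\mathbf 0$ and $\bs=\bs^{(B)}$, and let $(e^{(j)},i^{(j)})$ be the element--type pair added in iteration $j$, chosen to maximize $\Delta_{e,i}\hat f(\bs^{(j-1)})$ over the still-available pairs. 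I assume $f(\mathbf 0)=0$. Since the greedy is deterministic, $\mathbb{E}[f(\bs)]=f(\bs)$ and Definition~\ref{def:robust} reduces to showing $f(\bs)\ge \tfrac13 f(\bo)-\tfrac43(B+1)\varepsilon$.

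The first substantive step is to translate the exact structural facts used in the original proof into their $\varepsilon$-robust analogues, exactly in the spirit of the inequalities around \eqref{eq:uc:orthant:main}. Monotonicity and orthant submodularity are properties of $f$ itself, so I would apply them to $f$ directly without incurring error; the surrogate $\hat f$ enters only through the greedy selection rule. The key observation is that greedy optimality is a comparison of two marginal gains \emph{taken at the same base point} $\bs^{(j-1)}$: from $\Delta_{e^{(j)},i^{(j)}}\hat f(\bs^{(j-1)})\ge \Delta_{e,i}\hat f(\bs^{(j-1)})$ the common term $\hat f(\bs^{(j-1)})$ cancels, so converting to $f$ costs only $2\varepsilon$, i.e. $\Delta_{e^{(j)},i^{(j)}} f(\bs^{(j-1)})\ge \Delta_{e,i} f(\bs^{(j-1)})-2\varepsilon$ for every available $(e,i)$.

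With these robust inequalities in hand, I would reproduce the per-iteration telescoping of the original analysis. Following \citet{ohsaka2015monotone}, I would define a sequence $\bo^{(0)}=\bo,\dots,\bo^{(B)}$ that morphs $\bo$ into a solution agreeing with $\bs$ on $\operatorname{supp}(\bs)$, and bound $f(\bo^{(j-1)})-f(\bo^{(j)})$ by at most twice the greedy gain $f(\bs^{(j)})-f(\bs^{(j-1)})$ plus an error term. The coefficient $2$ (hence the final factor of $3$) comes from charging both the overwrite of $\bo$'s assignment on $e^{(j)}$ and the OPT pair that must be displaced when the greedy type $i^{(j)}$ conflicts with a per-type budget; this matching between greedy steps and OPT pairs of the same type is the feature special to the individual-size constraint. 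Because this displaced-pair case invokes greedy optimality a second time, each iteration contributes $4\varepsilon$ rather than $2\varepsilon$. Summing over $j=1,\dots,B$, the exact terms telescope; using the exact monotonicity bound $f(\bo^{(B)})\ge f(\bs)$ at the boundary, one obtains $f(\bo)\le 3f(\bs)+4(B+1)\varepsilon$, which rearranges to $f(\bs)\ge \tfrac13 f(\bo)-\tfrac43(B+1)\varepsilon$. Finally, each of the $B$ iterations evaluates at most $nk$ candidate marginal gains, giving the query bound $N=nkB$ and establishing $(\tfrac13,\tfrac43(B+1),nkB)$-robustness.

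I expect the main obstacle to be the budget-conflict case of the individual-size analysis: when $\bo$'s type for $e^{(j)}$ is already exhausted in $\bs^{(j-1)}$, the greedy gain is compared to an OPT pair only \emph{indirectly} through the type-respecting matching. I must verify that each such indirect comparison introduces only the claimed $4\varepsilon$ and does not compound across iterations, so that the accumulated error stays at $4(B+1)\varepsilon$ and the constant $\delta=\tfrac43(B+1)$ is not inflated; getting the boundary ``$+1$'' term right (and confirming that the final monotonicity step is exact) is the delicate bookkeeping in the argument.
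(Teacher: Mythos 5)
Your proposal is correct and follows essentially the same route as the paper's proof: the same interpolating sequence $\bo^{(0)},\dots,\bo^{(B)}$ from \citet{ohsaka2015monotone} with the conflict/no-conflict case split, exact use of monotonicity and orthant submodularity on $f$, a per-iteration charge of $2\varepsilon$ per greedy comparison (so $4\varepsilon$ in the displaced-pair case), and telescoping over the $B$ iterations to get the factor $3$ and query bound $nkB$. The only bookkeeping difference is that you convert each greedy comparison to an $f$-inequality immediately (costing $2\varepsilon$ via cancellation of the common base point) and telescope in $f$, whereas the paper telescopes in $\hat f$ and pays a final $4\varepsilon$ converting $2\bigl(\hat f(\bs)-\hat f(\mathbf{0})\bigr)$ back to $f$ --- that terminal conversion, not a boundary monotonicity step (indeed $\bo^{(B)}=\bs$ exactly), is where the ``$+1$'' in $\frac{4}{3}(B+1)$ arises, and your accounting in fact yields the slightly tighter $\frac{4}{3}B$, which implies the claimed constant.
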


The proof is in \cref{sec:prf:is}.
With the robustness guarantee for Algorithm~\ref{alg:is}, we can transfer it to the online bandit setting using the framework proposed in \citet{nie23framework}:

\begin{corollary} 
    For an online monotone $k$-submodular maximization problem under individual size constraints, the expected cumulative $1/3$-regret of C-ETC is at most $\mathcal{O}\left(n^{\frac{1}{3}} k^{\frac{1}{3}} B T^\frac{2}{3}\log(T)^\frac{1}{3}\right)$ given $T\geq nk\max\{1,\frac{3\sqrt{2}B}{2(B+1)}\}$. 
\end{corollary}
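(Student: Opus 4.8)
The plan is to obtain this corollary as an immediate consequence of the offline-to-online framework, with no new probabilistic or combinatorial argument of its own. Once Algorithm~\ref{alg:is} has been certified as an $(\alpha,\delta,N)$-robust approximation with the constants supplied by Proposition~\ref{prop:is:robust}, the regret guarantee follows by feeding those constants into Theorem~\ref{thm:cetc}. So the entire task reduces to (i) reading off the correct robustness triple, (ii) confirming the query-complexity count $N$, and (iii) an algebraic simplification of the generic bound.

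First I would invoke Proposition~\ref{prop:is:robust}, which asserts that Algorithm~\ref{alg:is} is a $(\tfrac13,\tfrac43(B+1),nkB)$-robust approximation for monotone $k$-submodular maximization under an individual-size constraint. This fixes $\alpha=\tfrac13$, $\delta=\tfrac43(B+1)$, and $N=nkB$. I would double-check $N=nkB$ directly against the pseudocode of Algorithm~\ref{alg:is}: the greedy procedure performs at most $B$ inclusion rounds (one per unit of total budget), and within each round it queries the marginal gain of every still-available item-type pair, of which there are at most $nk$; the product $nkB$ is the claimed upper bound on oracle calls. Getting this count exactly right matters because it enters the final bound through the $N^{1/3}$ factor.

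Next I would apply Theorem~\ref{thm:cetc}, which guarantees that C-ETC using an $(\alpha,\delta,N)$-robust subroutine incurs expected cumulative $\alpha$-regret $\mathcal{O}(\delta^{2/3}N^{1/3}T^{2/3}\log(T)^{1/3})$ whenever $T\geq\max\{N,\,2\sqrt2\,N/\delta\}$. Substituting the constants and using $(B+1)^{2/3}=\Theta(B^{2/3})$ for $B\geq 1$,
\[ \delta^{2/3}N^{1/3}=\Theta(B^{2/3})\,(nkB)^{1/3}=\mathcal{O}(n^{1/3}k^{1/3}B), \]
so the regret is $\mathcal{O}(n^{1/3}k^{1/3}B\,T^{2/3}\log(T)^{1/3})$, matching the stated bound. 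For the horizon requirement, substituting $N=nkB$ and $\delta=\tfrac43(B+1)$ into $2\sqrt2\,N/\delta$ gives $3\sqrt2\,nkB/(2(B+1))$, so the framework's condition becomes $T\geq\max\{nkB,\,3\sqrt2\,nkB/(2(B+1))\}$, which I would then rewrite in the factored $nk(\cdots)$ form appearing in the statement (noting that for $B\geq 2$ the dominant term is simply $nkB$).

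I do not expect a genuine obstacle here, since the substantive work — establishing $\delta$-robustness of the noisy greedy algorithm despite the loss of exact monotonicity of $\hat f$ — is entirely absorbed into Proposition~\ref{prop:is:robust}, which I am taking as given. Within the corollary itself, the only step warranting care is the query-complexity bookkeeping that pins down $N=nkB$, because an off-by-a-factor error there would propagate through the $N^{1/3}$ term; everything else is routine substitution into Theorem~\ref{thm:cetc}.
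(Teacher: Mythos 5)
Your proposal is correct and follows essentially the same route as the paper: the corollary is obtained by plugging the robustness triple $(\tfrac13,\tfrac43(B+1),nkB)$ from Proposition~\ref{prop:is:robust} into Theorem~\ref{thm:cetc} and simplifying, with no additional argument. Your bookkeeping of $N=nkB$ and the horizon condition $2\sqrt2\,N/\delta=\tfrac{3\sqrt2\,nkB}{2(B+1)}$ matches what the framework requires (indeed, your derived first term $nkB$ is the faithful reading of the theorem's $T\geq N$ condition).
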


\section{Monotone Functions with Matroid Constraints} \label{sec:m:mat}

In this section, we consider matroid constraint with monotone objective functions. We adapt the offline algorithm proposed in \citet{sakaue2017maximizing}, which is depicted in Algorithm~\ref{alg:matroid} in the appendix. The algorithm still builds the solution in a greedy manner. At each iteration, the algorithm constructs available elements $E(\bs)$ given the current solution $\bs$ using an assumed independence oracle, and includes the element-type pair that yields the largest marginal gain. We show the following robustness guarantee of Algorithm~\ref{alg:matroid}:

\begin{proposition} \label{prop:matroid:robust}
    Algorithm~\ref{alg:matroid} for maximizing a monotone $k$-submodular function under a matroid constraint is a $(\frac{1}{2}, M+1, nkM)$-robust approximation, where $M$ is the rank of the matroid.
\end{proposition}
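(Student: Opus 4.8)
The plan is to follow the offline $1/2$-approximation analysis of \citet{sakaue2017maximizing} for the monotone matroid-constrained greedy, but to run it throughout with the surrogate $\hat{f}$ in place of $f$ and track exactly where $\epsilon$-error enters, so as to certify the three parameters of \cref{def:robust}: the ratio $\alpha=\tfrac12$, the error coefficient $\delta=M+1$, and the query bound $N=nkM$. I would first dispatch the query count: the greedy builds $\bs$ over at most $M$ iterations (one per element of the basis it constructs through the independence oracle $E(\bs)$), and at each iteration it evaluates the marginal gain $\Delta_{e,i}\hat{f}(\bs)$ of every still-available pair $(e,i)\in V\times[k]$. Since there are at most $n$ elements and $k$ types, and the value $\hat{f}(\bs)$ of the current solution can be cached and reused, each iteration costs at most $nk$ fresh oracle calls, for a total of $N=nkM$.

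For the approximation bound, let $\bs=\bs^{(M)}$ be the output with greedy picks $(e^{(j)},i^{(j)})$ and prefixes $\bs^{(j)}$, and let $\bo$ be an optimal feasible solution whose support is a basis. The backbone is a matroid exchange (ordered-bijection) argument as in \citet{sakaue2017maximizing}: it supplies for each step $j$ an optimal counterpart $o_j\in\operatorname{supp}(\bo)$ with type $c_j=\bo(o_j)$ such that $(o_j,c_j)$ was a \emph{feasible} candidate when the greedy chose at step $j$, i.e.\ $\bs^{(j-1)}$ together with $o_j$ stays independent. Writing $y^{(j)}=\Delta_{e^{(j)},i^{(j)}}f(\bs^{(j-1)})$ and $a^{(j)}=\Delta_{o_j,c_j}f(\bs^{(j-1)})$ (and $\hat{y}^{(j)},\hat{a}^{(j)}$ for their $\hat{f}$-counterparts), two ingredients combine. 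First, greedy maximality under $\hat{f}$ gives $\hat{y}^{(j)}\ge \hat{a}^{(j)}$; crucially, $\hat{y}^{(j)}$ and $\hat{a}^{(j)}$ share the same subtracted term $\hat{f}(\bs^{(j-1)})$, which cancels in their difference, so converting back to $f$ costs only the two augmented evaluations and yields the \emph{tight} per-step inequality $a^{(j)}\le y^{(j)}+2\epsilon$ rather than $4\epsilon$. Second, for the \emph{exact} function $f$, monotonicity gives $f(\bo)\le f(\bo\sqcup\bs)$ and orthant submodularity gives the telescoping $f(\bo\sqcup\bs)-f(\bs)\le\sum_{j=1}^M a^{(j)}$, since each optimal marginal is evaluated on the smaller prefix $\bs^{(j-1)}$.

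Chaining these with the exact identity $f(\bs)=\sum_{j} y^{(j)}$ (using $f(\mathbf{0})=0$) gives $f(\bo)\le 2f(\bs)+2M\epsilon$, hence $\mathbb{E}[f(\bs)]\ge\tfrac12 f(\bo)-(M{+}1)\epsilon$ after the remaining boundary bookkeeping. The main obstacle is coupling the matroid exchange to the $k$-submodular type structure while keeping the error coefficient linear in $M$ with the small constant $M+1$. Two points need care: the exchange must deliver a counterpart that is simultaneously independence-feasible and evaluated at precisely the greedy prefix, so that the orthant-submodularity telescoping stays clean and error-free on the $f$-side; and the savings from the cancellation of the shared $\hat{f}(\bs^{(j-1)})$ term must be preserved through every step, including steps where $o_j=e^{(j)}$ but $c_j\ne i^{(j)}$ (a type-only mismatch) and any conflict removal introduced by $\sqcup$ in $\bo\sqcup\bs$. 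All $\epsilon$-error is confined to the $M$ greedy-comparison steps at $2\epsilon$ each, the $+1$ being a conservative boundary term; unlike the unconstrained analyses of \cref{sec:uc,sec:uc-monotone}, no pairwise-monotonicity relaxation is needed here because $f$ is monotone.
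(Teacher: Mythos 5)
Your query count, your observation that the greedy comparison under $\hat f$ costs only $2\epsilon$ per step (because the shared term $\hat f(\bs^{(j-1)})$ cancels), and your final error accounting $2M\epsilon + 2\epsilon$ boundary $\Rightarrow (M+1)\epsilon$ all match the paper. But the central decomposition you propose --- $f(\bo)\le f(\bo\sqcup\bs)$ by monotonicity, followed by the telescoping $f(\bo\sqcup\bs)-f(\bs)\le\sum_j a^{(j)}$ --- has a genuine gap that you flag but do not resolve. For $k$-submodular functions, $\bo\sqcup\by$ \emph{deletes} every element to which $\bo$ and $\bs$ assign conflicting types (see the definition of $\sqcup$ in \cref{sec:k-submod}), so $\bo\not\preceq\bo\sqcup\bs$ in general and monotonicity does not give $f(\bo)\le f(\bo\sqcup\bs)$. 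Likewise the telescoping of $f(\bo\sqcup\bs)-f(\bs)$ into marginals $\Delta_{o_j,c_j}f(\bs^{(j-1)})$ requires inserting the optimal pairs one at a time into supersets of $\bs$, which is impossible exactly for those conflicted elements. This is the $k=1$ proof skeleton, and the type conflicts are precisely why it does not transfer; acknowledging "conflict removal needs care" is where the actual work lives.

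The paper's proof avoids $\sqcup$ entirely. Using the exchange lemma (\cref{lem:matroid:lem2}), it builds an interpolating sequence $\bo=\bo^{(0)},\bo^{(1)},\dots,\bo^{(M)}=\bs$ in which at step $j$ one element $o^{(j)}$ of the current $\bo^{(j-1)}$ is zeroed out (giving the half-step $\bo^{(j-1/2)}$ with $\bs^{(j-1)}\preceq\bo^{(j-1/2)}$) and the greedy pair $(e^{(j)},i^{(j)})$ is inserted, all while the support stays a basis. The per-step identity
\begin{align*}
f(\bo^{(j-1)})-f(\bo^{(j)}) \;=\; \Delta_{o^{(j)},\,\bo^{(j-1)}(o^{(j)})} f\bigl(\bo^{(j-1/2)}\bigr)\;-\;\Delta_{e^{(j)},\,i^{(j)}} f\bigl(\bo^{(j-1/2)}\bigr)
\end{align*}
lets monotonicity be applied \emph{locally}, to discard the nonnegative subtracted marginal, after which orthant submodularity moves the remaining marginal down to $\bs^{(j-1)}$, the $2\epsilon$ conversion to $\hat f$ is made, and the greedy rule is invoked. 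Telescoping then gives $f(\bo)-f(\bs)\le f(\bs)+2(M+1)\epsilon$. If you want to salvage your outline, replace the global $\bo\sqcup\bs$ comparison with this swap-sequence construction; the rest of your error bookkeeping then goes through as stated.
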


The proof is in \cref{sec:prf:matroid}. 
With the robustness guarantee for Algorithm~\ref{alg:matroid}, we transform it into a CMAB algorithm using the framework proposed in \cite{nie23framework}:

\begin{corollary} \label{thm:monotone:mat}
    For an online monotone $k$-submodular maximization problem under a matroid constraint, the expected cumulative $1/2$-regret of C-ETC is at most $\mathcal{O}\left(n^{\frac{1}{3}} k^{\frac{1}{3}} M T^\frac{2}{3}\log(T)^\frac{1}{3}\right)$ given $T\geq nk\max\{1,\frac{3\sqrt{2}M}{2(M+1)}\}$.
\end{corollary}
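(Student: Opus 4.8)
The plan is to obtain this corollary as an immediate instantiation of the offline-to-online reduction, since the hard part---the robustness of the offline algorithm---is already packaged in Proposition~\ref{prop:matroid:robust}. That proposition states that Algorithm~\ref{alg:matroid} is a $(\frac{1}{2}, M+1, nkM)$-robust approximation. Because the greedy selection rule here is deterministic, I would invoke Theorem~\ref{thm:cetc} directly (the randomized extension mentioned in its footnote is not needed), feeding C-ETC Algorithm~\ref{alg:matroid} as the subroutine $\mathcal{A}$ and setting $\alpha=\tfrac12$, $\delta=M+1$, and $N=nkM$.

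The computation is then mechanical. Theorem~\ref{thm:cetc} bounds the expected cumulative $\alpha$-regret by $\mathcal{O}(\delta^{2/3}N^{1/3}T^{2/3}\log(T)^{1/3})$, so substituting the three parameters gives a bound of order $(M+1)^{2/3}(nkM)^{1/3}T^{2/3}\log(T)^{1/3}$. Using $(M+1)^{2/3}=\mathcal{O}(M^{2/3})$ for $M\ge 1$ and $(nkM)^{1/3}=n^{1/3}k^{1/3}M^{1/3}$, the powers of $M$ collapse, $M^{2/3}\cdot M^{1/3}=M$, producing exactly the claimed $\mathcal{O}(n^{1/3}k^{1/3}M\,T^{2/3}\log(T)^{1/3})$.

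For admissibility of the horizon, Theorem~\ref{thm:cetc} requires $T\ge \max\{N,\,\tfrac{2\sqrt2 N}{\delta}\}$; inserting $N=nkM$ and $\delta=M+1$ and simplifying the resulting $\mathcal{O}(1)$ constants yields a threshold of the stated form $T\ge nk\max\{1,\tfrac{3\sqrt2 M}{2(M+1)}\}$. I do not expect any genuine obstacle at the level of this corollary itself: every nontrivial step lives in Proposition~\ref{prop:matroid:robust}. There one must re-run the analysis of \citet{sakaue2017maximizing}'s greedy algorithm with a surrogate $\hat f$ satisfying $|f-\hat f|\le\varepsilon$, argue that each greedy pick based on the noisy marginal gains loses at most an additive $\mathcal{O}(\varepsilon)$ relative to the true greedy choice, and accumulate this slack over the at most $M$ iterations (combined with the matroid exchange argument against an optimal basis) to recover the $\tfrac12$ ratio with an additive $(M+1)\varepsilon$ error. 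Granting that proposition, the corollary follows by the plug-in above.
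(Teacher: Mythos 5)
Your proposal matches the paper's own (implicit) proof exactly: the corollary is obtained by feeding Algorithm~\ref{alg:matroid} to C-ETC and plugging $\alpha=\tfrac12$, $\delta=M+1$, $N=nkM$ from Proposition~\ref{prop:matroid:robust} into Theorem~\ref{thm:cetc}, collapsing $(M+1)^{2/3}(nkM)^{1/3}$ to $\mathcal{O}\bigl(n^{1/3}k^{1/3}M\bigr)$. One small caveat: direct substitution into the horizon condition gives $T\geq\max\{nkM,\,2\sqrt{2}\,nkM/(M+1)\}$ rather than the stated $T\geq nk\max\{1,\tfrac{3\sqrt{2}M}{2(M+1)}\}$ (whose constant appears to be carried over from the $\delta=\tfrac{4}{3}(M+1)$ cases), so your claim that the constants ``simplify to the stated form'' glosses over a discrepancy that is already present in the paper itself rather than introduced by you.
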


One special case of matroid constraint is \textit{Total Size} (TS) constraint \cite{ohsaka2015monotone}. In TS, there is a limited budget on the total number of element-type pairs that can be selected. This is also called uniform matroid in literature. In this case, the rank of the matroid is the total size budget $B$. We can immediately get an online algorithm as a result of Corollary~\ref{thm:monotone:mat}:

\begin{corollary}
    For an online monotone $k$-submodular maximization problem under a TS constraint, the expected cumulative $1/2$-regret of C-ETC is at most $\mathcal{O}\left(n^{\frac{1}{3}} k^{\frac{1}{3}} B T^\frac{2}{3}\log(T)^\frac{1}{3}\right)$ given $T\geq nk\max\{1,\frac{3\sqrt{2}B}{2(B+1)}\}$, using $N_{\mathcal{A}}=nkB$ as an upper bound of the number of function evaluations for Algorithm~\ref{alg:matroid}.  
\end{corollary}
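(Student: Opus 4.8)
The plan is to recognize the final statement as a direct specialization of \cref{thm:monotone:mat}: the Total Size (TS) constraint is precisely a uniform matroid, so the entire matroid machinery already established applies with the rank $M$ replaced by the budget $B$. No new robustness analysis is needed; the work is entirely in identifying the constraint as a matroid and tracking the substitution $M = B$ through the regret bound, the horizon condition, and the query-complexity bound.

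First I would verify that TS is a matroid, as already noted in \cref{sec:prob:state}. The feasible region is $\{\bx \in (k+1)^V : |\operatorname{supp}(\bx)| \leq B\}$, so taking ground set $E = V$ and $\mathcal{F} = \{S \subseteq V : |S| \leq B\}$, I would check the three axioms: (M1) holds since $|\emptyset| = 0 \leq B$; (M2) holds because any subset of a set of size at most $B$ also has size at most $B$; and (M3) holds because whenever $A, B' \in \mathcal{F}$ with $|A| < |B'| \le B$, any element of $B' \setminus A$ may be adjoined to $A$ while keeping its cardinality at most $B$. Thus $(V,\mathcal{F})$ is a matroid — the uniform matroid — and its bases are the subsets of cardinality $\min\{B, n\} = B$ (under the standard assumption $B \leq n$), so its rank is $M = B$.

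With TS identified as a uniform matroid of rank $M = B$, I would invoke \cref{prop:matroid:robust}, which establishes that \cref{alg:matroid} is a $(\tfrac{1}{2}, M+1, nkM)$-robust approximation under any matroid constraint. The independence oracle that \cref{alg:matroid} assumes is realized trivially here: a candidate solution is feasible exactly when its support size is at most $B$, which is checked by counting. Substituting $M = B$ gives robustness parameters $\delta = B+1$ and query bound $N_{\mathcal{A}} = nkB$, matching the claimed value. Feeding these into \cref{thm:cetc} (equivalently, reading off \cref{thm:monotone:mat} at $M=B$) yields expected cumulative $1/2$-regret $\mathcal{O}\!\left(\delta^{2/3} N^{1/3} T^{2/3}\log(T)^{1/3}\right) = \mathcal{O}\!\left(n^{1/3} k^{1/3} B\, T^{2/3}\log(T)^{1/3}\right)$, and the horizon requirement $T \geq nk\max\{1, \tfrac{3\sqrt{2}M}{2(M+1)}\}$ becomes $T \geq nk\max\{1, \tfrac{3\sqrt{2}B}{2(B+1)}\}$, exactly as stated.

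There is essentially no technical obstacle, since the result is a clean specialization rather than a new argument. The only point demanding care is confirming the exact correspondence between the TS feasibility condition $|\operatorname{supp}(\bx)| \leq B$ and membership in the uniform matroid, together with the observation that the assumed independence oracle is efficiently and correctly implementable in this case; once that equivalence is in hand, the approximation ratio, the $\delta$, the query count, and hence both the regret bound and the horizon condition follow immediately by setting $M = B$.
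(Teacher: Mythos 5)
Your proposal is correct and matches the paper's own reasoning: the paper likewise obtains this corollary by observing that a TS constraint is a uniform matroid of rank $M=B$ and then reading off \cref{thm:monotone:mat} (hence \cref{prop:matroid:robust} and \cref{thm:cetc}) with that substitution. Your explicit verification of the matroid axioms and the independence-oracle implementability is a small amount of extra detail the paper leaves implicit, but the argument is the same.
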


\begin{remark}
    In \cref{sec:is,sec:m:mat}, although marginal gains are not guaranteed to be non-negative, we prove that there is no need to modify the algorithms as in \cref{sec:uc-monotone}. This is due to the inherent design of the algorithms. In Algorithm~\ref{alg:uc-monotone}, probabilities are assigned to each item in proportion to their marginal gains for each type (Line 9), so it is necessary to ensure that the sum of the marginal gains is non-negative. However, \cref{alg:is,alg:matroid} select item-type pairs by directly comparing marginal gains, regardless of whether they are positive or negative. In cases where monotonicity does not hold due to noise, we use bounded error analysis to derive analogous properties. 
\end{remark}

\section{Non-Monotone Functions with  Matroid Constraints}\label{sec:nm:mat}

In \citet{sun2022maximize}, it is shown that Algorithm~\ref{alg:matroid} in appendix can achieve $1/3$ approximation ratio even the objective function is non-monotone. We show the following robustness result when Algorithm~\ref{alg:matroid} is fed with surrogate function $\hat{f}$.

\begin{proposition} \label{prop:matroid:robust2}
    Algorithm~\ref{alg:matroid} for maximizing a non-monotone $k$-submodular function under a matroid constraint is a $(\frac{1}{3}, \frac{4}{3}(M+1),nkM)$-robust approximation, where $M$ is the rank of the matroid.
\end{proposition}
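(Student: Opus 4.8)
The plan is to mirror the robustness proof of the monotone matroid case (Proposition~\ref{prop:matroid:robust}) together with the offline non-monotone analysis of \citet{sun2022maximize}, replacing every appeal to monotonicity ($\Delta_{e,i}f \ge 0$) by an appeal to pairwise monotonicity, and bookkeeping how the gap $|f-\hat f|\le\varepsilon$ propagates. Since Algorithm~\ref{alg:matroid} is deterministic, the expectation in Definition~\ref{def:robust} is vacuous, so it suffices to show $f(\bs)\ge\frac13 f(\bo)-\frac43(M+1)\varepsilon$, where $\bs$ is the greedy output produced using $\hat f$ and $\bo=\mathrm{OPT}$.

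First I would set up the greedy chain $\bs^{(0)}=\mathbf{0}\preceq\cdots\preceq\bs^{(M)}=\bs$, where at step $j$ greedy adds the available pair $(e^{(j)},i^{(j)})$ maximizing the surrogate marginal $\Delta_{e,i}\hat f(\bs^{(j-1)})$. Because the independence oracle is exact (noise affects only function values, not matroid membership), the strong basis-exchange theorem still yields a bijection $\pi$ between $\operatorname{supp}(\bs)$ and $\operatorname{supp}(\bo)$ compatible with the greedy order, exactly as in the noiseless argument. I would then introduce intermediate solutions $\bo^{(j)}$ and $\bt^{(j)}$ as in Section~\ref{sec:uc:robust} (agreeing with greedy on the first $j$ elements and with $\bo$ elsewhere), so that $\hat f(\bo)-\hat f(\bs)$ telescopes into a sum of surrogate marginal gains along the chain. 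The needed structural inequalities transfer in noisy form: orthant submodularity gives $\hat y_i^{(j)}\ge\hat a_i^{(j)}-4\varepsilon$ (cf.\ \eqref{eq:uc:orthant:main}), and pairwise monotonicity gives $\hat a_i^{(j)}+\hat a_{i'}^{(j)}\ge-4\varepsilon$ for $i\neq i'$ (cf.\ \eqref{eq:uc:pair2:main}), since each is a sum or difference of four $\hat f$-values that are each within $\varepsilon$ of $f$.

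Next, I would invoke per-step optimality: greedy maximizes the surrogate marginal over all available pairs, so the chosen marginal dominates the surrogate marginal of the pair $\pi$ assigns to step $j$, and because this comparison is internal to $\hat f$ it incurs no additional error. In the monotone case one would finish by using $\hat y\ge-2\varepsilon$; here, at each step where OPT's type $i^*=\bo(\pi(e^{(j)}))$ differs from greedy's choice, I would instead use the noisy pairwise-monotonicity bound to control the OPT contribution that is removed. This substitution is precisely what degrades the ratio from $1/2$ to $1/3$ and injects the extra $\varepsilon$ terms. Summing the telescoped inequality over the $M$ iterations would yield $\hat f(\bo)\le 3\,\hat f(\bs)+cM\varepsilon$ for an explicit small constant $c$.

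Finally I would convert back to $f$ at the endpoints via $|f-\hat f|\le\varepsilon$ and collect all accumulated $\varepsilon$-terms; careful accounting of how many times each of the at most $M$ steps (together with the endpoint conversions) contributes $4\varepsilon$ should aggregate to the stated $\frac43(M+1)\varepsilon$ slack, giving $f(\bs)\ge\frac13 f(\bo)-\frac43(M+1)\varepsilon$. The query bound $nkM$ then follows because greedy runs at most $M$ iterations, each scanning at most $nk$ element-type pairs. The hard part will be the non-monotone accounting: every place where the original (and the monotone-robust) proof exploited the sign of a marginal gain must be replaced by a pairwise-monotonicity estimate carrying its own $O(\varepsilon)$ slack, and one must verify that the constants aggregate to exactly $\frac43(M+1)$. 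By contrast, the matroid-exchange structure carries over unchanged, since the independence oracle is assumed exact and only the function evaluations are noisy.
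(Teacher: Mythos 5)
Your proposal follows essentially the same route as the paper's proof: reuse the basis-exchange construction of the interpolating sequence $\bo^{(0)},\bo^{(1/2)},\ldots,\bo^{(M)}$ from the monotone matroid case verbatim (valid since only function values, not independence queries, are noisy), replace the single appeal to monotonicity by a pairwise-monotonicity bound involving an auxiliary type $h^{(j)}\neq i^{(j)}$ together with orthant submodularity and the greedy rule --- which injects a second copy of the surrogate greedy gain and is exactly what degrades $1/2$ to $1/3$ --- and then telescope, accumulating $4\varepsilon$ per iteration plus the endpoint conversion to reach $\frac{4}{3}(M+1)\varepsilon$. The only differences are cosmetic bookkeeping choices (the paper applies pairwise monotonicity to the exact $f$ at $\bo^{(j-1/2)}$ rather than to noisy $\hat f$-marginals, and telescopes the OPT chain in $f$ while telescoping the greedy chain in $\hat f$), so the proposal is correct in approach and matches the paper's argument.
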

The proof is in \cref{sec:prf:matroid2}.
We use the offline-to-online transformation framework proposed in \citet{nie23framework} to adapt Algorithm~\ref{alg:matroid}: 

\begin{corollary} 
    For an online non-monotone $k$-submodular maximization problem under a matroid constraint, the expected cumulative $1/3$-regret of C-ETC is at most $\mathcal{O}\left(n^{\frac{1}{3}} k^{\frac{1}{3}} M T^\frac{2}{3}\log(T)^\frac{1}{3}\right)$ given $T\geq nk\max\{1,\frac{3\sqrt{2}M}{2(M+1)}\}$.
\end{corollary}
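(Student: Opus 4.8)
The plan is to adapt the noiseless $1/3$-approximation argument of \citet{sun2022maximize} for Algorithm~\ref{alg:matroid} to the noisy regime, in the same spirit as the robustness analysis already carried out for the monotone matroid case (Proposition~\ref{prop:matroid:robust}). Since Algorithm~\ref{alg:matroid} is used unchanged, the query bound $N = nkM$ is immediate: the greedy loop runs at most $M$ times (the rank bounds the size of any independent set), and each iteration evaluates at most $nk$ marginal gains, each costing two oracle calls. The substance of the proof is therefore to show, via Definition~\ref{def:robust}, that $\mathbb{E}[f(\bs)] \geq \frac{1}{3} f(\bo) - \frac{4}{3}(M+1)\varepsilon$ when the algorithm is run on the surrogate $\hat{f}$.

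First I would establish the bounded-error analogs of the two structural properties. Writing each marginal gain of $\hat{f}$ as a difference of two function values, orthant submodularity transfers to $\Delta_{e,i}\hat{f}(\bx) \geq \Delta_{e,i}\hat{f}(\by) - 4\varepsilon$ for $\bx \preceq \by$, and pairwise monotonicity transfers to $\Delta_{e,i}\hat{f}(\bx) + \Delta_{e,j}\hat{f}(\bx) \geq -4\varepsilon$, exactly as in \eqref{eq:uc:pair2:main}--\eqref{eq:uc:orthant:main}. These are the only places where the noiseless argument invokes properties of $f$ that $\hat{f}$ need not satisfy. Next I would reuse the matroid-exchange bijection $\phi$ between the support of the greedy output $\bs$ and that of the optimal solution $\bo$ guaranteed by (M3), as in the monotone analysis underlying Proposition~\ref{prop:matroid:robust}. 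At the $t$-th iteration the algorithm selects the available element-type pair maximizing the marginal gain under $\hat{f}$; since the matched optimal element $\phi(e_t)$ with its optimal type is also available at that step, the greedy gain dominates this matched gain up to $\varepsilon$ (the comparison is between two $\hat{f}$-evaluations, each within $\varepsilon$ of $f$).

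The non-monotone case diverges from the monotone one precisely when the optimal type assigned to a matched element conflicts with the type greedy has already committed on that element: there the matched marginal gain may be negative, and I would invoke the noisy pairwise-monotonicity inequality to lower-bound it in terms of the (non-negative) greedy gain minus $4\varepsilon$. Summing the per-step inequalities telescopes the greedy gains to $\hat{f}(\bs)$, while the right-hand side reconstructs a constant multiple of $f(\bo)$; converting every $\hat{f}$ back to $f$ accrues an additive $\varepsilon$-error per step together with a boundary term, which I would bookkeep to land at exactly $\frac{4}{3}(M+1)\varepsilon$, finally taking expectations over the algorithm's randomness.

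The main obstacle I anticipate is the error accounting in this last step. Each greedy comparison and each application of noisy pairwise monotonicity injects an additive multiple of $\varepsilon$, and the non-monotone argument leans on pairwise monotonicity in more places (and with the weaker $1/3$ rather than $1/2$ leverage) than the monotone proof, so a naive accumulation would overshoot the target $\delta$. The delicacy is to mirror the exact algebraic grouping of marginal gains used by \citet{sun2022maximize} — separating the conflicting and non-conflicting matched elements — so that the $\varepsilon$-terms aggregate to the coefficient $\frac{4}{3}$ rather than a larger constant, with the extra $+1$ (i.e.\ $\frac{4}{3}(M+1)$ instead of $\frac{4}{3}M$) absorbing the boundary marginal gain taken against the empty partial solution.
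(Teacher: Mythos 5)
Your plan matches the paper's proof: Proposition~\ref{prop:matroid:robust2} establishes the $(\tfrac{1}{3}, \tfrac{4}{3}(M+1), nkM)$-robustness via the same matroid-exchange construction of hybrid solutions $\bo^{(j)}$, with pairwise monotonicity supplying the needed bound on the possibly-negative marginal gain (the paper applies it to the exact $f$ at $\bo^{(j-1/2)}$ through an auxiliary type $h^{(j)}\neq i^{(j)}$, rather than the noisy $-4\varepsilon$ version for $\hat f$ you propose, but either works), and the corollary then follows by plugging $\delta=\tfrac{4}{3}(M+1)$ and $N=nkM$ into Theorem~\ref{thm:cetc}. The per-step error is $4\varepsilon$, telescoping to $4M\varepsilon$ plus a boundary term, and dividing by $3$ lands exactly on the $\tfrac{4}{3}(M+1)$ you targeted.
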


\section{Evaluations}
In this section, we empirically evaluate the performance of our proposed methods in the context of online influence maximization with $k$ topics. This problem is formulated as a monotone $k$-submodular bandit problem (detailed below). As there are no directly comparable baselines in the literature, we select naive UCB (NaiveUCB) and random selection (Random) as benchmarks, with UCB implemented using the standard UCB1 algorithm.

\textbf{Online Influence Maximization with $k$ Topics:}
The problem involves a social network represented as a directed graph $G=(V, E)$, where $V$ is the set of nodes and $E$ is the set of edges. Each edge $(u, v) \in E$ has associated weights $p_{u,v}^i$, $i \in [k]$, where $p_{u,v}^i$ denotes the influence strength from user $u$ to $v$ on topic $i$. The goal is to maximize the number of users influenced by one or more topics. We adopt the \textit{$k$-topic independent cascade} ($k$-IC) model from \citet{ohsaka2015monotone}, which generalizes the independent cascade model \cite{kempe2003maximizing}. 

Specifically, the influence spread $\sigma:(k+1)^V \rightarrow \mathbb{R}_{+}$ in the $k$-IC model is defined as $\sigma(S)=\mathbb{E}\left[\left|\bigcup_{i \in[k]} A_i\left(U_i(S)\right)\right|\right]$, where $A_i\left(U_i(S)\right)$ is a random variable representing the set of influenced nodes in the diffusion process of topic $i$. As shown in \citet{ohsaka2015monotone}, $\sigma$ is monotone $k$-submodular. Given a directed graph $G=(V, E)$, edge probabilities $\{p_{u,v}^i \mid (u,v) \in E, i \in [k]\}$, the task is to select a seed set $S \in (k+1)^V$ that maximizes $\sigma(S)$, subject to some constraints (e.g., total size constraints).

\textbf{Experiment settings:} We use the ego-facebook network \cite{McAuley2012LearningTD}. After applying a community detection algorithm \cite{Blondel2008FastUO} to extract a subgraph, we convert it into a directed graph consisting of 350 users and 2,845 edges. Three sets of edge weights, corresponding to three topics ($k=3$), are generated: Uniform(0, 0.2), Normal(0.1, 0.05), and Exponential(10), with the Normal and Exponential distributions truncated to [0, 1]. We evaluate our algorithms under both TS and IS constraints. The TS budget is set to $B=6$, while for IS constraints, the budgets for the three topics are all 2. As greedy algorithms offer $1/2$ and $1/3$ approximations for TS and IS respectively \cite{ohsaka2015monotone}, we use an offline greedy algorithm to compute the $\alpha$-optimal value, with 100 diffusion simulations approximating the true function value. Since all cases are monotone, NaiveUCB and Random restrict their search space to actions that exhaust the budget.  Means and standard deviations are calculated over 10 independent runs.

\textbf{Results:} The instantaneous rewards over $10^{4}$ time steps is shown in \cref{fig:im}. Under both TS and IS constraints, the results can be summarized as follows. In the early stages (before 4,000), ETCG explores suboptimal actions that do not use the full budget, leading to worse initial instantaneous rewards. However, ETCG catches up in later stages and achieves lower cumulative regret over the entire time horizon. We note that NaiveUCB does UCB on all actions that exhaust budgets, which takes approximately ${60 \choose 6} \sim 5\times 10^{8}$ for TS and ${20 \choose 2}^3 \sim 7\times10^{7}$ for IS, respectively, to even explore each action once, resulting a bad performance.

\begin{figure}[t]
    \centering
    \begin{subfigure}[b]{0.48\linewidth}
        \centering
        \includegraphics[width=1.8in]{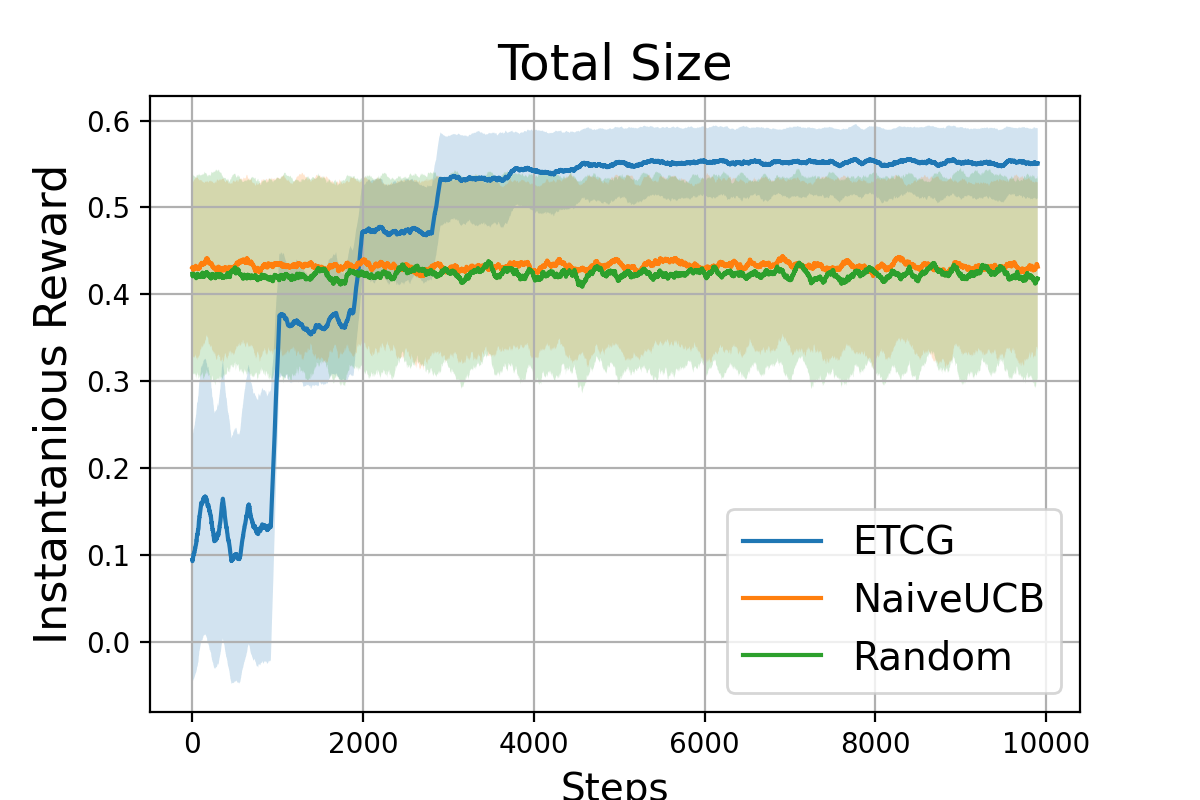}
    \end{subfigure}
    \begin{subfigure}[b]{0.48\linewidth}
        \centering
        \includegraphics[width=1.8in]{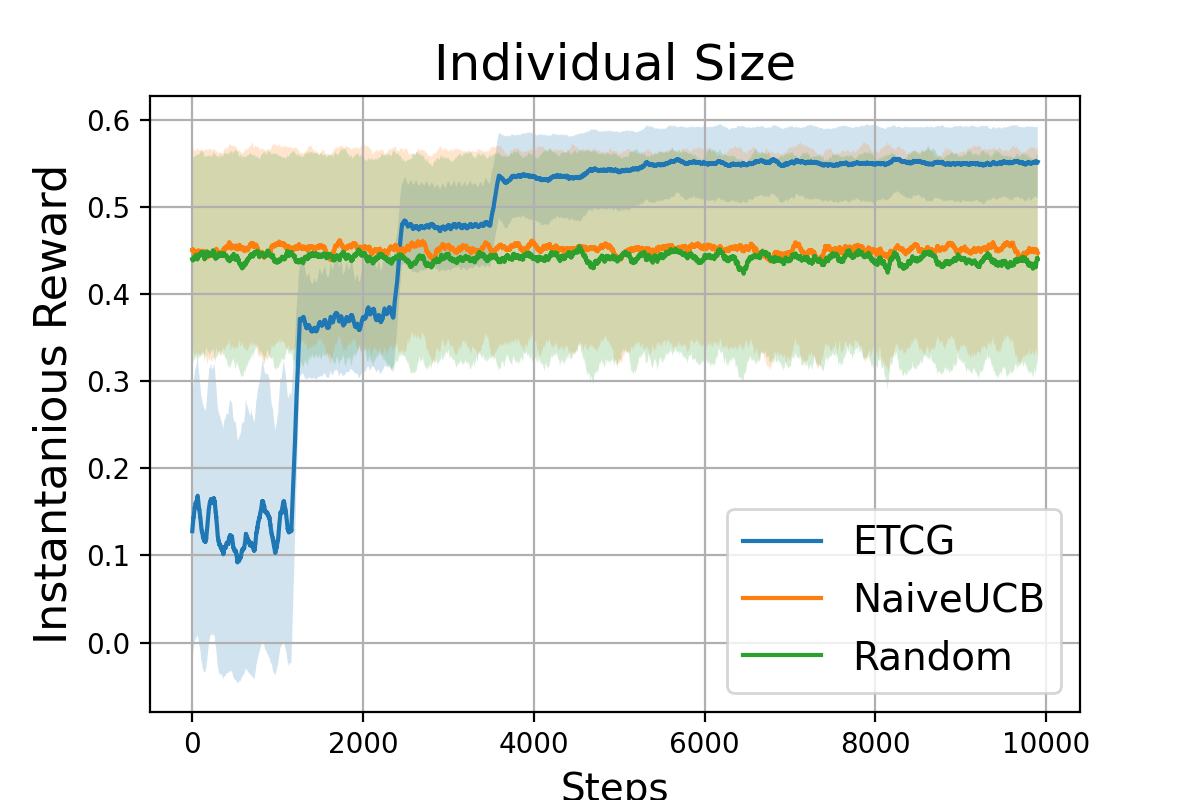}
    \end{subfigure}
    \caption{\normalsize Instantaneous Rewards on Influence Maximization experiments.} 
    \label{fig:im}
\end{figure}

\section{Conclusion}
In this paper, we investigate the problem of online $k$-submodular maximization problem under bandit feedback. Utilizing the framework proposed by \citet{nie23framework}, we propose CMAB algorithms through adapting various offline algorithms and analyzing their robustness. We obtain sublinear regret for online $k$-submodular maximization under bandit feedback under various settings, including monotone and non-monotone objectives without constraint, monotone objectives with individual size constraint, monotone and non-monotone objectives with matroid constraint. Numerical experiments on online influence maximization with $k$ topics were conducted to evaluate the effectiveness of our methods. 

\bibliography{refs}

\begin{thebibliography}{35}
\providecommand{\natexlab}[1]{#1}
\providecommand{\url}[1]{\texttt{#1}}
\expandafter\ifx\csname urlstyle\endcsname\relax
  \providecommand{\doi}[1]{doi: #1}\else
  \providecommand{\doi}{doi: \begingroup \urlstyle{rm}\Url}\fi

\bibitem[Blackwell(1956)]{Blackwell1956AnAO}
Blackwell, D.
\newblock An analog of the minimax theorem for vector payoffs.
\newblock \emph{Pacific Journal of Mathematics}, 6:\penalty0 1--8, 1956.

\bibitem[Blondel et~al.(2008)Blondel, Guillaume, Lambiotte, and Lefebvre]{Blondel2008FastUO}
Blondel, V.~D., Guillaume, J.-L., Lambiotte, R., and Lefebvre, E.
\newblock Fast unfolding of communities in large networks.
\newblock \emph{Journal of Statistical Mechanics: Theory and Experiment}, 2008:\penalty0 10008, 2008.

\bibitem[Chen et~al.(2022)Chen, Tang, and Wang]{Chen2022Monotone}
Chen, J., Tang, Z., and Wang, C.
\newblock Monotone k-submodular knapsack maximization: An analysis of the greedy+singleton algorithm.
\newblock In Ni, Q. and Wu, W. (eds.), \emph{Algorithmic Aspects in Information and Management}, pp.\  144--155, Cham, 2022. Springer International Publishing.
\newblock ISBN 978-3-031-16081-3.

\bibitem[Ene \& Nguyen(2022)Ene and Nguyen]{ene2022streaming}
Ene, A. and Nguyen, H.
\newblock Streaming algorithm for monotone k-submodular maximization with cardinality constraints.
\newblock In \emph{International Conference on Machine Learning}, pp.\  5944--5967. PMLR, 2022.

\bibitem[Fourati et~al.(2023)Fourati, Aggarwal, Quinn, and Alouini]{fourati2023randomized}
Fourati, F., Aggarwal, V., Quinn, C., and Alouini, M.-S.
\newblock Randomized greedy learning for non-monotone stochastic submodular maximization under full-bandit feedback.
\newblock In \emph{International Conference on Artificial Intelligence and Statistics}, pp.\  7455--7471. PMLR, 2023.

\bibitem[Fourati et~al.(2024{\natexlab{a}})Fourati, Alouini, and Aggarwal]{fouratifederated}
Fourati, F., Alouini, M.-S., and Aggarwal, V.
\newblock Federated combinatorial multi-agent multi-armed bandits.
\newblock In \emph{Forty-first International Conference on Machine Learning}, 2024{\natexlab{a}}.

\bibitem[Fourati et~al.(2024{\natexlab{b}})Fourati, Quinn, Alouini, and Aggarwal]{fourati2023combinatorial}
Fourati, F., Quinn, C.~J., Alouini, M.-S., and Aggarwal, V.
\newblock Combinatorial stochastic-greedy bandit.
\newblock In \emph{Proceedings of the AAAI Conference on Artificial Intelligence}, volume~38, pp.\  12052--12060, 2024{\natexlab{b}}.

\bibitem[Huber \& Kolmogorov(2012)Huber and Kolmogorov]{Huber2012TowardsMK}
Huber, A. and Kolmogorov, V.
\newblock Towards minimizing k-submodular functions.
\newblock In \emph{International Symposium on Combinatorial Optimization}, 2012.

\bibitem[Iwata et~al.(2013)Iwata, ichi Tanigawa, and Yoshida]{iwata2013bisub}
Iwata, S., ichi Tanigawa, S., and Yoshida, Y.
\newblock Bisubmodular function maximization and extensions.
\newblock Technical Report METR 2013–16, the University of Tokyo, September 2013.

\bibitem[Iwata et~al.(2015)Iwata, Tanigawa, and Yoshida]{Iwata2015ImprovedAA}
Iwata, S., Tanigawa, S.-i., and Yoshida, Y.
\newblock Improved approximation algorithms for k-submodular function maximization.
\newblock In \emph{ACM-SIAM Symposium on Discrete Algorithms}, 2015.

\bibitem[Kempe et~al.(2003)Kempe, Kleinberg, and Tardos]{kempe2003maximizing}
Kempe, D., Kleinberg, J., and Tardos, {\'E}.
\newblock Maximizing the spread of influence through a social network.
\newblock In \emph{Proceedings of the Ninth ACM SIGKDD International Conference on Knowledge Discovery and Data Mining}, pp.\  137--146, 2003.

\bibitem[Khuller et~al.(1999)Khuller, Moss, and Naor]{khuller1999budgeted}
Khuller, S., Moss, A., and Naor, J.~S.
\newblock The budgeted maximum coverage problem.
\newblock \emph{Information Processing Letters}, 70\penalty0 (1):\penalty0 39--45, 1999.

\bibitem[Leskovec \& Mcauley(2012)Leskovec and Mcauley]{McAuley2012LearningTD}
Leskovec, J. and Mcauley, J.
\newblock Learning to discover social circles in ego networks.
\newblock volume~25, 2012.

\bibitem[Lin et~al.(2015)Lin, Li, and Chen]{lin2015stochastic}
Lin, T., Li, J., and Chen, W.
\newblock Stochastic online greedy learning with semi-bandit feedbacks.
\newblock In \emph{Proceedings of the 29th International Conference on Neural Information Processing Systems}, pp.\  352--360, 2015.

\bibitem[Matsuoka \& Ohsaka(2021)Matsuoka and Ohsaka]{matsuoka2021maximization}
Matsuoka, T. and Ohsaka, N.
\newblock Maximization of monotone $ k $-submodular functions with bounded curvature and non-$ k $-submodular functions.
\newblock In \emph{Asian Conference on Machine Learning}, pp.\  1707--1722. PMLR, 2021.

\bibitem[Niazadeh et~al.(2021)Niazadeh, Golrezaei, Wang, Susan, and Badanidiyuru]{niazadeh2021online}
Niazadeh, R., Golrezaei, N., Wang, J.~R., Susan, F., and Badanidiyuru, A.
\newblock Online learning via offline greedy algorithms: Applications in market design and optimization.
\newblock In \emph{Proceedings of the 22nd ACM Conference on Economics and Computation}, pp.\  737--738, 2021.

\bibitem[Nie et~al.(2022)Nie, Agarwal, Umrawal, Aggarwal, and Quinn]{nie2022explore}
Nie, G., Agarwal, M., Umrawal, A.~K., Aggarwal, V., and Quinn, C.~J.
\newblock An explore-then-commit algorithm for submodular maximization under full-bandit feedback.
\newblock In \emph{Uncertainty in Artificial Intelligence}, pp.\  1541--1551. PMLR, 2022.

\bibitem[Nie et~al.(2023{\natexlab{a}})Nie, Nadew, Zhu, Aggarwal, and Quinn]{nie23framework}
Nie, G., Nadew, Y.~Y., Zhu, Y., Aggarwal, V., and Quinn, C.~J.
\newblock A framework for adapting offline algorithms to solve combinatorial multi-armed bandit problems with bandit feedback.
\newblock In \emph{Proceedings of the 40th International Conference on Machine Learning}, volume 202 of \emph{Proceedings of Machine Learning Research}, pp.\  26166--26198. PMLR, 23--29 Jul 2023{\natexlab{a}}.

\bibitem[Nie et~al.(2023{\natexlab{b}})Nie, Zhu, Nadew, Basu, Pavan, and Quinn]{nie23size}
Nie, G., Zhu, Y., Nadew, Y.~Y., Basu, S., Pavan, A., and Quinn, C.~J.
\newblock Size-constrained k-submodular maximization in near-linear time.
\newblock In Evans, R.~J. and Shpitser, I. (eds.), \emph{Proceedings of the Thirty-Ninth Conference on Uncertainty in Artificial Intelligence}, volume 216 of \emph{Proceedings of Machine Learning Research}, pp.\  1545--1554. PMLR, 31 Jul--04 Aug 2023{\natexlab{b}}.

\bibitem[Ohsaka \& Yoshida(2015)Ohsaka and Yoshida]{ohsaka2015monotone}
Ohsaka, N. and Yoshida, Y.
\newblock Monotone k-submodular function maximization with size constraints.
\newblock In Cortes, C., Lawrence, N., Lee, D., Sugiyama, M., and Garnett, R. (eds.), \emph{Advances in Neural Information Processing Systems}, volume~28. Curran Associates, Inc., 2015.

\bibitem[Oshima(2021)]{oshima2021improved}
Oshima, H.
\newblock Improved randomized algorithm for k-submodular function maximization.
\newblock \emph{SIAM Journal on Discrete Mathematics}, 35\penalty0 (1):\penalty0 1--22, 2021.

\bibitem[Pham et~al.(2021)Pham, Vu, Ha, and Nguyen]{pham2021streaming}
Pham, C.~V., Vu, Q.~C., Ha, D.~K., and Nguyen, T.~T.
\newblock Streaming algorithms for budgeted k-submodular maximization problem.
\newblock In \emph{Computational Data and Social Networks: 10th International Conference, Proceedings}, pp.\  27--38. Springer, 2021.

\bibitem[Sakaue(2017)]{sakaue2017maximizing}
Sakaue, S.
\newblock On maximizing a monotone k-submodular function subject to a matroid constraint.
\newblock \emph{Discrete Optimization}, 23:\penalty0 105--113, 2017.
\newblock ISSN 1572-5286.

\bibitem[Soma(2019)]{soma2019noregret}
Soma, T.
\newblock No-regret algorithms for online $k$-submodular maximization.
\newblock In Chaudhuri, K. and Sugiyama, M. (eds.), \emph{Proceedings of the Twenty-Second International Conference on Artificial Intelligence and Statistics}, volume~89 of \emph{Proceedings of Machine Learning Research}, pp.\  1205--1214. PMLR, 16--18 Apr 2019.

\bibitem[Streeter \& Golovin(2008)Streeter and Golovin]{streeter2008online}
Streeter, M. and Golovin, D.
\newblock An online algorithm for maximizing submodular functions.
\newblock In \emph{Proceedings of the 21st International Conference on Neural Information Processing Systems}, pp.\  1577–1584, 2008.

\bibitem[Sun et~al.(2022)Sun, Liu, and Li]{sun2022maximize}
Sun, Y., Liu, Y., and Li, M.
\newblock Maximization of $k$-submodular function with a matroid constraint.
\newblock In Du, D.-Z., Du, D., Wu, C., and Xu, D. (eds.), \emph{Theory and Applications of Models of Computation}, pp.\  1--10, Cham, 2022. Springer International Publishing.

\bibitem[Sviridenko(2004)]{Sviridenko2004ANO}
Sviridenko, M.
\newblock A note on maximizing a submodular set function subject to a knapsack constraint.
\newblock \emph{Operations Research Letters}, 32:\penalty0 41--43, 2004.

\bibitem[Tajdini et~al.(2023)Tajdini, Jain, and Jamieson]{tajdini2023minimax}
Tajdini, A., Jain, L., and Jamieson, K.
\newblock Minimax optimal submodular optimization with bandit feedback, 2023.

\bibitem[Takemori et~al.(2020)Takemori, Sato, Sonoda, Singh, and Ohkuma]{takemori2020submodular}
Takemori, S., Sato, M., Sonoda, T., Singh, J., and Ohkuma, T.
\newblock Submodular bandit problem under multiple constraints.
\newblock In \emph{Conference on Uncertainty in Artificial Intelligence}, pp.\  191--200. PMLR, 2020.

\bibitem[Tang et~al.(2022)Tang, Wang, and Chan]{Tang2021OnMaximizing}
Tang, Z., Wang, C., and Chan, H.
\newblock On maximizing a monotone k-submodular function under a knapsack constraint.
\newblock \emph{Operations Research Letters}, 50:\penalty0 28--31, 2022.

\bibitem[Ward \& Živn{\'y}(2014)Ward and Živn{\'y}]{Ward2014MaximizingKF}
Ward, J. and Živn{\'y}, S.
\newblock Maximizing k-submodular functions and beyond.
\newblock \emph{ACM Transactions on Algorithms}, 12:\penalty0 1 -- 26, 2014.

\bibitem[Xiao et~al.(2023)Xiao, Liu, Zhou, and Li]{xiao2023approximation}
Xiao, H., Liu, Q., Zhou, Y., and Li, M.
\newblock Approximation algorithms for $k$-submodular maximization subject to a knapsack constraint, 2023.

\bibitem[Yu et~al.(2016)Yu, Fang, and Tao]{yu2016linear}
Yu, B., Fang, M., and Tao, D.
\newblock Linear submodular bandits with a knapsack constraint.
\newblock In \emph{Proceedings of the AAAI Conference on Artificial Intelligence}, volume~30, 2016.

\bibitem[Yu et~al.(2023)Yu, Li, Zhou, and Liu]{yu2023onmax}
Yu, K., Li, M., Zhou, Y., and Liu, Q.
\newblock On maximizing monotone or non-monotone k-submodular functions with the intersection of knapsack and matroid constraints.
\newblock \emph{J. Comb. Optim.}, 45\penalty0 (3), apr 2023.
\newblock ISSN 1382-6905.

\bibitem[Yue \& Guestrin(2011)Yue and Guestrin]{yue2011linear}
Yue, Y. and Guestrin, C.
\newblock Linear submodular bandits and their application to diversified retrieval.
\newblock \emph{Advances in Neural Information Processing Systems}, 24, 2011.

\end{thebibliography}
\bibliographystyle{icml2024}

\newpage
\appendix
\onecolumn
\section{Missing Proofs}
\subsection{Proof of Lemma~\ref{lem:uc:lem}} \label{sec:prf:uc:lem}
We restate the lemma here:
\begin{lemma}
    Let $c, d \in \mathbb{R}_{+}$. Conditioning on $\bs^{(j-1)}$, suppose that
    \begin{align}
        \sum_{i=1}^k\left(\hat{a}_{i^*}^{(j)}-\hat{a}_i^{(j)}\right) p_i^{(j)} \leq c\left(\sum_{i=1}^k \hat{y}_i^{(j)} p_i^{(j)}\right) + d\varepsilon \label{eq:uc:condition}
    \end{align}
    holds for each $j$ with $1 \leq j \leq n$, where $i^*=\bo(e^{(j)})$. Then $\mathbb{E}[f(\bs)] \geq \frac{1}{1+c} f(\bo) -(2c+d+4)n\varepsilon$.
\end{lemma}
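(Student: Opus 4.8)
The plan is to reproduce the telescoping argument behind Lemma 2.2 of \citet{Iwata2015ImprovedAA}, but to carry the noise through every marginal-gain term. The two endpoints of the telescope are $\bo^{(0)}=\bo$ (since $\bs^{(0)}=\mathbf{0}$ leaves $\bo$ unchanged) and $\bo^{(n)}=\bs$ (since $\operatorname{supp}(\bs)=V$, replacing all coordinates of $\bo$ by those of $\bs$ yields $\bs$). Hence $f(\bo)-\mathbb{E}[f(\bs)]=\sum_{j=1}^n \mathbb{E}[f(\bo^{(j-1)})-f(\bo^{(j)})]$, and it suffices to bound each summand by the corresponding expected gain of the algorithm's own solution.

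First I would establish two per-step conditional expectations, conditioning on $\bs^{(j-1)}$. The key structural fact is that $\bo^{(j-1)}$ and $\bo^{(j)}$ differ only in coordinate $e^{(j)}$ and share the common predecessor $\bt^{(j-1)}$ obtained by zeroing that coordinate; crucially $\bt^{(j-1)}$ is determined by $\bs^{(j-1)}$ alone and not by the random type chosen in iteration $j$, so each $a_i^{(j)}$ is measurable with respect to $\bs^{(j-1)}$. Writing $f(\bo^{(j-1)})=f(\bt^{(j-1)})+a_{i^*}^{(j)}$, and $f(\bo^{(j)})=f(\bt^{(j-1)})+a_i^{(j)}$ on the event that the algorithm assigns type $i$ at $e^{(j)}$ (probability $p_i^{(j)}$), and using $\sum_i p_i^{(j)}=1$, gives $\mathbb{E}[f(\bo^{(j-1)})-f(\bo^{(j)})\mid\bs^{(j-1)}]=\sum_i (a_{i^*}^{(j)}-a_i^{(j)})p_i^{(j)}$. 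The same bookkeeping yields $\mathbb{E}[f(\bs^{(j)})-f(\bs^{(j-1)})\mid\bs^{(j-1)}]=\sum_i y_i^{(j)}p_i^{(j)}$.

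Next I would convert the hypothesis \eqref{eq:uc:condition}, stated for the surrogate $\hat f$, into a statement about $f$. Since each marginal gain of $\hat f$ is a difference of two function values each within $\varepsilon$ of $f$, we have $|\hat a_i^{(j)}-a_i^{(j)}|\le 2\varepsilon$ and $|\hat y_i^{(j)}-y_i^{(j)}|\le 2\varepsilon$. Consequently $\sum_i (a_{i^*}^{(j)}-a_i^{(j)})p_i^{(j)}\le \sum_i (\hat a_{i^*}^{(j)}-\hat a_i^{(j)})p_i^{(j)}+4\varepsilon$ and $c\sum_i \hat y_i^{(j)} p_i^{(j)}\le c\sum_i y_i^{(j)}p_i^{(j)}+2c\varepsilon$. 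Chaining these with \eqref{eq:uc:condition} produces the per-step inequality $\mathbb{E}[f(\bo^{(j-1)})-f(\bo^{(j)})\mid\bs^{(j-1)}]\le c\,\mathbb{E}[f(\bs^{(j)})-f(\bs^{(j-1)})\mid\bs^{(j-1)}]+(2c+d+4)\varepsilon$.

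Finally I would take total expectations (legitimate by the tower property and the measurability of $\bt^{(j-1)}$ noted above), sum over $j$, and telescope: the left side collapses to $f(\bo)-\mathbb{E}[f(\bs)]$ and the right side to $c\,\mathbb{E}[f(\bs)]$ after invoking $f(\mathbf 0)=0$, leaving $f(\bo)\le(1+c)\mathbb{E}[f(\bs)]+(2c+d+4)n\varepsilon$. Dividing by $1+c$ and bounding $\tfrac{1}{1+c}\le 1$ on the error term gives the claim. I expect the only genuinely delicate point to be the error accounting—verifying that the per-step overhead is exactly $(2c+d+4)\varepsilon$ and that $\bt^{(j-1)}$ being $\bs^{(j-1)}$-measurable justifies the conditioning—rather than any deep inequality, since the combinatorial skeleton is identical to the noiseless proof.
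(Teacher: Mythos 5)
Your proposal is correct and follows essentially the same route as the paper's proof: the same telescoping decomposition via $\bo^{(0)}=\bo$, $\bo^{(n)}=\bs$, the same per-step conditional identities through $\bt^{(j-1)}$, the same $2\varepsilon$-per-marginal-gain error conversion yielding the $(2c+d+4)\varepsilon$ overhead, and the same final rearrangement with $\tfrac{1}{1+c}\le 1$ absorbed into the error term. Your explicit remark that $\bt^{(j-1)}$ is $\bs^{(j-1)}$-measurable is a point the paper leaves implicit, but it is not a substantive difference.
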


\begin{proof}
    Fix $j \in [n]$.  
    Conditioning on $\bs^{(j-1)}$, element $e^{(j)}$ will be randomly assigned a type $i\in[k]$ following probabilities described in Lines 9-11 in the pseudocode.  We first calculate the conditional expected difference between $f(\bo^{(j-1)})$ and $f(\bo^{(j)})$.  
    Let $i^*$ denote  the type of $e^{(j)}$ in $\bo$ (and thus in $\bo^{(j-1)}$). By construction, since $\bt^{(j-1)}$ is obtained from $\bo^{(j)}$ by changing $\bo^{(j)}(e^{(j)})$ (which is the same as $\bs(e^{(j)})$ and thus is $i$ with probability $p_i^{(j)}$) with 0, we have 
    \begin{align}
        \mathbb{E}[f(\bo^{(j)})-f(\bt^{(j-1)})| \bs^{(j-1)}] = \sum_{i=1}^k \Delta_{e^{(j)}, i} f(\bt^{(j-1)}) p_i^{(j)} = \sum_{i=1}^k a_i^{(j)} p_i^{(j)}. \label{eq:uc:lem:construct1}
    \end{align}
    Also, $\bo^{(j-1)}$ can be obtained from $\bt^{(j-1)}$ by replacing $\bt^{(j-1)}(e^{(j)})$ (which is 0) with $\bo^{(j)}(e^{(j)})$ (which is $i^*$), we have
    \begin{align}
        f(\bo^{(j-1)})-f(\bt^{(j-1)}) = a_{i^*}^{(j)}.\label{eq:uc:lem:construct2}
    \end{align} Thus
    \begin{align}
        &\mathbb{E}[f(\bo^{(j-1)})-f(\bo^{(j)}) | \bs^{(j-1)}] \nonumber\\ 
        =&\mathbb{E}[f(\bo^{(j-1)})-f(\bt^{(j-1)})+f(\bt^{(j-1)})-f(\bo^{(j)}) | \bs^{(j-1)}] \nonumber\\
        =&\sum_{i=1}^k\left(a_{i^*}^{(j)}-a_i^{(j)}\right) p_i^{(j)} \tag{using \eqref{eq:uc:lem:construct1} and \eqref{eq:uc:lem:construct2}}\\
        \leq& \sum_{i=1}^k\left(\hat{a}_{i^*}^{(j)}-\hat{a}_i^{(j)} + 4\varepsilon\right)p_i^{(j)} \tag{definition of $\hat{f}$}\\
        =&  \sum_{i=1}^k\left(\hat{a}_{i^*}^{(j)}-\hat{a}_i^{(j)}\right)p_i^{(j)} + 4\varepsilon \label{eq:uc:lem:1}
    \end{align}
    where the last equality is due to $\sum_i p_i^{(j)}=1$. $\bs^{(j)}$ can be obtained from $\bs^{(j-1)}$ by replacing $\bs^{(j-1)}(e^{(j)})$ (which is 0) with $\bs^{(j)}(e^{(j)})$ (which is $i$ with probability $p_i^{(j)}$), we have
    \begin{align}
        &\mathbb{E}[f(\bs^{(j)})-f(\bs^{(j-1)}) | \bs^{(j-1)}] \nonumber\\
        =&\sum_{i=1}^k y_i^{(j)} p_i^{(j)} \tag{by construction}\\
        \geq& \sum_{i=1}^k (\hat{y}_i^{(j)} - 2\varepsilon)p_i^{(j)} \tag{definition of $\hat{f}$}\\
        =& \sum_{i=1}^k \hat{y}_i^{(j)} p_i^{(j)}-2\varepsilon, \label{eq:uc:lem:2}
    \end{align}
    where both expectations are taken over the randomness of the algorithm. Combining \eqref{eq:uc:lem:1}, \eqref{eq:uc:lem:2} and \eqref{eq:uc:condition}, we have 
    \begin{align}
        \mathbb{E}[f(\bo^{(j-1)})-f(\bo^{(j)}) | \bs^{(j-1)}] \leq c \mathbb{E}[f(\bs^{(j)})-f(\bs^{(j-1)}) | \bs^{(j-1)}] + (2c+d+4) \varepsilon. \label{eq:uc:lem:3}
    \end{align}
    Note that $\bo^{(0)}=\bo$ and $\bo^{(n)}=\bs$ by construction. Hence
    \begin{align}
        f(\bo)-\mathbb{E}[f(\bs)] &=\sum_{j=1}^n \mathbb{E}[f(\bo^{(j-1)})-f(\bo^{(j)})] \tag{by construction}\\
        &=\sum_{j=1}^n \mathbb{E}[\mathbb{E}[f(\bo^{(j-1)})-f(\bo^{(j)})| \bs^{(j-1)}] ] \nonumber\\
        &=\sum_{j=1}^n \mathbb{E}\left[ c \mathbb{E}[f(\bs^{(j)})-f(\bs^{(j-1)})| \bs^{(j-1)}] +  (2c+d+4)n \varepsilon] \right] \tag{using \eqref{eq:uc:lem:3}}\\
        & \leq c\left(\sum_{j=1}^n \mathbb{E}[f(\bs^{(j)})-f(\bs^{(j-1)})]\right) +  (2c+d+4)n \varepsilon \nonumber\\
        & =c(\mathbb{E}[f(\bs)]-f(\mathbf{0})) + (2c+d+4)n \varepsilon \nonumber\\
        &= c \mathbb{E}[f(\bs)] + (2c+d+4)n \varepsilon, \tag{$f(\boldsymbol{0})=0$}
    \end{align}
    and we get the statement.
\end{proof}

\setcounter{algorithm}{1}

\begin{algorithm}[t]
\caption{A randomized algorithm for $k$-submodular maximization without constraints \citep{Iwata2015ImprovedAA}}
\label{alg:uc}
\begin{algorithmic}[1]
    \STATE {\bfseries Input:} ground set $V$, value oracle access for $f$  
    \STATE {\bfseries Output:} a vector $\bs \in (k + 1)^V$.
    \STATE Initialize $\bs \leftarrow \mathbf{0}_n$.
    \FOR{$e \in V$} 
        \STATE $y_i\leftarrow \Delta_{e,i}f(\bs)$ for $i\in [k]$.
        \STATE w.l.o.g. assume $y_1\geq y_2\geq \ldots\geq y_k$.
        \STATE $i^+\leftarrow 
        \begin{cases}
            \text{max. } i \text{ such that } y_i>0 & \text{if } y_1>0, \\
            0 & \text{otherwise.}
        \end{cases}$
        \STATE Initialize $p \gets \mathbf{0}_k$.
        \STATE \textbf{if } $i^+\leq 1$,\ 
        $p_1 \gets 1$.
        \STATE \textbf{else if } $i^+= 2$, 
        for $i\in\{1,2\}$\ \ $p_i \gets \frac{y_i}{(y_1+y_2)}$.
        \STATE \textbf{else} $p_{i^+} \gets \left(\frac{1}{2}\right)^{i^+ -1}$  
        and for $i<i^+$,  $p_{i} \gets \left(\frac{1}{2}\right)^{i}$ 
        \STATE Choose $\bs(e) \sim \mathbb{P}(\bs(e)=i)=p_i$ for all $i\in[k]$.
    \ENDFOR
    \STATE {\bfseries Return} $\bs$.
\end{algorithmic}
\end{algorithm}

\subsection{Proof of Proposition~\ref{thm:nmuc:robust}} \label{sec:prf:uc}
We repeat some notations that we will use in our analysis that have already been introduced in the main text.

Let $\bo$ be an optimal solution with $\operatorname{supp}(\bo)=V$ (by \cref{prop:Iwata2015ImprovedAA} such a solution exists). 
Let $\bs$ be the output of the algorithm with $|\operatorname{supp}(\bs)|=n$ using surrogate function $\hat{f}$. 
We consider the $j$-th iteration of the algorithm, and let $e^{(j)}$ be the element of $V$ considered in the $j$-th iteration, $p_i^{(j)}$ be the probability that $i$-th type is chosen in the $j$-th iteration, and $\bs^{(j)}$ be the solution after the $i$-th iteration, where $\bs^{(0)}=\mathbf{0}$. Also for $0 \leq j \leq n$, let $\bo^{(j)}$ be the element in $\{0,1, \ldots, k\}^V$ obtained from $\bo$ by replacing the coordinates on $\operatorname{supp}(\bs^{(j)})$ with those of $\bs^{(j)}$, and for $1 \leq j \leq n$ let $\bt^{(j-1)}$ be the element in $\{0,1, \ldots, k\}^V$ obtained from $\bo^{(j)}$ by changing $\bo^{(j)}(e^{(j)})$ with 0.

The subsequent analyses will involve comparing different marginal gains, which we next introduce. 
For $i \in[k]$, let $y_i^{(j)}=\Delta_{e^{(j)}, i} f(\bs^{(j-1)})$, $\hat{y}_i^{(j)}=\Delta_{e^{(j)}, i} \hat{f}(\bs^{(j-1)})$ and let $a_i^{(j)}=\Delta_{e^{(j)}, i} f(\bt^{(j-1)})$, $\hat{a}_i^{(j)}=\Delta_{e^{(j)}, i} \hat{f}(\bt^{(j-1)})$. Due to pairwise monotonicity, we have $y_i^{(j)}+y_{i'}^{(j)} \geq 0$ and  $a_i^{(j)}+a_{i'}^{(j)} \geq 0$ for all $i, i' \in[k]$ with $i \neq i'$, and thus, while the surrogate function $\hat{f}$ does not necessarily have pairwise monotonicity, we have 

\begin{align}
    \hat{y}_i^{(j)}+\hat{y}_{i'}^{(j)} \geq -4 \varepsilon &, \label{eq:uc:pair1}\\
    \hat{a}_i^{(j)}+\hat{a}_{i'}^{(j)} \geq -4\varepsilon &. \label{eq:uc:pair2}
\end{align}
for all $i, i' \in[k]$ with $i \neq i'$ since both inequalities involve a sum of four function values of $\hat{f}$ which can be off by at most $\epsilon$ with respect to $f$. 
Also from $\boldsymbol{s}^{(j)} \preceq \boldsymbol{t}^{(j)}$ which holds by construction,  orthant submodularity implies $y_i^{(j)} \geq a_i^{(j)}$ for all $i \in[k]$ and thus,
\begin{align}
    \hat{y}_i^{(j)} \geq \hat{a}_i^{(j)} -4\varepsilon. \label{eq:uc:orthant}
\end{align}
for all $i \in[k]$. Without loss of generality, we assume $f(\boldsymbol{0})=0$.

Now we are ready to prove Proposition~\ref{thm:nmuc:robust}. 
\begin{proof}[Proof of Proposition~\ref{thm:nmuc:robust}] 
It is trivial to see the number of value oracle queries required is $nk$, as it iterates through all item-type pairs. The following proof will be dedicated to show the $\alpha$ and $\delta$ terms in Definition~\ref{def:robust}.

By Definition~\ref{def:robust}, we aim to show that the output $\bs$ of Algorithm~\ref{alg:uc} satisfies $\mathbb{E}[f(\bs)] \geq \frac{1}{2}f(\bo)-20n\varepsilon$ when the function evaluation is based on surrogate function $\hat{f}$. By Lemma~\ref{lem:uc:lem}, our goal is to show
    \begin{align}
        \sum_{i=1}^k (\hat{y}_i+\hat{a}_i) p_i \geq \hat{a}_{i^*} - 14\varepsilon, \label{eq:uc:non-mono}
    \end{align}
    which is equivalent to \eqref{eq:uc:condition:main} in main text with $c=1$ and $d=14$. For simplicity of the description we shall omit the superscript $(j)$ if it is clear from the context. Recall that $\hat{y}_i+\hat{y}_{i'} \geq -4\varepsilon$ and $\hat{a}_i+\hat{a}_{i'} \geq -4\varepsilon$ for $i, i' \in[k]$ with $i \neq i'$, and $\hat{y}_i \geq \hat{a}_i - 4\varepsilon$ for $i \in[k]$ (c.f. \eqref{eq:uc:pair1},\eqref{eq:uc:pair2} and \eqref{eq:uc:orthant}). We break up the analysis to three cases: $i^{+} \leq 1$, $i^{+} = 2$ and $i^{+} \geq 3$. 
    
    \paragraph{Case 1:} If $i^{+} \leq 1$, then by Line 9 in \cref{alg:uc} $p_1=1$ and $p_i=0$ for $i>1$.  So \eqref{eq:uc:non-mono} specializes to $\hat{a}_1+\hat{y}_1 \geq \hat{a}_{i^*}-14\varepsilon$.
    Since $\hat{y}_i+\hat{y}_{i'} \geq -4\varepsilon$ for $i, i' \in[k]$ and $\hat{y}'s$ are ranked in decreasing order, 
    we have $\hat{y}_1+\hat{y}_{i'} \geq -4\varepsilon$ and $\hat{y}_1 \geq \hat{y}_{i'}$ for $i' \in[k]$. 
    Summing these two inequalities we have $\hat{y}_1 \geq -2\varepsilon$. 
    \paragraph{Sub-case 1.a:} If $i^*=1$, $$\hat{a}_1+\hat{y}_1 = \hat{a}_{i^*}+\hat{y}_1 \geq \hat{a}_{i^*}-2\varepsilon\geq \hat{a}_{i^*}-14\varepsilon,$$
    showing the statement. 
    \paragraph{Sub-case 1.b:} If $i^* \neq 1$, from $i^+ \leq 1$ and \eqref{eq:uc:orthant}, $0 \geq \hat{y}_{i^*} \geq \hat{a}_{i^*}-4\varepsilon$. 
    We also have $\hat{a}_1+\hat{a}_{i^*} \geq -4\varepsilon$ by \eqref{eq:uc:pair2}. 
    Hence by summing these two inequalities we have $\hat{a}_1 \geq -8\varepsilon$. 
    Combining with $\hat{y}_1 \geq -2\varepsilon$ we have $$\hat{a}_1+\hat{y}_1 \geq -10\varepsilon \geq \hat{a}_{i^*}-14\varepsilon,$$
    showing the statement.
    \paragraph{Case 2:} If $i^{+}=2$, , then by Line 10 in \cref{alg:uc}, \eqref{eq:uc:non-mono} specializes to 
    \[(\hat{a}_1+\hat{y}_1) \hat{y}_1+(\hat{a}_2+\hat{y}_2) \hat{y}_2 \geq \hat{a}_{i^*}(\hat{y}_1+\hat{y}_2)-14\varepsilon (\hat{y}_1+\hat{y}_2).\]
    In this case, we have $\hat{y}_1, \hat{y}_2 > 0$ by definition of $i^+$. 
    \paragraph{Sub-case 2.a:} $i^*\leq 2$. Now $(\hat{a}_1+\hat{y}_1) \hat{y}_1+(\hat{a}_2+\hat{y}_2) \hat{y}_2=\hat{a}_1 \hat{y}_1+\hat{a}_2 \hat{y}_2+(\hat{y}_1-\hat{y}_2)^2+2 \hat{y}_1 \hat{y}_2 \geq \hat{a}_1 \hat{y}_1+\hat{a}_2 \hat{y}_2+2 \hat{y}_1 \hat{y}_2$. If $i^*=1$, then 
    \begin{align}
        &\hat{a}_1 \hat{y}_1+\hat{a}_2 \hat{y}_2+2 \hat{y}_1 \hat{y}_2 \nonumber\\
        &\geq \hat{a}_1 \hat{y}_1+\hat{a}_2 \hat{y}_2+2 (\hat{a}_1 - 4\varepsilon) \hat{y}_2 \tag{using \eqref{eq:uc:orthant}}\\
        &= \hat{a}_1(\hat{y}_1+\hat{y}_2)+(\hat{a}_2+\hat{a}_1) \hat{y}_2 - 8\varepsilon \hat{y}_2 \nonumber\\
        &\geq \hat{a}_1(\hat{y}_1+\hat{y}_2)- 12\varepsilon \hat{y}_2 \tag{using \eqref{eq:uc:pair2}}\\
        &\geq \hat{a}_1(\hat{y}_1+\hat{y}_2)- 12\varepsilon (\hat{y}_1+\hat{y}_2)\tag{$\hat{y}_1 \geq 0$}\\
        &= \hat{a}_{i^*}(\hat{y}_1+\hat{y}_2)-  12\varepsilon (\hat{y}_1+\hat{y}_2) \tag{$i^* = 1$}\\
        &\geq \hat{a}_{i^*}(\hat{y}_1+\hat{y}_2)-  14\varepsilon (\hat{y}_1+\hat{y}_2)
    \end{align} 
    as required. By a similar calculation the claim follows if $i^*=2$.
    \paragraph{Sub-case 2.b:} If $i^* \geq 3$, then $0 \geq \hat{y}_{i^*} \geq \hat{a}_{i^*}-4\varepsilon$ by definition of $i^+$ and \eqref{eq:uc:orthant}. 
    By \eqref{eq:uc:pair2}, we also have that $\hat{a}_1+\hat{a}_{i^*} \geq -4\varepsilon$ and $\hat{a}_2+\hat{a}_{i^*} \geq -4\varepsilon$.
    Thus,
    \begin{align}
        &(\hat{a}_1+\hat{y}_1) \hat{y}_1+(\hat{a}_2+\hat{y}_2) \hat{y}_2 \nonumber\\
        &= \hat{a}_1 \hat{y}_1 + \hat{y}_1^2 + \hat{a}_2 \hat{y}_2 + \hat{y}_2^2 \nonumber\\
        & \geq \hat{a}_1 \hat{y}_1 + \hat{a}_2 \hat{y}_2  \tag{$\hat{y}_1,\hat{y}_2 > 0$ in Case 2}\\
        &\geq (-\hat{a}_{i^*}-4\varepsilon)(\hat{y}_1 + \hat{y}_2) \nonumber\\
        &\geq (-\hat{a}_{i^*}-4\varepsilon)(\hat{y}_1 + \hat{y}_2) + 2(\hat{a}_{i^*}-4\varepsilon)(\hat{y}_1+\hat{y}_2) \tag{$\hat{y}_1,\hat{y}_2 \geq 0$ and $\hat{a}_{i^*}-4\varepsilon \leq 0$}\\
        &= \hat{a}_{i^*}(\hat{y}_1 + \hat{y}_2) - 12 \varepsilon(\hat{y}_1+\hat{y}_2) \nonumber\\
        &\geq \hat{a}_{i^*}(\hat{y}_1 + \hat{y}_2) - 14 \varepsilon(\hat{y}_1+\hat{y}_2)
    \end{align}
    as required.
    \paragraph{Case 3:} When $i^{+} \geq 3$, our goal is to show
    \begin{align}
        \sum_{i=1}^k (\hat{y}_i+\hat{a}_i) p_i \geq \hat{a}_{i^*} - 14\varepsilon, \label{eq:uc:non-mono:supp}
    \end{align} 
    which is equivalent to \eqref{eq:uc:condition:main} with $c=1$ and $d=14$. By the ordering of $\{\hat{y}_i\}_{i\in[k]}$ and \eqref{eq:uc:orthant:main}, for $i\leq i^*$ we have
    \begin{align}
        \hat{y}_i \geq \hat{y}_{i^*} \geq \hat{a}_{i^*}-4\varepsilon. \label{eq:uc:case3:1:supp}
    \end{align}
    Denote $r \in \argmin_{i \in[k]} \hat{a}_i $. 
    
    \textbf{Sub-case 3.a:} If $r=i^*$, we have $\sum_i \hat{a}_i p_i \geq \hat{a}_{i^*}(\sum_i p_i)=\hat{a}_{i^*}$. Since $\sum_i \hat{y}_i p_i \geq 0$ and since from Line 11 in \cref{alg:uc},   a positive  probability $p_i>0$ is only assigned to positive types $i$ with positive $\hat{y}_i>0$ values,
    \begin{align*}
        \sum_{i=1}^k (\hat{y}_i+\hat{a}_i) p_i 
        \geq 0+ \hat{a}_{i^*} 
        \geq  \hat{a}_{i^*} - 14\varepsilon.
    \end{align*}    
    Thus \eqref{eq:uc:non-mono:supp} follows. 

    \textbf{Sub-case 3.b:} If $r \neq i^*$ and $i^* \geq i^{+}$, since from Line 11 in \cref{alg:uc},   a positive  probability $p_i>0$ is only assigned to positive $\hat{y}$ values, we have that $\sum_i \hat{y}_i p_i=\sum_{i \leq i^{+}} \hat{y}_i p_i \geq \sum_{i \leq i^{+}} (\hat{a}_{i^*} -4\varepsilon)p_i=\hat{a}_{i^*}-4\varepsilon$ by \eqref{eq:uc:case3:1:supp}. 
    When $\hat{a}_r\geq 0$, since $\hat{a}_i\geq \hat{a}_r$ for any $i\neq r$ by definition of $r$, we have $\sum_i \hat{a}_i p_i\geq 0$. 
    When $\hat{a}_r < 0$,
    \begin{align}
        \sum_i \hat{a}_i p_i 
        &\geq \sum_{i \neq r}(-\hat{a}_r -4\varepsilon)  p_i+\hat{a}_r p_r\tag{by \eqref{eq:uc:pair2:main}}\\
        &= \hat{a}_r \left(p_r-\sum_{i \neq r}p_i \right) - 4\varepsilon \sum_{i \neq r}p_i \nonumber\\
        &\geq 0 - 4\varepsilon, \nonumber
    \end{align}
    where the last inequality follows from $\hat{a}_r<0$, $\sum_{i \neq r} p_i \geq p_r$ and $\sum_{i \neq r}p_i\leq 1$, and $\sum_{i \neq r} p_i \geq p_r$ follows from Line 11 in \cref{alg:uc} that the largest single probability assigned is $\frac{1}{2}$.  
    Thus,
        \begin{align*} 
        \sum_{i=1}^k (\hat{y}_i+\hat{a}_i) p_i
        \geq (\hat{a}_{i^*} - 4\varepsilon) + (- 4\varepsilon) \geq  \hat{a}_{i^*} - 14\varepsilon.
    \end{align*} 
    Therefore \eqref{eq:uc:non-mono:supp} holds.
    
    \textbf{Sub-case 3.c:} If $r \neq i^*$ and $i^*<i^{+}$,  we have
    \begin{align}
        \sum_{i=1}^k (\hat{y}_i+\hat{a}_i) p_i &= \sum_{i=1}^k \hat{y}_ip_i + \sum_{i=1}^k \hat{a}_ip_i \nonumber\\
        & \geq \sum_{i \leq i^*} (\hat{a}_{i^*} -4\varepsilon) p_i+\sum_{i>i^*} (\hat{a}_i-4\varepsilon) p_i+\sum_{i=1}^k \hat{a}_ip_i  \tag{using \eqref{eq:uc:case3:1:supp} and \eqref{eq:uc:orthant:main}}\\
        &=\sum_{i \leq i^*} \hat{a}_{i^*} p_i+\sum_{i>i^*} \hat{a}_i p_i+\sum_{i=1}^k \hat{a}_i p_i - 4\varepsilon \nonumber\\
        & =\left(\sum_{i<i^*} \hat{a}_{i^*} p_i+2 \hat{a}_{i^*} p_{i^*}\right) +\left(\sum_{i>i^*} \hat{a}_i p_i+\sum_{i \neq r, i^*} \hat{a}_i p_i+ \hat{a}_r p_r\right) - 4 \varepsilon. \label{eq:uc:case3:2:supp}
    \end{align}
    The first term equals $\hat{a}_{i^*}$ by construction of $p_i$'s and since $i^*<i^{+}$.
    Hence it suffices to show that the second term of \eqref{eq:uc:case3:2:supp} is greater than or equal to $-10\varepsilon$. 

    \textbf{Subsub-case 3.c.I:}   
    First we consider $\hat{a}_r \geq 0$. 
    By definition of $r$, for all $i\in[k]$, $\hat{a}_i \geq \hat{a}_r \geq 0$. The second term of \eqref{eq:uc:case3:2:supp} can then be bounded as 
    \begin{align}
        \sum_{i>i^*} \hat{a}_i p_i+\sum_{i \neq r, i^*} \hat{a}_i p_i+ \hat{a}_r p_r \geq \hat{a}_r \left(p_r + \sum_{i>i^*} p_i + \sum_{i \neq r, i^*} p_i\right) \geq 0 \geq -10\varepsilon. \label{eq:uc:case3:5:supp}
    \end{align}

    \textbf{Subsub-case 3.c.II:} 
    Second we consider $\hat{a}_r<0$. Since $i^*<i^{+}$, we have
    \begin{align}
        \sum_{i>i^*} p_i+\sum_{i \neq r, i^*} p_i &= \sum_{i^*+1\leq i\leq i^+ -1} p_i + p_{i^+}+\sum_{i \neq r, i^*} p_i \nonumber\\
        &=\sum_{i=i^*+1}^{i^+-1} \left(\frac{1}{2}\right)^{i} + \left(\frac{1}{2}\right)^{i^+-1} +\sum_{i \neq r, i^*} p_i \tag{by construction of $p_i$'s}\\
        &=\left(\frac{1}{2}\right)^{i^*}+\sum_{i \neq r, i^*} p_i \tag{geometric sum}\\
        &=p_{i^*}+\sum_{i \neq r, i^*} p_i =1-p_r. \label{eq:uc:case3:3:supp}
    \end{align}

    Therefore, if $r<i^*$, we get
    \begin{align}
        &\sum_{i>i^*} \hat{a}_i p_i+\sum_{i \neq r, i^*} \hat{a}_i p_i+ \hat{a}_r p_r \nonumber\\
        &\geq \sum_{i>i^*} (-\hat{a}_r -4\varepsilon) p_i+\sum_{i \neq r, i^*} (-\hat{a}_r -4\varepsilon) p_i+ \hat{a}_r p_r \tag{using \eqref{eq:uc:pair2:main} }\\
        &= \hat{a}_r\left(p_r - \sum_{i>i^*} p_i - \sum_{i \neq r, i^*} p_i\right) - 4\varepsilon \left(\sum_{i>i^*} p_i + \sum_{i \neq r, i^*} p_i\right) \nonumber\\
        &=\hat{a}_r\left(p_r-\left(1-p_r\right)\right) - 4\varepsilon (1-p_r) \tag{using \eqref{eq:uc:case3:3:supp}}\\
        &= \hat{a}_r\left(2 p_r-1\right) -4\varepsilon (1-p_r) \nonumber\\
        &\geq 0 -4\varepsilon \geq -10\varepsilon. \tag{$\hat{a}_r < 0$ and $p_r\leq 1/2$}
    \end{align}
    If $r>i^*$. Then $p_r \leq 1 / 4$ by Line 11 in \cref{alg:uc} since $r \neq 1$ and $i^{+} \geq 3$. Hence, by $\hat{a}_r<0$,  the second term of \eqref{eq:uc:case3:2:supp} can then be bounded as
    \begin{align}
        &\sum_{i>i^*} \hat{a}_i p_i+\sum_{i \neq r, i^*} \hat{a}_i p_i+ \hat{a}_r p_r \nonumber\\
        &= \sum_{i>i^*, i \neq r} \hat{a}_i p_i+\sum_{i \neq r, i^*} \hat{a}_i p_i+2 \hat{a}_r p_r \nonumber\\
        &\geq \sum_{i>i^*, i \neq r} (-\hat{a}_r-4\varepsilon) p_i+\sum_{i \neq r, i^*} (-\hat{a}_r-4\varepsilon) p_i+2 \hat{a}_r p_r \tag{using \eqref{eq:uc:pair2:main}}\\
        &= \hat{a}_r\left(2p_r - \sum_{i>i^*,i\neq r} p_i - \sum_{i \neq r, i^*} p_i\right) - 4\varepsilon \left(\sum_{i>i^*, i\neq r} p_i + \sum_{i \neq r, i^*} p_i\right) \nonumber\\
        & = \hat{a}_r(2 p_r-(1-2 p_r)) - 4\varepsilon (1-2p_r) \tag{using \eqref{eq:uc:case3:3:supp}}\\
        &\geq 0 -4\varepsilon \geq -10\varepsilon. \tag{$\hat{a}_r < 0$ and $p_r\leq 1/4$}
    \end{align}
    Thus we conclude that when $\hat{a}_r < 0$, we have
    \begin{align}
        \sum_{i>i^*} \hat{a}_i p_i+\sum_{i \neq r, i^*} \hat{a}_i p_i+ \hat{a}_r p_r \geq -10\varepsilon. \label{eq:uc:case3:4:supp}
    \end{align}
    
    Combining \eqref{eq:uc:case3:4:supp} and \eqref{eq:uc:case3:5:supp}, we conclude Case 3 of the proof.
\end{proof}

\begin{algorithm}[t]
\caption{A randomized algorithm for monotone $k$-submodular maximization without constraints (adapted from \citep{Iwata2015ImprovedAA})}
\label{alg:uc-monotone}
\begin{algorithmic}[1]
    \STATE {\bfseries Input:} ground set $V$, value oracle access for a monotone $k$-submodular function $f: (k + 1)^V \rightarrow \mathbb{R}_+$.
    \STATE {\bfseries Output:} a vector $\bs \in (k + 1)^V$.
    \STATE Initialize $\bs \leftarrow \mathbf{0}_n$, $t \leftarrow k-1$.
    \FOR{$e \in V$} 
        \STATE $y_i\leftarrow \Delta_{e,i}f(\bs)$ for $i\in [k]$.
        \STATE $\beta \leftarrow \sum_{i=1}^k [y_i]_+^t$.
        \STATE Initialize $p \gets \mathbf{0}_k$.
        \IF{$\beta\neq 0$}
            \STATE $p_i \leftarrow \frac{y_i^t}{\beta}$ for $i\in [k]$.
        \ELSE
            \STATE $p_i\leftarrow 
            \begin{cases}
                1 & \text{if } i=1 \\
                0 & \text{otherwise.}
            \end{cases}$ 
            for $i\in [k]$.
        \ENDIF
        \STATE Choose $\bs(e) \sim \mathbb{P}(\bs(e)=i)=p_i$ for all $i\in[k]$.
    \ENDFOR
    \STATE {\bfseries Return} $\bs$.
\end{algorithmic}
\end{algorithm}

\subsection{Proof of Proposition~\ref{prop:uc:monotone:robust}} \label{sec:prf:uc:monotone}
We use the same setup and constructions as in \cref{sec:prf:uc} and further, since we are considering the monotone case, we have $a_i^{(j)} \geq 0$ and $y_i^{(j)} \geq 0$ for all $i\in [k]$ and $j\in [n]$. Thus, for the surrogate function $\hat{f}$, we have
\begin{align}
    \hat{a}_i^{(j)} \geq -2\varepsilon \text{ and } \hat{y}_i^{(j)} \geq -2\varepsilon. \label{eq:uc:monotone}
\end{align}
Now start to prove Proposition~\ref{prop:uc:monotone:robust}. Again, we shall omit the superscript $(j)$ if it is clear from the context. 

\begin{proof}[Proof of Proposition~\ref{prop:uc:monotone:robust}] 
It is trivial to see the number of value oracle queries required is $nk$, as it iterates through all item-type pairs. The following proof will be dedicated to showing the $\alpha$ and $\delta$ terms in Definition~\ref{def:robust}.

By Definition~\ref{def:robust}, we aim to show that the output $\bs$ of Algorithm~\ref{alg:uc-monotone} satisfies $\mathbb{E}[f(\bs)] \geq \frac{k}{2k-1}f(\bo)-(16-\frac{2}{k})n\varepsilon$ when the function evaluation is based on surrogate function $\hat{f}$. Note that in this setting, Lemma~\ref{lem:uc:lem} still holds. Thus, our goal is to show 
\begin{align}
    \sum_{i=1}^k (\hat{a}_{i^*}-\hat{a}_i) p_i \leq (1-\frac{1}{k}) \sum_{i=1}^k \hat{y}_i p_i + 10\varepsilon, \label{eq:uc:monotone:goal}
\end{align}
which is equivalent to \eqref{eq:uc:condition:main} in main text with $c = 1-\frac{1}{k}$ and $d=10$. Recall that $\hat{y}_i+\hat{y}_{i'} \geq -4\varepsilon$ and $\hat{a}_i+\hat{a}_{i'} \geq -4\varepsilon$ for $i, i' \in[k]$ with $i \neq i'$, and $\hat{y}_i \geq \hat{a}_i - 4\varepsilon$ for $i \in[k]$ (c.f. \eqref{eq:uc:pair1},\eqref{eq:uc:pair2} and \eqref{eq:uc:orthant}). We break up the analysis into two cases: $\beta=0$ and $\beta > 0$. 
\paragraph{Case 1:} When $\beta=0$, we have $\hat{y}_i \leq 0$ for all $i\in [k]$ due to the definition of $\beta$. In this case, \eqref{eq:uc:monotone:goal} specializes to  
\begin{align}
    \hat{a}_{i^*}-\hat{a}_1 \leq (1-\frac{1}{k})\hat{y}_1 + 10 \varepsilon.
\end{align}
We have
\begin{align}
    \hat{a}_{i^*}-\hat{a}_1 & \leq \hat{a}_{i^*} +2\varepsilon \tag{using $\hat{a}_1 \geq -2\varepsilon$} \\
    &\leq 6\varepsilon \tag{using $\hat{a}_{i^*} -4\varepsilon \leq \hat{y}_{i^*} \leq 0$} \\
    &\leq -\hat{y}_{i^*} + 6\varepsilon \tag{using $\hat{y}_{i^*} \leq 0$} \\
    &\leq \hat{y}_1 + 10\varepsilon \tag{using $\hat{y}_{i^*} + \hat{y}_1 \geq -4\varepsilon$}\\
    &\leq (1-\frac{1}{k})\hat{y}_1 + 10\varepsilon, \tag{using $\hat{y}_1\leq 0$}
\end{align}
showing the statement.
\paragraph{Case 2:} When $\beta > 0$, we have $\hat{y}_i > 0$ for some $i\in [k]$. In this case, \eqref{eq:uc:monotone:goal} specializes to 
\begin{align}
    \sum_{i=1}^k (\hat{a}_{i^*}-\hat{a}_i) \hat{y}_i^t \leq (1-\frac{1}{k}) \sum_{i=1}^k \hat{y}_i^{t+1} + 10\beta \varepsilon. \label{eq:uc:monotone:case2}
\end{align}
\paragraph{Sub-case 2.a:} If $k=1$, the LHS of \eqref{eq:uc:monotone:case2} is 0 since $i^*=1$. RHS is positive since both $\hat{y}_i$ and $\beta$ are positive. Thus, \eqref{eq:uc:monotone:case2} clearly holds. 

\paragraph{Sub-case 2.b:} If $k\geq 2$. Let $\gamma = (k-1)^{\frac{1}{t}} = t^{\frac{1}{t}}$. We have
\begin{align}
    \sum_{i\neq i^*}(\hat{a}_{i^*}-\hat{a}_i) \hat{y}_i^t &\leq \sum_{i\neq i^*}(\hat{a}_{i^*} + 2\varepsilon) \hat{y}_i^t \tag{using $\hat{a}_i \geq -2\varepsilon$}\\
    &\leq \sum_{i\neq i^*}(\hat{y}_{i^*} + 6\varepsilon) \hat{y}_i^t \tag{using $\hat{a}_i - 4\varepsilon \leq \hat{y}_i$}\\
    & = \sum_{i\neq i^*}\hat{y}_{i^*}\hat{y}_i^t + 6\varepsilon \sum_{i\neq i^*} \hat{y}_i^t \nonumber\\
    &\leq \sum_{i\neq i^*}\hat{y}_{i^*}\hat{y}_i^t + 6\varepsilon \sum_{i\neq i^*} [\hat{y}_i]_+^t \nonumber\\
    &= \sum_{i\neq i^*}\hat{y}_{i^*}\hat{y}_i^t + 6\varepsilon (\beta-[\hat{y}_{i^*}]_+^t) \tag{definition of $\beta$}\\
    &\leq \sum_{i\neq i^*}\hat{y}_{i^*}\hat{y}_i^t + 6\varepsilon \beta \nonumber\\
    &= \frac{1}{\gamma}\left(\gamma \hat{y}_{i^*} \sum_{i\neq i^*}\hat{y}_i^t\right)+ 6\varepsilon \beta\label{eq:uc:monotone:case2:noistar}
\end{align}
Using the weighted AM-GM inequality, $a^{\frac{1}{t+1}}b^{\frac{t}{t+1}} \leq \frac{1}{t+1}a+\frac{t}{t+1}b$ for all $a,b\geq 0$, and letting $a= (\gamma \hat{y}_{i^*} )^{t+1}$ and $b=(\sum_{i\neq i^*}\hat{y}_i^t)^{\frac{t+1}{t}}$, we deduce
\begin{align}
    \frac{1}{\gamma}\left(\gamma \hat{y}_{i^*} \sum_{i\neq i^*}\hat{y}_i^t\right) &\leq \frac{1}{\gamma}\left(\frac{1}{t+1}\left(\gamma \hat{y}_{i^*}\right)^{t+1}+\frac{t}{t+1}(\sum_{i \neq i^*} \hat{y}_i^t)^{\frac{t+1}{t}}\right)
\end{align}

From Holder's inequality, $\sum_i a_i \leq(\sum_i a_i^{\frac{t+1}{t}})^{\frac{t}{t+1}}(\sum_i 1^{t+1})^{\frac{1}{t+1}}$ holds for any non-negative $a_i$'s. By setting $a_i=\hat{y}_i^t$, we have
\begin{align}
    \frac{1}{\gamma}\left(\gamma \hat{y}_{i^*} \sum_{i\neq i^*}\hat{y}_i^t\right) & \leq \frac{1}{\gamma}\left(\frac{1}{t+1}\left(\gamma \hat{y}_{i^*}\right)^{t+1}+\frac{t}{t+1}\left(\sum_{i \neq i^*} \hat{y}_i^{t+1}\right) \cdot\left(\sum_{i \neq i^*} 1^{t+1}\right)^{\frac{1}{t}}\right) \nonumber\\
    & =\frac{1}{\gamma}\left(\frac{1}{t+1}\left(\gamma \hat{y}_{i^*}\right)^{t+1}+\frac{t(k-1)^{1 / t}}{t+1} \sum_{i \neq i^*} \hat{y}_i^{t+1}\right) \nonumber\\
    & =\frac{\gamma^t}{t+1} \sum_{i=1}^k \hat{y}_i^{t+1} \nonumber\\
    &=(1-\frac{1}{k}) \sum_{i=1}^k \hat{y}_i^{t+1}.\label{eq:uc:monotone:case2:term1}
\end{align}
Plug \eqref{eq:uc:monotone:case2:term1} back into \eqref{eq:uc:monotone:case2:noistar}, we get
\begin{align}
    \sum_{i\neq i^*}(\hat{a}_{i^*}-\hat{a}_i) \hat{y}_i^t &\leq (1-\frac{1}{k}) \sum_{i=1}^k \hat{y}_i^{t+1} + 6\varepsilon \beta \nonumber\\
    &\leq (1-\frac{1}{k}) \sum_{i=1}^k \hat{y}_i^{t+1} + 10\varepsilon \beta,
\end{align}
which is desired. 

\end{proof}

\begin{algorithm}[t]
\caption{Monotone $k$-submodular maximization with a IS constraint \citep{ohsaka2015monotone}}
\label{alg:is}
\begin{algorithmic}[1]
    \STATE {\bfseries Input:} value oracle access for a monotone $k$-submodular function $f: (k + 1)^V \rightarrow \mathbb{R}_+$, integers $B_1, \ldots, B_k \in \mathbb{Z}_+$.
    \STATE {\bfseries Output:} a vector $\bs$ satisfying $\operatorname{supp}_i(\bs) = B_i$ for each $i\in [k]$.
    \STATE Initialize $\bs \leftarrow \mathbf{0}$ and $B\leftarrow \sum_{i\in [k]} B_i$.
    \FOR{$j \in [B]$} 
        \STATE $I\leftarrow \{i\in [k]|\operatorname{supp}_i(\bs)< B_i\}$.
        \STATE $(e,i)\leftarrow \argmax_{e\in V\setminus \operatorname{supp}(\bs), i\in I} \Delta_{e,i}f(\bs)$.
        \STATE $\bs(e)\leftarrow i$.
    \ENDFOR
    \STATE {\bfseries Return} $\bs$.
\end{algorithmic}
\end{algorithm}

\subsection{Proof of Proposition~\ref{prop:is:robust}} \label{sec:prf:is}
\begin{proof}
    Let $\left(e^{(j)}, i^{(j)}\right) \in V \times[k]$ be the pair greedily chosen in this iteration using the surrogate function $\hat{f}$, and let $\bs^{(j)}$ be the solution after this iteration. Let $\bo$ be an optimal solution. For each $j \in[B]$, we define $S_i^{(j)}=\operatorname{supp}_i(\bo^{(j-1)}) \backslash \operatorname{supp}_i(\bs^{(j-1)})$. Following the construction of \citet{ohsaka2015monotone}, we iteratively define $\bo^{(0)}=\boldsymbol{o}, \bo^{(1)}, \ldots, \bo^{(B)}$ as follows.
    We have two cases to consider:

    \textbf{Case 1:} Suppose that $e^{(j)} \in S_{i'}^{(j)}$ for some $i' \neq i^{(j)}$. In this case, let $o^{(j)}$ be an arbitrary element in $S_{i^{(j)}}^{(j)}$. Then, we define $\boldsymbol{o}^{(j-1 / 2)}$ as the resulting vector obtained from $\boldsymbol{o}^{(j-1)}$ by assigning 0 to the $e^{(j)}$-th element and the $o^{(j)}$-th element, and then define $\boldsymbol{o}^{(j)}$ as the resulting vector obtained from $\boldsymbol{o}^{(j-1 / 2)}$ by assigning $i^{(j)}$ to the $e^{(j)}$-th element and $i'$ to the $o^{(j)}$-th element.
    
    \textbf{Case 2:} Suppose that $e^{(j)} \notin S_{i'}^{(j)}$ for any $i' \neq i^{(j)}$. In this case, we set $o^{(j)}=e^{(j)}$ if $e^{(j)} \in S_{i^{(j)}}^{(j)}$, and we set $o^{(j)}$ to be an arbitrary element in $S_{i^{(j)}}^{(j)}$ otherwise. Then, we define $\boldsymbol{o}^{(j-1 / 2)}$ as the resulting vector obtained from $\boldsymbol{o}^{(j-1)}$ by assigning 0 to the $o^{(j)}$-th element, and then define $\boldsymbol{o}^{(j)}$ as the resulting vector obtained from $\boldsymbol{o}^{(j-1 / 2)}$ by assigning $i^{(j)}$ to the $e^{(j)}$-th element.

    Intuitively, we want a transition from $\bo$ to $\bs$ by swapping in item-type pairs in $\bs$ one by one to $\bo$, in the order of the pair is chosen by the greedy algorithm. In case 1, if the considered item $e^{(j)}$ is already in $\bo^{(j-1)}$ but with another type $i' \neq i^{(j)}$, we obtain $\bo^{(j-1/2)}$ from $\bo^{(j-1)}$ by assign type 0 to item $e^{(j)}$, and obtain $\bo^{(j)}$ from $\bo^{(j-1/2)}$ by assign type $i^{(j)}$ to item $e^{(j)}$. In case 2, if there is no type $i' \neq i^{(j)}$ that is assigned to item $e^{(j)}$ in $\bo^{(j-1)}$, there are two sub-cases. In the first sub-case, if $(e^{(j)}, i^{(j)})$ is already in $\bo^{(j-1)}$, we set $\bo^{(j)}=\bo^{(j-1)}$ and obtain $\bo^{(j-1/2)}$ from $\bo^{(j-1)}$ by assign type 0 to item $e^{(j)}$. In the second sub-case, if $e^{(j)}$ is not assigned any type in $\bo^{(j-1)}$, we obtain $\bo^{(j-1/2)}$ from $\bo^{(j-1)}$ by assign type 0 to an arbitrary item with type $i^{(j)}$ in $\bo^{(j-1)}$, and obtain $\bo^{(j)}$ from $\bo^{(j-1/2)}$ by assign type $i^{(j)}$ to item $e^{(j)}$. Note that $\left|\operatorname{supp}_i(\bo^{(j)})\right|=B_i$ holds for every $i \in[k]$ and $j \in\{0,1, \ldots, B\}$, and $\bo^{(B)}=\bs^{(B)}=\bs$. Moreover, we have $\bs^{(j-1)} \preceq \bo^{(j-1 / 2)}$ for every $j \in[B]$. We further denote $\Delta_{e,i}\hat{f}(\bx) = \hat{f}(X_1, \ldots, X_{i-1}, X_i \cup\{e\}, X_{i+1}, \ldots, X_k)-\hat{f}(X_1, \ldots, X_k)$ as analogous to $\Delta_{e,i}f(\bx)$ under surrogate function $\hat{f}$.

    We consider $f(\bo^{(j-1)})-f(\bo^{(j)})$ in these two cases.

    \textbf{Case 1:} In this case, $e^{(j)} \in S_{i'}^{(j)}$ for some $i' \neq i^{(j)}$. By construction, we have $$f(\bo^{(j-1)})-f(\bo^{(j-1/2)})= \Delta_{o^{(j)}, i^{(j)}} f\left(\bo^{(j-1 / 2)}\right)+ \Delta_{e^{(j)}, i'} f\left(\bo^{(j-1 / 2)}\right)$$ and $$f(\bo^{(j)})-f(\bo^{(j-1/2)})= \Delta_{e^{(j)}, i^{(j)}} f\left(\bo^{(j-1 / 2)}\right)+ \Delta_{o^{(j)}, i'} f\left(\bo^{(j-1 / 2)}\right).$$
    Thus,
    \begin{align}
        f(\bo^{(j-1)})-f(\bo^{(j)}) &= \left(f(\bo^{(j-1)})-f(\bo^{(j-1/2)})\right) - \left(f(\bo^{(j)})-f(\bo^{(j-1/2)})\right) \nonumber\\
        &= \Delta_{o^{(j)}, i^{(j)}} f\left(\bo^{(j-1 / 2)}\right)+ \Delta_{e^{(j)}, i'} f(\bo^{(j-1 / 2)}) -  \Delta_{e^{(j)}, i^{(j)}} f(\bo^{(j-1 / 2)}) \nonumber\\
        &\qquad -\Delta_{o^{(j)}, i'} f(\bo^{(j-1 / 2)}) \tag{by construction} \\
        & \leq \Delta_{o^{(j)}, i^{(j)}} f(\bo^{(j-1 / 2)})+ \Delta_{e^{(j)}, i'} f(\bo^{(j-1 / 2)}) \tag{monotonicity} \\
        &\leq \Delta_{o^{(j)}, i^{(j)}} f(\bs^{(j-1)})+ \Delta_{e^{(j)}, i'} f(\bs^{(j-1)}) \tag{$\bs^{(j-1)} \preceq \bo^{(j-1 / 2)}$ and orthant submodularity}\\
        &\leq \Delta_{o^{(j)}, i^{(j)}} \hat{f}(\bs^{(j-1)})+ \Delta_{e^{(j)}, i'} \hat{f}(\bs^{(j-1)}) + 4\varepsilon \\
        &\leq 2\Delta_{e^{(j)}, i^{(j)}} \hat{f}(\bs^{(j-1)}) + 4\varepsilon\\ 
        &= 2(\hat{f}(\bs^{(j)})-\hat{f}(\bs^{(j-1)})) + 4\varepsilon. \label{eq:is2:case1}
    \end{align}
    where the last inequality follows from greedy rule and that both pair $\left(o^{(j)}, i^{(j)}\right)$ and $\left(e^{(j)}, i'\right)$ are available after we selected $\bs^{(j)}$, which we verify as follows: since $o^{(j)}\in S_{i^{(j)}}^{(j)}$, $o^{(j)}$ is still available; $i^{(j)}$ is the type of the next greedy pair, thus available; $e^{(j)}$ is the item of the next greedy pair, thus available; $e^{(j)} \in S_{i'}^{(j)}$ indicates $i'$ is still available. 

    \textbf{Case 2:} In this case, $e^{(j)} \notin S_{i'}^{(j)}$ for any $i' \neq i^{(j)}$. 
    \begin{align}
        f(\bo^{(j-1)})-f(\bo^{(j)}) &= \Delta_{o^{(j)}, i^{(j)}} f(\bo^{(j-1 / 2)}) -  \Delta_{e^{(j)}, i^{(j)}} f(\bo^{(j-1 / 2)})\tag{by construction} \\
        & \leq \Delta_{o^{(j)}, i^{(j)}} f(\bo^{(j-1 / 2)}) \tag{monotonicity} \\
        &\leq \Delta_{o^{(j)}, i^{(j)}} f(\bs^{(j-1)}) \tag{$\bs^{(j-1)} \preceq \bo^{(j-1 / 2)}$ and orthant submodularity}\\
        &\leq \Delta_{o^{(j)}, i^{(j)}} \hat{f}(\bs^{(j-1)})+2\varepsilon \\
        &\leq \Delta_{e^{(j)}, i^{(j)}} \hat{f}(\bs^{(j-1)}) + 2\varepsilon \tag{greedy rule and the pair $\left(o^{(j)}, i^{(j)}\right)$ is available}\\
        &\leq 2(\hat{f}(\bs^{(j)})-\hat{f}(\bs^{(j-1)})) + 4\varepsilon. \label{eq:is2:case2}
    \end{align}

    Combining \eqref{eq:is2:case1} and \eqref{eq:is2:case2} we have in both cases,
    \begin{align}
        f(\bo^{(j-1)})-f(\bo^{(j)})\leq 2(\hat{f}(\bs^{(j)})-\hat{f}(\bs^{(j-1)})) + 4\varepsilon. \label{eq:is2:diff}
    \end{align}
    
    Finally, 
    \begin{align}
        f(\bo)-f(\bs) &=\sum_{j=1}^B\left(f(\bo^{(j-1)})-f(\bo^{(j)})\right)\\
        &\leq 2\sum_{j=1}^B\left(\hat{f}(\bs^{(j)})-\hat{f}(\bs^{(j-1)})\right) + 4B\varepsilon \tag{using \eqref{eq:is2:diff}}\\
        &=2(\hat{f}(\bs)-\hat{f}(\mathbf{0})) + 4B\varepsilon \nonumber\\
        &\leq 2(f(\bs)-f(\mathbf{0})) + 4(B+1)\varepsilon \nonumber\\
        &\leq 2f(\bs)+ 4(B+1)\varepsilon \nonumber,
    \end{align}

    Rearranging, we get $f(\bs)\geq \frac{1}{3}f(\bo) - \frac{4}{3}(B+1)\varepsilon$.
    
\end{proof}

\begin{algorithm}[t]
\caption{$k$-submodular maximization with a matroid constraint \citep{sakaue2017maximizing}}
\label{alg:matroid}
\begin{algorithmic}[1]
    \STATE {\bfseries Input:} value oracle access for a $k$-submodular function $f: (k + 1)^V \rightarrow \mathbb{R}_+$, a matroid $(V,\mathcal{F})$,  given by the evaluation and independence oracle respectively.
    \STATE {\bfseries Output:} a vector $\bs$ satisfying $\operatorname{supp}(\bs) \in \mathcal{B}$.
    \STATE Initialize $\bs \leftarrow \mathbf{0}$.
    \FOR{$j \in [M]$} 
        \STATE Construct $E(\bs)$ using the independence oracle.
        \STATE $(e,i)\leftarrow \argmax_{e\in E(\bs), i\in [k]} \Delta_{e,i}f(\bs)$.
        \STATE $\bs(e)\leftarrow i$.
    \ENDFOR
    \STATE {\bfseries Return} $\bs$.
\end{algorithmic}
\end{algorithm}

\subsection{Proof of Proposition~\ref{prop:matroid:robust}} \label{sec:prf:matroid}

Denote $\mathcal{B}$ as the set of all bases associated with a matroid $(E,\mathcal{F})$. We first state two important lemmas that will be used in the proof. We refer to \citet{sakaue2017maximizing} for the detailed proofs.
\begin{lemma}\label{lem:matroid:lem1}\cite{sakaue2017maximizing}
    The size of any maximal optimal solution for maximizing a monotone $k$-submodular function under a matroid constraint is $M$.
\end{lemma}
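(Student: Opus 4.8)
The plan is to argue by contradiction, leveraging the matroid augmentation axiom (M3) together with the monotonicity of $f$; notably, $k$-submodularity itself is not needed here, only that all marginal gains are nonnegative. First I would record the easy direction: since any feasible $\bo$ has $\operatorname{supp}(\bo) \in \mathcal{F}$ independent, its support size satisfies $|\operatorname{supp}(\bo)| \leq M$, where $M$ is the rank. The content of the lemma is therefore the matching lower bound for a \emph{maximal} optimal solution, where maximality is understood with respect to the coordinatewise order $\preceq$ among optimal solutions.

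Next, suppose toward a contradiction that $\bo$ is a maximal optimal solution with $|\operatorname{supp}(\bo)| < M$. Since $M$ is the rank, there is a basis $B \in \mathcal{B}$ with $|B| = M > |\operatorname{supp}(\bo)|$. Applying (M3) to the independent sets $\operatorname{supp}(\bo)$ and $B$ yields an element $e \in B \setminus \operatorname{supp}(\bo)$ with $\operatorname{supp}(\bo) \cup \{e\} \in \mathcal{F}$, so the support admits a one-element feasible extension.

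I would then build $\bo'$ from $\bo$ by assigning an arbitrary type $i \in [k]$ to $e$ and leaving the other coordinates fixed, so that $\operatorname{supp}(\bo') = \operatorname{supp}(\bo) \cup \{e\} \in \mathcal{F}$ is feasible with $\operatorname{supp}(\bo) \subsetneq \operatorname{supp}(\bo')$. Monotonicity gives $f(\bo') - f(\bo) = \Delta_{e,i} f(\bo) \geq 0$, and optimality of $\bo$ forces $f(\bo') \leq f(\bo)$; hence $f(\bo') = f(\bo)$ and $\bo'$ is also optimal. But $\bo'$ strictly dominates $\bo$ in $\preceq$, contradicting the maximality of $\bo$. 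Therefore $|\operatorname{supp}(\bo)| = M$.

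The argument is short, and I do not expect a substantive obstacle: no estimates are required, and a single augmentation step already produces a strictly larger feasible optimal solution. The only points demanding care are fixing the precise meaning of ``maximal optimal solution'' (maximal in $\preceq$ among optima) and observing that monotonicity alone, rather than the full $k$-submodular structure, drives the contradiction.
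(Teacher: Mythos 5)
Your argument is correct: the rank bound gives $|\operatorname{supp}(\bo)|\leq M$, and the augmentation axiom (M3) plus monotonicity shows any optimal solution with smaller support can be strictly extended to another optimal solution, contradicting maximality with respect to $\preceq$. Note that the paper itself does not prove this lemma --- it states it with a citation and defers the proof to \citet{sakaue2017maximizing} --- and your proof is essentially the standard argument from that reference, so there is nothing to flag.
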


\begin{lemma}\label{lem:matroid:lem2}\cite{sakaue2017maximizing}
    Suppose $A\in \mathcal{F}$ and $B\in \mathcal{B}$ satisfy $A \subsetneq B$. Then, for any $e\notin A$ satisfying $A \cup \{e\} \in \mathcal{F}$, there exists $e'\in B\setminus A$ such that $\{B\setminus \{e'\}\}\cup \{e\} \in \mathcal{B}$.
\end{lemma}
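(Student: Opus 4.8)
The plan is to instantiate the hybrid/exchange argument of \citet{sakaue2017maximizing} while carrying an $O(\varepsilon)$ error through every step, and to account for that error tightly enough to land exactly at $\delta = M+1$ rather than a looser constant. The query count is immediate: Algorithm~\ref{alg:matroid} runs for $M$ iterations and in each evaluates $\Delta_{e,i}\hat{f}(\bs)$ over at most $n$ available elements and $k$ types, so it makes at most $nkM$ queries. By Lemma~\ref{lem:matroid:lem1} I may fix an optimal solution $\bo$ whose support is a basis (size $M$), and by Definition~\ref{def:robust} it then suffices to prove the deterministic bound $f(\bs) \geq \tfrac12 f(\bo) - (M+1)\varepsilon$ whenever the greedy selection uses a surrogate $\hat{f}$ with $|f - \hat{f}| \leq \varepsilon$ everywhere.

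Next I would build a sequence $\bo = \bo^{(0)}, \bo^{(1)}, \dots, \bo^{(M)} = \bs$ of feasible vectors, each with a basis support, maintaining the invariant that $\bo^{(j)}$ agrees with the greedy iterate $\bs^{(j)}$ on $\operatorname{supp}(\bs^{(j)})$, so that $\bs^{(j)} \preceq \bo^{(j)}$. Let $(e^{(j)}, i^{(j)})$ be the pair greedily added in iteration $j$. If $e^{(j)}$ already lies in $\operatorname{supp}(\bo^{(j-1)})$ with some type $i'$, I simply re-type it: define $\bo^{(j-1/2)}$ by zeroing $e^{(j)}$ and $\bo^{(j)}$ by assigning it type $i^{(j)}$, leaving the support (hence the basis property) unchanged. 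Otherwise I invoke the exchange Lemma~\ref{lem:matroid:lem2} with $A = \operatorname{supp}(\bs^{(j-1)})$ and $B = \operatorname{supp}(\bo^{(j-1)})$ (here $A \cup \{e^{(j)}\} \in \mathcal{F}$ since $e^{(j)} \in E(\bs^{(j-1)})$) to obtain $e'^{(j)} \in \operatorname{supp}(\bo^{(j-1)}) \setminus \operatorname{supp}(\bs^{(j-1)})$ with $(\operatorname{supp}(\bo^{(j-1)}) \setminus \{e'^{(j)}\}) \cup \{e^{(j)}\} \in \mathcal{B}$; letting $i'$ be the type of $e'^{(j)}$, I define $\bo^{(j-1/2)}$ by zeroing $e'^{(j)}$ and $\bo^{(j)}$ by assigning $e^{(j)}$ type $i^{(j)}$. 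In both cases the only coordinates touched lie outside $\operatorname{supp}(\bs^{(j-1)})$, so the invariant persists and $\bs^{(j-1)} \preceq \bo^{(j-1/2)}$.

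With this construction, for each $j$ I would bound $f(\bo^{(j-1)}) - f(\bo^{(j)})$ by the single marginal gain of the removed coordinate. Writing $\Delta^{-}$ for $\Delta_{e'^{(j)}, i'}$ (or $\Delta_{e^{(j)}, i'}$ in the re-typing case), dropping the nonnegative term $\Delta_{e^{(j)}, i^{(j)}} f(\bo^{(j-1/2)}) \geq 0$ via monotonicity of the \emph{true} $f$ gives $f(\bo^{(j-1)}) - f(\bo^{(j)}) \leq \Delta^{-} f(\bo^{(j-1/2)})$. Orthant submodularity together with $\bs^{(j-1)} \preceq \bo^{(j-1/2)}$ pushes this down to $\Delta^{-} f(\bs^{(j-1)})$; replacing $f$ by $\hat{f}$ (a swap of two evaluations) costs $2\varepsilon$; and the greedy rule on $\hat{f}$ — applicable because the removed coordinate is still available at $\bs^{(j-1)}$ by downward-closedness (M2) — replaces $\Delta^{-}$ by $\Delta_{e^{(j)}, i^{(j)}}$. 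Crucially, I would keep this final gain expressed in $\hat{f}$, yielding
\[ f(\bo^{(j-1)}) - f(\bo^{(j)}) \leq \hat{f}(\bs^{(j)}) - \hat{f}(\bs^{(j-1)}) + 2\varepsilon. \]
Summing over $j \in [M]$ telescopes the right side to $\hat{f}(\bs) - \hat{f}(\mathbf{0}) + 2M\varepsilon$, and a single final conversion $\hat{f}(\bs) - \hat{f}(\mathbf{0}) \leq f(\bs) + 2\varepsilon$ (using $f(\mathbf{0}) = 0$) gives $f(\bo) - f(\bs) \leq f(\bs) + (2M+2)\varepsilon$, i.e.\ $f(\bs) \geq \tfrac12 f(\bo) - (M+1)\varepsilon$, as required.

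The main obstacle is the combinatorial bookkeeping of the hybrid construction rather than the inequality chain: one must verify simultaneously that every $\operatorname{supp}(\bo^{(j)})$ remains a basis (handled by the two cases and Lemma~\ref{lem:matroid:lem2}), that the invariant $\bs^{(j)} \preceq \bo^{(j)}$ — needed for orthant submodularity — is preserved, and that the coordinate whose marginal gain is compared against the greedy choice is genuinely available at step $j$. The other subtlety is the $\varepsilon$-accounting: the factor $M+1$ (rather than, say, $2(M+1)$) arises precisely from keeping the telescoping sum in $\hat{f}$ so that only one $f \leftrightarrow \hat{f}$ conversion occurs per step plus one at the end, and from the fact that each swap touches a single removed element — in contrast to the individual-size proof of Proposition~\ref{prop:is:robust}, where two elements are swapped and the constant doubles.
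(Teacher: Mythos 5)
You have not proved the statement in question. \cref{lem:matroid:lem2} is a purely combinatorial fact about matroids: it involves only an independent set $A\in\mathcal{F}$, a basis $B\in\mathcal{B}$, and an exchange element $e'$, and says nothing about $f$, $\hat{f}$, greedy selection, or approximation error. Your proposal instead reconstructs the proof of \cref{prop:matroid:robust} --- the $(\tfrac{1}{2},\,M+1,\,nkM)$-robustness of \cref{alg:matroid} --- and, in the middle of that hybrid construction, you explicitly \emph{invoke} \cref{lem:matroid:lem2} as a black box to produce the element $e'^{(j)}$ to swap out. Relative to the assigned statement this is circular: the exchange lemma is used but never established, so the proposal contains no proof of it at all. (For what it is worth, your robustness argument does track the paper's own proof of \cref{prop:matroid:robust} closely, and the paper itself also does not prove the exchange lemma, deferring instead to \citet{sakaue2017maximizing}; but neither observation turns your text into a proof of \cref{lem:matroid:lem2}.)

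What a proof actually requires is only the matroid axioms, with no analytic machinery. Start from $I_0=A\cup\{e\}$, which is independent by hypothesis. As long as $|I_j|<|B|$, axiom (M3) applied to $I_j$ and $B$ yields some $x\in B\setminus I_j$ with $I_j\cup\{x\}\in\mathcal{F}$; iterating produces an independent set $I$ with $A\cup\{e\}\subseteq I\subseteq B\cup\{e\}$ and $|I|=|B|$. Since all bases have the same cardinality (a standard consequence of (M3)) and $I$ is independent of that cardinality, $I$ is itself a basis. If $e\in B$, take $e'=e$, which lies in $B\setminus A$ because $e\notin A$, and then $(B\setminus\{e'\})\cup\{e\}=B\in\mathcal{B}$. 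If $e\notin B$, then $|B\cup\{e\}|=|B|+1$, so $I$ omits exactly one element $e'$ of $B\cup\{e\}$; we have $e'\neq e$ since $e\in I$, and $e'\notin A$ since $A\subseteq I$, hence $e'\in B\setminus A$ and $I=(B\setminus\{e'\})\cup\{e\}\in\mathcal{B}$, as claimed. None of the ingredients of your write-up --- the surrogate $\hat{f}$, monotonicity, orthant submodularity, or the $\varepsilon$-accounting that yields $\delta=M+1$ --- plays any role in this statement, which concerns only the independence system $(V,\mathcal{F})$.
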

Now we prove Proposition~\ref{prop:matroid:robust}.

\begin{proof}
    Let $\left(e^{(j)}, i^{(j)}\right)$ be the pair chosen greedily at the $j$ th iteration using the surrogate function $\hat{f}$, and $\bs^{(j)}$ be the solution after the $j$th iteration. Let $s^{(0)}=\mathbf{0}$ and $s=s^{(M)}$, the output of Algorithm~\ref{alg:matroid}. Define $S^{(j)}:=\operatorname{supp}(\bs^{(j)})$ for each $j \in[M]$. Let $\bo$ be a maximal optimal solution and let $O^{(j)}:=\operatorname{supp}(\bo^{(j)})$ for each $j \in[M]$. Now we will construct a sequence of vectors $\bo^{(0)}=\bo, \bo^{(1)}, \ldots, \bo^{(M-1)}, \bo^{(M)}=\bs$ satisfying the following:
    \begin{align}
        \boldsymbol{s}^{(j)} \prec \boldsymbol{o}^{(j)} \quad \text { if } &j=0,1, \ldots, M-1, \quad \text { and } \quad \boldsymbol{s}^{(j)}=\boldsymbol{o}^{(j)}=\boldsymbol{s} \quad \text { if } j=M . \label{eq:monomatroid:cond1}\\
        O^{(j)} \in \mathcal{B} \quad \text { for } &j=0,1, \ldots, M. \label{eq:monomatroid:cond2}
    \end{align}
    
    More specifically, we see how to obtain $\bo^{(j)}$ from $\bo^{(j-1)}$ satisfying \eqref{eq:monomatroid:cond1} and \eqref{eq:monomatroid:cond2}. Note that $\bs^{(0)}=\mathbf{0}$ and $\bo^{(0)}=\bo$ satisfy \eqref{eq:monomatroid:cond1} and \eqref{eq:monomatroid:cond2}. We now describe how to obtain $\bo^{(j)}$ from $\bo^{(j-1)}$, assuming that $\bo^{(j-1)}$ satisfies
    $$
    \bs^{(j-1)} \prec \bo^{(j-1)} \text {, and } O^{(j-1)} \in \mathcal{B}.
    $$
    Since $\boldsymbol{s}^{(j-1)} \prec \boldsymbol{o}^{(j-1)}$ means $S^{(j-1)} \subsetneq O^{(j-1)}$, and $e^{(j)}$ is chosen to satisfy $S^{(j-1)} \cup\left\{e^{(j)}\right\} \in \mathcal{F}$, we see from Lemma~\ref{lem:matroid:lem2} that there exists $e' \in O^{(j-1)} \setminus S^{(j-1)}$ satisfying $\{O^{(j-1)} \setminus\{e'\}\} \cup \{e^{(j)}\} \in \mathcal{B}$. 
    We let $o^{(j)}=e'$ and define $\bo^{(j-1 / 2)}$ as the vector obtained by assigning 0 to the $o^{(j)}$ th element of $\boldsymbol{o}^{(j-1)}$. 
    We then define $\boldsymbol{o}^{(j)}$ as the vector obtained from $\boldsymbol{o}^{(j-1 / 2)}$ by assigning type $i^{(j)}$ to element $e^{(j)}$. 
    The vector thus constructed, $\bo^{(j)}$, satisfies
    $$O^{(j)}=\{O^{(j-1)} \setminus \{o^{(j)}\}\} \cup \{e^{(j)}\} \in \mathcal{B}.$$
    Furthermore, since $\boldsymbol{o}^{(j-1 / 2)}$ satisfies
    $$
    \bs^{(j-1)} \preceq \bo^{(j-1 / 2)},
    $$
    we have the following property for $\bo^{(j)}$ :
    $$
    \bs^{(j)} \prec \bo^{(j)} \quad \text { if } j=1, \ldots, M-1, \quad \text { and } \quad \bs^{(j)}=\bo^{(j)}=\bs \text { if } j=M,
    $$
    where the strictness of the inclusion for $j \in[M-1]$ can be easily confirmed from $|S^{(j)}|=j<M=|O^{(j)}|$. Thus, applying the above discussion for $j=1, \ldots, M$ iteratively, we see the obtained sequence of vectors $\bo^{(0)}, \bo^{(1)}, \ldots, \bo^{(M)}$ satisfies \eqref{eq:monomatroid:cond1} and \eqref{eq:monomatroid:cond2}.
    We now prove the following inequality for $j \in[M]$ :
    \begin{align}
        \hat{f}(\bs^{(j)})-\hat{f}(\bs^{(j-1)}) \geq f(\bo^{(j-1)})-f(\bo^{(j)}) -2\varepsilon. \label{eq:mono:matroid:diff}
    \end{align}
    From $S^{(j-1)} \subsetneq O^{(j-1)}$ and $o^{(j)} \in O^{(j-1)} \backslash S^{(j-1)}$, we get $S^{(j-1)} \cup\left\{o^{(j)}\right\} \subseteq O^{(j-1)} \in \mathcal{B}$ for each $j \in[M]$. Thus we obtain the following inclusion from (M2) for each $j \in[M]$ :
    $$
    S^{(j-1)} \cup \{o^{(j)}\} \in \mathcal{F}.
    $$
    Hence, from $o^{(j)} \notin S^{(j-1)}$, we get $o^{(j)} \in E\left(\bs^{(j-1)}\right)$, where $E(\bs)$ is constructed by the independence oracle and contains the available elements given current solution $\bs$. Therefore, for the pair $\left(e^{(j)}, i^{(j)}\right)$, which is chosen greedily, we have
    \begin{align}
        \Delta_{e^{(j)}, i^{(j)}} \hat{f}(\bs^{(j-1)}) \geq \Delta_{o^{(j)}, \bo^{(j-1)}(o^{(j)})} \hat{f}(\bs^{(j-1)}) . \label{eq:mono:matroid:greedy}
    \end{align}
    Furthermore, since $\boldsymbol{s}^{(j-1)} \preceq \boldsymbol{o}^{(j-1 / 2)}$ holds, orthant submodularity implies
    \begin{align}
        \Delta_{o^{(j)}, \bo^{(j-1)}(o^{(j)})} f(\bs^{(j-1)}) \geq \Delta_{o^{(j)}, \bo^{(j-1)}(o^{(j)})} f(\bo^{(j-1 / 2)}).\label{eq:mono:matroid:submod}
    \end{align}
    Using \eqref{eq:mono:matroid:greedy} and \eqref{eq:mono:matroid:submod}, we have:
    \begin{align}
        f(\bo^{(j-1)})-f(\bo^{(j)}) &= \Delta_{o^{(j)}, \bo^{(j-1)}(o^{(j)})} f(\bo^{(j-1 / 2)})-\Delta_{e^{(j)}, i^{(j)}} f(\bo^{(j-1 / 2)}) \tag{by construction}\\
        &\leq \Delta_{o^{(j)}, \bo^{(j-1)}(o^{(j)})} f(\bo^{(j-1 / 2)}) \tag{monotonicity} \\
        &\leq \Delta_{o^{(j)}, \bo^{(j-1)}(o^{(j)})} f(\bs^{(j-1)}) \tag{using \eqref{eq:mono:matroid:submod}}\\
        &\leq \Delta_{o^{(j)}, \bo^{(j-1)}(o^{(j)})} \hat{f}(\bs^{(j-1)}) + 2\varepsilon \nonumber\\
        &\leq \Delta_{e^{(j)}, i^{(j)}} \hat{f}(\bs^{(j-1)}) + 2\varepsilon \tag{using \eqref{eq:mono:matroid:greedy}}\\
        &= \hat{f}(\bs^{(j)})-\hat{f}(\bs^{(j-1)}) + 2\varepsilon,
    \end{align}
    and we get \eqref{eq:mono:matroid:diff}. Finally, 
    \begin{align}
        f(\bo)-f(\bs) &=\sum_{j=1}^M\left(f(\bo^{(j-1)})-f(\bo^{(j)})\right)\\
        &\leq \sum_{j=1}^M\left(\hat{f}(\bs^{(j)})-\hat{f}(\bs^{(j-1)})\right) + 2M\varepsilon \tag{using \eqref{eq:mono:matroid:diff}}\\
        &=(\hat{f}(\bs)-\hat{f}(\mathbf{0})) + 2M\varepsilon \nonumber\\
        &\leq (f(\bs)-f(\mathbf{0})) + 2(M+1)\varepsilon \nonumber\\
        &\leq f(\bs)+ 2(M+1)\varepsilon \nonumber,
    \end{align}
    Rearranging, we get $f(\bs)\geq \frac{1}{2}f(\bo) - (M+1)\varepsilon$.
    
\end{proof}

\subsection{Proof of Proposition~\ref{prop:matroid:robust2}} \label{sec:prf:matroid2}
The following lemma guarantees that there exists an optimal solution with all items included (so we just need to decide on the types).

\begin{lemma}\label{lem:nmksm:lem1}\cite{sun2022maximize}
    The size of any maximal optimal solution for maximizing a non-monotone $k$-submodular function under a matroid constraint is still $M$.
\end{lemma}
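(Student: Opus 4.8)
The plan is to mirror the structure of the monotone analog (Lemma~\ref{lem:matroid:lem1}) while replacing the appeal to monotonicity by the pairwise monotonicity that every $k$-submodular function enjoys for $k \geq 2$. Recall that a maximal optimal solution is an optimal $\bo$ (feasible, i.e.\ $\operatorname{supp}(\bo) \in \mathcal{F}$, and attaining the maximum of $f$ over all feasible solutions) that is $\preceq$-maximal among optimal solutions. Since $\operatorname{supp}(\bo) \in \mathcal{F}$ and $M$ is the rank of the matroid, we always have $|\operatorname{supp}(\bo)| \leq M$, so it suffices to rule out the possibility $|\operatorname{supp}(\bo)| < M$.

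First I would argue by contradiction: suppose $\bo$ is a maximal optimal solution with $|\operatorname{supp}(\bo)| < M$. Fix any basis $B \in \mathcal{B}$, so that $|B| = M > |\operatorname{supp}(\bo)|$. Applying the matroid exchange axiom (M3) to the independent sets $\operatorname{supp}(\bo)$ and $B$ yields an element $e \in B \setminus \operatorname{supp}(\bo)$ with $\operatorname{supp}(\bo) \cup \{e\} \in \mathcal{F}$. In particular $e \notin \operatorname{supp}(\bo)$, so $\bo(e)=0$ and the marginal gains $\Delta_{e,i} f(\bo)$ are well defined; moreover, assigning \emph{any} type to $e$ leaves the support equal to $\operatorname{supp}(\bo) \cup \{e\}$ and hence feasible.

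The key step, and the one that departs from the monotone proof, is to produce a type whose marginal gain at $e$ is nonnegative. Here I invoke pairwise monotonicity (valid since $k \geq 2$): for any two distinct types $i, j$ we have $\Delta_{e,i} f(\bo) + \Delta_{e,j} f(\bo) \geq 0$, so at most one type can have a strictly negative marginal gain at $e$, and therefore there exists $i \in [k]$ with $\Delta_{e,i} f(\bo) \geq 0$. Let $\bo'$ be obtained from $\bo$ by assigning this type $i$ to $e$. Then $\bo'$ is feasible, $f(\bo') = f(\bo) + \Delta_{e,i} f(\bo) \geq f(\bo)$, and $\bo \prec \bo'$. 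Since $\bo$ is optimal, the inequality is forced to be an equality, so $\bo'$ is also optimal while strictly dominating $\bo$ in the order $\preceq$, contradicting the maximality of $\bo$. Hence $|\operatorname{supp}(\bo)| = M$.

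The main obstacle is precisely this last step: unlike the monotone case (Lemma~\ref{lem:matroid:lem1}), where every type added to $e$ has nonnegative marginal gain, in the non-monotone regime individual marginals may be negative, and one must exploit the pairwise-monotonicity structure of $k$-submodular functions to guarantee an augmentation that does not decrease $f$. It is worth noting that this argument genuinely requires $k \geq 2$; for $k=1$ (plain non-monotone submodular) a maximal optimal solution need not be a basis, consistent with the fact that pairwise monotonicity is vacuous there.
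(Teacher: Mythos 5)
Your proof is correct. Note, though, that the paper itself never proves this lemma: it is stated with a citation to \citet{sun2022maximize} (just as \cref{lem:matroid:lem1,lem:matroid:lem2} are cited from \citet{sakaue2017maximizing}), and the paper only uses its conclusion --- that some optimal solution has support equal to a basis --- inside the proof of Proposition~\ref{prop:matroid:robust2}. So there is no in-paper argument to compare against; what you have supplied is the missing proof, and it is the natural one: it follows the monotone analog (\cref{lem:matroid:lem1}), with the exchange axiom (M3) producing an augmenting element $e$ with $\operatorname{supp}(\bo)\cup\{e\}\in\mathcal{F}$, and with monotonicity replaced by pairwise monotonicity to guarantee some type $i$ has $\Delta_{e,i}f(\bo)\geq 0$, so that augmenting gives a feasible solution of no smaller value, contradicting the $\preceq$-maximality of $\bo$ among optimal solutions. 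All steps check out: the support of the augmented solution is independent, at most one type can have strictly negative marginal gain by pairwise monotonicity, and optimality forces the augmented solution to be optimal as well. Your closing caveat is also apt: the argument needs $k\geq 2$ (pairwise monotonicity is vacuous for $k=1$, where the claim genuinely fails for non-monotone submodular functions), which is consistent with the standing assumption in this literature (cf. Proposition~\ref{prop:Iwata2015ImprovedAA}).
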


Now we prove Proposition~\ref{prop:matroid:robust2}.

\begin{proof}
    We construct the sequence $\bo^{(j)}$, $\bo^{(j-1/2)}$ the same way as in the proof for Proposition~\ref{prop:matroid:robust}. By the same construction, we have that 
    \begin{align}
        \bs^{(j)} \prec \bo^{(j)} \quad \text { if } &j=0,1, \ldots, M-1, \quad \text { and } \quad \bs^{(j)}=\bo^{(j)}=\bs \quad \text { if } j=M . \label{eq:nmksm:cond1}\\
        O^{(j)} \in \mathcal{B} \quad \text { for } &j=0,1, \ldots, M. \label{eq:nmksm:cond2}
    \end{align}
    and we still have \eqref{eq:mono:matroid:greedy} and \eqref{eq:mono:matroid:submod}. Since we do not have monotonicity anymore, we will exploit the pairwise submodularity property. Besides the chosen type $i^{(j)}$ for the item $e^{(j)}$ at each iteration of Algorithm~\ref{alg:matroid}, we consider another type $h^{(j)} \neq i^{(j)}$. From pariwise monotonicity we have
    \begin{align}
        \Delta_{e^{(j)},i^{(j)}}f(\bo^{(j-1/2)}) + \Delta_{e^{(j)},h^{(j)}}f(\bo^{(j-1/2)}) \geq 0. \label{eq:nmksm:pair}
    \end{align}
    We perform similar calculation as in the proof for Proposition~\ref{prop:matroid:robust}:
    \begin{align}
        \hat{f}(\bs^{(j)})-\hat{f}(\bs^{(j-1)}) &= \Delta_{e^{(j)}, i^{(j)}} \hat{f}(\bs^{(j-1)}) \nonumber\\
        &\geq \Delta_{o^{(j)}, \bo^{(j-1)}(o^{(j)})} \hat{f}(\bs^{(j-1)}) \tag{using \eqref{eq:mono:matroid:greedy}}\\
        &\geq \Delta_{o^{(j)}, \bo^{(j-1)}(o^{(j)})} f(\bs^{(j-1)}) - 2\varepsilon \nonumber\\
        &\geq \Delta_{o^{(j)}, \bo^{(j-1)}(o^{(j)})} f(\bo^{(j-1/2)}) - 2\varepsilon \tag{using \eqref{eq:mono:matroid:submod}}\\
        &\geq \Delta_{o^{(j)}, \bo^{(j-1)}(o^{(j)})} f(\bo^{(j-1/2)}) - \Delta_{e^{(j)},i^{(j)}}f(\bo^{(j-1/2)}) - \Delta_{e^{(j)},h^{(j)}}f(\bo^{(j-1/2)}) - 2\varepsilon \tag{using \eqref{eq:nmksm:pair}} \\
        &\geq \Delta_{o^{(j)}, \bo^{(j-1)}(o^{(j)})} f(\bo^{(j-1/2)}) - \Delta_{e^{(j)},i^{(j)}}f(\bo^{(j-1/2)}) - \Delta_{e^{(j)},h^{(j)}}f(\bs^{(j-1)}) - 2\varepsilon \tag{$\bs^{(j-1)} \preceq \bo^{(j-1 / 2)}$ as shown in previous section}\\
        &\geq \Delta_{o^{(j)}, \bo^{(j-1)}(o^{(j)})} f(\bo^{(j-1/2)}) - \Delta_{e^{(j)},i^{(j)}}f(\bo^{(j-1/2)}) - \Delta_{e^{(j)},i^{(j)}}f(\bs^{(j-1)}) - 2\varepsilon \tag{greedy rule}\\
        &\geq \Delta_{o^{(j)}, \bo^{(j-1)}(o^{(j)})} f(\bo^{(j-1/2)}) - \Delta_{e^{(j)},i^{(j)}}f(\bo^{(j-1/2)}) - \Delta_{e^{(j)},i^{(j)}}\hat{f}(\bs^{(j-1)}) - 4\varepsilon \nonumber\\
        &= f(\bo^{(j-1)})-f(\bo^{(j-1/2)}) - f(\bo^{(j)}) + f(\bo^{(j-1/2)}) - (\hat{f}(\bs^{(j)})-\hat{f}(\bs^{(j-1)})) - 4\varepsilon \nonumber\\
        &= f(\bo^{(j-1)}) - f(\bo^{(j)}) - (\hat{f}(\bs^{(j)})-\hat{f}(\bs^{(j-1)})) - 4\varepsilon. \nonumber
    \end{align}
    Adding both sides by $\hat{f}(\bs^{(j)})-\hat{f}(\bs^{(j-1)})$ we get
    \begin{align}
        2(\hat{f}(\bs^{(j)})-\hat{f}(\bs^{(j-1)})) \geq f(\bo^{(j-1)})-f(\bo^{(j)})- 4\varepsilon. \label{eq:nmksm:diff}
    \end{align}
    Finally, 
    \begin{align}
        f(\bo)-f(\bs) &=\sum_{j=1}^M\left(f(\bo^{(j-1)})-f(\bo^{(j)})\right)\\
        &\leq 2\sum_{j=1}^M\left(\hat{f}(\bs^{(j)})-\hat{f}(\bs^{(j-1)})\right) + 4B\varepsilon \tag{using \eqref{eq:nmksm:diff}}\\
        &=2(\hat{f}(\bs)-\hat{f}(\mathbf{0})) + 4M\varepsilon \nonumber\\
        &\leq 2(f(\bs)-f(\mathbf{0})) + 4(M+1)\varepsilon \nonumber\\
        &\leq 2f(\bs)+ 4(M+1)\varepsilon \nonumber,
    \end{align}
    Rearranging, we get $f(\bs)\geq \frac{1}{3}f(\bo) - \frac{4}{3}(M+1)\varepsilon$.
\end{proof}

\section{Missing Offline Algorithms}
We present the pseudo-codes that is missing in the main sections. Algorithm~\ref{alg:uc-monotone} is adapted from \citet{Iwata2015ImprovedAA} for the problem of maximizing an unconstrained monotone $k$-submobular function; Algorithm~\ref{alg:is} is proposed in \citet{ohsaka2015monotone} for the problem of maximizing a monotone $k$-submobular function under IS constraints; and Algorithm~\ref{alg:matroid} is proposed in \citet{sakaue2017maximizing} for the problem of maximizing a monotone $k$-submobular function under a matroid constraint. Later \citet{sun2022maximize} showed that Algorithm~\ref{alg:matroid} can also handle non-monotone objective functions. One thing to be noticed is that in Line 4, \cref{alg:matroid} requires the value of $M$, the rank of the matroid. However, in practice, we need not calculate the value of $M$ beforehand. Instead, we continue the iteration while $E(\bs)$ is nonempty, which we check in Line 5. We can confirm that this modification does not change the output as follows. As long as $|\operatorname{supp}(\bs)| < M$, exactly one element is added to $\operatorname{supp}(\bs)$ at each iteration due to (M3), and, if $|\operatorname{supp}(\bs)| = M$, the iteration stops since $\operatorname{supp}(\bs)$ is a maximal independent set.

\section{More Related Works} \label{sec:supp:rw}

\paragraph{Offline $k$-submodular function maximization:} 
$k$-submodular functions were first introduced by \citet{Huber2012TowardsMK} as a generalization of \textit{bisubmodular functions}, which correspond to $k=2$. For unconstrained $k$-submodular maximization, \citet{Iwata2015ImprovedAA} showed that even achieving an approximation ratio $\alpha\in (\frac{k+1}{2k}, 1]$ is NP-hard. If the objective is monotone, there exists a deterministic algorithm achieving $1/2$ approximation \citep{Ward2014MaximizingKF} and a randomized algorithm achieving $\frac{k}{2k-1}$ approximation guarantee \citep{Iwata2015ImprovedAA}. When the objective is non-monotone, \citet{Ward2014MaximizingKF} achieved $\max\{1/3, 1/(1 + a)\}$ approximation guarantee using $\mathcal{O}(kn)$ number of function evaluations, where $a=\max\{1,\sqrt{(k-1)/4}\}$. It has been further improved to $1/2$ \citep{Iwata2015ImprovedAA} and $\frac{k^2+1}{2k^2+1}$ \citep{oshima2021improved}.
 
For monotone size constraints, two types of constraints have been considered in the literature, namely total size (TS) constraints, where the number of items selected shares a common budget, and individual size (IS) constraints, where each of the $k$ types has a budget. 
\citet{ohsaka2015monotone} analyzed the greedy algorithm and obtained $1/2$ and $1/3$ approximation guarantees for total size (TS) constraints and individual size (IS) constraints, respectively. 
\citet{nie23size} proposed threshold greedy algorithms for monotone TS and IS with improved query complexity. 

For maximizing a monotone $k$-submodular function under a matroid constraint, \citet{sakaue2017maximizing} showed the greedy algorithm can achieve $1/2$ approximation, and \citet{matsuoka2021maximization} proposed an algorithm that achieves $\frac{1}{1+c}$ approximation, where $c$ is the curvature. When the objective function is non-monotone, a $1/3$ approximation algorithm is presented in \citet{sun2022maximize}. When the curvature $c$ is known, \citet{matsuoka2021maximization} proposed an algorithm that achieves a $\frac{1}{1+c}$-approximation for matroid constraints and $\frac{1}{1+2c}$-approximation for individual size constraints.

For other constraints, \citet{Tang2021OnMaximizing,Chen2022Monotone} proposed algorithms inspired by \citet{khuller1999budgeted,Sviridenko2004ANO} for knapsack constraints. \citet{xiao2023approximation} considered knapsack constraints under both monotone or non-monotone cases. \citet{yu2023onmax} considered intersection of knapsack and matroid constraints. \citet{pham2021streaming} proposed an algorithm for the streaming setting under knapsack constraints. While the term ``streaming" often implies an online context, it is important to note the setup is distinct from CMAB problems.  In streaming, an exact value oracle is available, the action space is restricted to (subsets of) the elements revealed so far, and the evaluation is after the stream complete. Here, the term ``online" specifically refers to the arrival of data has some ordering.

\section{Applications} \label{sec:supp:app}

In this section, we give some real world appications that motivates our problem of interests.

{\bf Application 1 (Influence maximization with $k$ topics):} Let $G=(V, E)$ be a social network with an edge probability $p_{u, v}^i$ for each edge $(u, v) \in E$, representing the probability of $u$ influencing $v$ on the $i$-th topic. Given a seed set $(S_1,\cdots,S_k)$, the diffusion process of the rumor about the $i$-th topic starts by activating vertices in $S_i$, and propagates independently from other topics. The goal is to maximize the total influence (number of users that is influenced by at least one topic). In the well-studied models of influence propagation — the independent cascades and the linear threshold models — the influence function is monotone and $k$-submodular \citep{ohsaka2015monotone}. In an online version of the problem, the underlying graph structure is not known to the agent. For each time step, the agent selects a seed set, and observes a reward representing the number of influenced users. 

{\bf Application 2 (Sensor placement):} We are given $k$ types of sensors, each of which provides different measurements, and we have $B_i$ sensors of type $i$ for each $i \in [k]$. The ground set $V$ is a set of possible locations for the sensors. The goal is to equip each location with at most one sensor in order to maximize the total information gained from the sensors. This problem can also be modeled as $k$-submodular maximization with an individual size constraint \citep{ohsaka2015monotone}. In an online version of this problem, for each time step, we select locations to place the sensors, then we collect data to calculate entropy, and in the next time step we repeat the same process by re-allocating sensors to different locations.

{\bf Application 3 (Ad allocation):} We have $k$ advertisers that are known in advance and $n$ ad impressions that arrive online one at a time. Each advertiser $i \in[k]$ has a contract of $B_i$ impressions. For each impression $j$ and each advertiser $i$, there is a non-negative weight $w_{j i}$ that captures how much value advertiser $i$ accrues from being allocated impression $j$. When impression $j$ arrives, the values $\left\{w_{j i}: i \in[k]\right\}$ are revealed, and the algorithm needs to allocate impression $j$ to at most one advertiser. Letting $X_i$ denote the set of impressions allocated to advertiser $i$, the total revenue is $\sum_{i=1}^k \max \{\sum_{j \in S_i} w_{j i}: S_i \subseteq X_i,\left|S_i\right| \leq B_i\}$. This problem can be formulated as a special case of $k$-submodular maximization with a partition matroid constraint \citep{ene2022streaming}.

\end{document}